\documentclass[twoside,11pt]{article}

%

\usepackage[abbrvbib]{jmlr2e}
\usepackage{algorithm}
\usepackage{algorithmic}
\usepackage{bm,color}
\usepackage{hyperref}
\usepackage{url}
\usepackage{amsmath}

\usepackage{amsthm}

\usepackage{xspace} 
\usepackage[T1]{fontenc} 
\graphicspath{ {./}{./plots/} }
\usepackage{booktabs}
\usepackage{subcaption}
\usepackage{nicefrac}
\usepackage{wrapfig}
\graphicspath{ {./}{./plots/} }
\usepackage{caption}
\usepackage{capt-of}
\captionsetup[subfigure]{labelformat = parens, labelsep = space, font = small}
\DeclareCaptionLabelFormat{andtable}{#1~#2  \&  \tablename~\thetable}

\usepackage[titletoc,title]{appendix}

\newcommand{\cocoa}{\textsc{CoCoA}\xspace} 
\newcommand{\cocoap}{\textsc{CoCoA$\!^{\bf \textbf{\footnotesize+}}$}\xspace}

\newcommand{\proxcocoa}{\textsc{CoCoA}\xspace}
\newcommand{\glmnet}{\textsc{glmnet}\xspace}

\newtheorem*{rep@theorem}{\rep@title}
\newcommand{\newreptheorem}[2]{%
\newenvironment{rep#1}[1]{%
 \def\rep@title{#2 \ref{##1}}%
 \begin{rep@theorem}}%
 {\end{rep@theorem}}}
\newreptheorem{lemma}{Lemma'}
\newreptheorem{definition}{Definition'}
\newreptheorem{proposition}{Proposition'}
\newreptheorem{theorem}{Theorem'}

\makeatletter
\def\renewtheorem#1{%
  \expandafter\let\csname#1\endcsname\relax
  \expandafter\let\csname c@#1\endcsname\relax
  \gdef\renewtheorem@envname{#1}
  \renewtheorem@secpar
}
\def\renewtheorem@secpar{\@ifnextchar[{\renewtheorem@numberedlike}{\renewtheorem@nonumberedlike}}
\def\renewtheorem@numberedlike[#1]#2{\newtheorem{\renewtheorem@envname}[#1]{#2}}
\def\renewtheorem@nonumberedlike#1{  
\def\renewtheorem@caption{#1}
\edef\renewtheorem@nowithin{\noexpand\newtheorem{\renewtheorem@envname}{\renewtheorem@caption}}
\renewtheorem@thirdpar
}
\def\renewtheorem@thirdpar{\@ifnextchar[{\renewtheorem@within}{\renewtheorem@nowithin}}
\def\renewtheorem@within[#1]{\renewtheorem@nowithin[#1]}
\makeatother

\renewtheorem{theorem}{Theorem}
\renewtheorem{lemma}[theorem]{Lemma}
\newtheorem{assumption}{Assumption}
\renewtheorem{remark}{Remark}
\renewtheorem{definition}{Definition}

\DeclareMathOperator*{\argmin}{arg\,min}

\newcommand{\R}{\mathbb{R}}
\newcommand{\alphav}{ {\boldsymbol \alpha}}
\newcommand{\wv}{ {\bf w}}
\newcommand{\xv}{ {\bf x}}
\newcommand{\vv}{ {\bf v}}
\newcommand{\uv}{ {\bf u}}
\newcommand{\bv}{ {\bf b}}
\newcommand{\dv}{ {\bf d}}
\newcommand{\yv}{ {\bf y}}
\newcommand{\OB}{\mathcal{O}_{\hspace{-1pt}B}}
\newcommand{\OA}{\mathcal{O}_{\hspace{-1pt}A}}
\newcommand{\Pk}{\mathcal{P}_k}

\newcommand{\Ip}{\mathcal{I}_p}
\newcommand{\Ggk}{\mathcal{G}^{\sigma'}_k\hspace{-0.08em}}
\newcommand{\vsubset}[2]{#1_{[#2]}}
\newcommand{\eqdef}{:=}
\newcommand\tagthis{\addtocounter{equation}{1}\tag{\theequation}}
\newcommand{\aggpar}{\gamma}
\newcommand{\vc}[2]{#1^{(#2)}}                   
\newcommand{\0}{ {\bf 0}}
\newcommand{\Exp}{\mathbb{E}}
\newcommand{\E}{\mathbb{E}}                                            
\newcommand{\Lone}{L_1}
\newcommand{\Ltwo}{L_2}
\newcommand{\gap}{G}

\let\oldell\ell 
\renewcommand{\d}{m}
\newcommand{\n}{n}

\usepackage{xcolor}

\newcommand{\cocoatitle}{\proxcocoa: A General Framework for Communication-Efficient Distributed Optimization}
\usepackage{lastpage}
\jmlrheading{19}{2018}{1-\pageref{LastPage}}{10/16}{7/18}{16-512}{Virginia Smith, Simone Forte, Chenxin Ma, Martin Tak{\'a}{\v c}, Michael I. Jordan, Martin Jaggi}
\ShortHeadings{\cocoatitle}{Smith, Forte, Ma, Tak{\'a}{\v c}, Jordan, and Jaggi}
\firstpageno{1}

\begin{document}

\title{\cocoatitle}

\author{\name Virginia Smith \email smithv@stanford.edu \\
       \addr Department of Computer Science\\
       Stanford University\\
       Stanford, CA 94305, USA
       \AND
       \name Simone Forte \email simone.forte@gess.ethz.ch \\
       \addr Department of Computer Science\\
       ETH Z{\"u}rich \\
       8006 Z{\"u}rich, Switzerland
       \AND
       \name Chenxin Ma \email chm514@lehigh.edu  \\
              \name Martin Tak{\'a}{\v c} \email Takac.MT@gmail.com \\
       \addr Industrial and Systems Engineering Department \\
	Lehigh University \\
	Bethlehem, PA 18015, USA
	\AND
       \name Michael I.\ Jordan \email jordan@cs.berkeley.edu \\
       \addr Division of Computer Science and Department of Statistics\\
       University of California\\
       Berkeley, CA 94720, USA
       \AND
       \name Martin Jaggi \email martin.jaggi@epfl.ch \\
       \addr School of Computer and Communication Sciences \\
       EPFL \\
       1015 Lausanne, Switzerland}

\editor{Yoram Singer}

\maketitle

\begin{abstract}
The scale of modern datasets necessitates the development of efficient distributed optimization methods for machine learning. We present a general-purpose framework for distributed computing environments, \proxcocoa, that has an efficient communication scheme and is applicable to a wide variety of problems in machine learning and signal processing.
We extend the framework to cover general non-strongly-convex regularizers, including L1-regularized problems like lasso, sparse logistic regression, and elastic net regularization, and show how earlier work can be derived as a special case. We provide convergence guarantees for the class of convex regularized loss minimization objectives, leveraging a novel approach in handling non-strongly-convex regularizers and non-smooth loss functions. The resulting framework has markedly improved performance over state-of-the-art methods, as we illustrate with an extensive set of experiments on real distributed datasets.
\end{abstract} 
\vspace{.25em}
\begin{keywords}
Convex optimization, distributed systems, large-scale machine learning, parallel and distributed algorithms
\end{keywords}

\section{Introduction}

Distributed computing architectures have come to the fore in modern machine learning, in response to the challenges arising from a wide range of large-scale learning applications.  Distributed architectures offer the promise of scalability by increasing both computational and storage capacities.  A critical challenge in realizing this promise of scalability is to develop efficient methods for communicating and coordinating information between distributed machines, taking into account the specific needs of machine-learning algorithms.  

On most distributed systems, the communication of data between machines is vastly more expensive than reading data from main memory and performing local computation.  Moreover, the optimal trade-off between communication and computation can vary widely depending on the dataset being processed, the system being used, and the objective being optimized.  It is therefore essential for distributed methods to accommodate flexible communication-computation profiles while still providing convergence guarantees.

Although numerous distributed optimization methods have been proposed, the mini-batch optimization approach has emerged as one of the most popular paradigms for tackling this communication-computation tradeoff~\citep[e.g.,][]{Dekel:2012wm, shalev2013accelerated,Shamir:2014tp,qu2015quartz, richtarik2016distributed}. Mini-batch methods are often developed by generalizing stochastic methods to process multiple data points at a time, which helps to alleviate the communication bottleneck by enabling more distributed computation per round of communication. However, while the need to reduce communication would suggest large mini-batch sizes, the theoretical convergence rates of these methods tend to degrade with increased mini-batch size, reverting to the rates of classical (batch) gradient methods. Empirical results corroborate these theoretical rates, and in practice, mini-batch methods have limited flexibility to adapt to the communication-computation tradeoffs that would maximally leverage parallel execution. Moreover, because mini-batch methods are typically derived from a specific single-machine solver, these methods and their associated analyses are often tailored to specific problem instances and can suffer both theoretically and practically when applied outside of their restricted setting.

In this work, we propose a framework, \proxcocoa%
\footnote{\cocoa-v1~\citep{Jaggi:2014vi} and \cocoap~\citep{Ma:2017dx,Ma:2015ti} are predecessors of this work. We continue to use the name \proxcocoa for the more general framework proposed here, and show how earlier work can be derived as a special case (Section~\ref{sec:convergence}). Portions of this newer work additionally appear in SF's master's thesis \citep{Forte:2015wv} and \cite{Smith:2015ua}.
}%
, that addresses these two fundamental limitations. First, we allow arbitrary local solvers to be used on each machine in parallel. This allows the framework to directly incorporate state-of-the-art, application-specific single-machine solvers in the distributed setting. Second, the framework shares information between machines through a highly flexible communication scheme. This allows the amount of communication to be easily tailored to the problem and system at hand, in particular allowing for the case of significantly reduced communication in the distributed environment. 

A key step in providing these features in the framework is to first define meaningful subproblems for each machine to solve in parallel, and to then combine updates from the subproblems in an efficient manner. Our method and convergence results rely on noting that, depending on the distribution of the data (e.g., by feature or by training point), and whether we solve the problem in the primal or the dual, certain machine learning objectives can be more easily decomposed into subproblems in the distributed setting. In particular, we categorize common machine learning objectives into several cases, and use duality to help decompose these objectives. Using primal-dual information in this manner not only allows for efficient methods (achieving, e.g., up to 50x speedups compared to state-of-the-art), but also allows for strong primal-dual convergence guarantees and practical benefits such as computation of the duality gap for use as an accuracy certificate and stopping criterion.

\subsection{Contributions}

\paragraph{General framework.}
We develop a communication-efficient primal-dual framework that is applicable to a broad class of convex optimization problems. Notably, in contrast to earlier work of  \cite{Yang:2013vl}, \cite{Jaggi:2014vi}, \cite{Ma:2017dx} and \cite{Ma:2015ti}, our generalized, cohesive framework: (1) specifically incorporates difficult cases of $\Lone$ regularization and other non-strongly-convex regularizers; (2) allows for the flexibility of distributing the data by either feature or training point; and (3) can be run in either a primal or dual formulation, which we show to have significant theoretical and practical implications.

\vspace{-.25em}
\paragraph{Flexible communication and local solvers.} Two key advantages of the proposed framework are its communication efficiency and ability to employ off-the-shelf single-machine solvers internally. On real-world systems, the cost of communication versus computation can vary widely, and it is thus advantageous to permit a flexible amount of communication depending on the setting at hand. Our framework provides exactly such control. Moreover, we allow arbitrary solvers to be used on each machine, which permits the reuse of existing code and the benefits from multi-core or other optimizations therein. We note that beyond the selection of the local solver and communication vs.\ computation profile, there are no required hyperparameters to tune; the provided default parameters ensure convergence and are used throughout our experiments to achieve state-of-the-art performance.

\vspace{-.25em}
\paragraph{Convergence guarantees.} 
We derive convergence rates for the framework, guaranteeing, e.g., a $\mathcal{O}(1/t)$ rate of convergence in terms of communication rounds for convex objectives with Lipschitz continuous losses, and a faster linear rate for strongly convex losses. Importantly, our convergence guarantees do not degrade with the number of machines, $K$, and allow for subproblems to be solved to arbitrary accuracies, which allows for highly flexible computation vs.\ communication profiles. Additionally, we leverage a novel approach in the analysis of primal-dual rates for non-strongly-convex regularizers. The proposed technique is an improvement over simple smoothing techniques used in, e.g., \cite{Nesterov:2005ic}, \cite{ShalevShwartz:2014dy} and \cite{Zhang:2015vj} that enforce strong convexity by adding a small $L_2$ term to the objective. Our results include primal-dual rates and certificates for the general class of linear regularized loss minimization, and we show how earlier work can be derived as a special case of our more general approach.

\vspace{-.25em}
\paragraph{Experimental comparison.}
The proposed framework yields order-of-magnitude speedups (as much as 50$\times$ faster) compared to state-of-the-art methods for large-scale machine learning. We demonstrate these gains with an extensive experimental comparison on real-world distributed datasets. We additionally explore properties of the framework itself, including the effect of running the framework in the primal or the dual, and the impact of subproblem accuracy on convergence. All algorithms for comparison are implemented in \textsf{\small Apache Spark} and run on Amazon EC2 clusters. Our code is available at: \href{http://gingsmith.github.io/cocoa/}{\texttt{gingsmith.github.io/cocoa/}}.

\section{Background and Setup}
\label{sec:setup}
 
In this paper we develop a general framework for minimizing problems of the following form:\vspace{-1mm}
\begin{equation}
\label{eq:generalobj}\tag{I}
   \ell(\uv) 
    + r(\uv) \,,
\end{equation}
for convex functions $\ell$ and $r$. Frequently the term $\ell$ is a loss function, taking the form $\sum_{i }\ell_i(\uv)$, and the term $r$ is 
a regularizer, e.g., $r(\uv) = \lambda \| \uv \|_p$. This formulation includes many popular methods in machine learning and signal processing, such as support vector machines, linear and logistic regression, lasso and sparse logistic regression, and many others.

\subsection{Definitions} The following standard definitions will be used throughout the paper.

\begin{definition}[$L$-Lipschitz Continuity]
A function $h: \R^m \to \R$ is \emph{$L$-Lipschitz continuous} if $\forall \uv,\vv \in \R^m$, we have
\begin{equation}
 | h(\uv) - h(\vv) | \leq L \| \uv-\vv \| \, .
\end{equation}
\end{definition}

\begin{definition}[$L$-Bounded Support]\label{def:lbounded}
A function $h: \R^m \to \R\cup\{+\infty\}$ has \emph{$L$-bounded support} 
if its effective domain is bounded by $L$, i.e.,
\begin{equation}
  h(\uv) < + \infty  \ \Rightarrow \  \|\uv\| \le L \, .
\end{equation}

\end{definition}

\begin{definition}[$(1/\mu)$-Smoothness]
A function $h:\R^m\rightarrow\R$ is \emph{$(1/\mu)$-smooth} if
it is differentiable and its derivative is $(1/\mu)$-Lipschitz continuous,
or equivalently
\begin{equation}
h(\uv) \leq h(\vv) + \langle \nabla h(\vv), \uv-\vv \rangle + \frac{1}{2\mu} \| \uv-\vv \|^2  \qquad\forall \uv,\vv\in\R^m \, .
\label{eq:smooth}
\end{equation}
\end{definition}

\begin{definition}[$\mu$-Strong Convexity]
A function $h:\R^m\rightarrow\R$ is \emph{$\mu$-strongly convex} for $\mu\ge0$ if\vspace{-2mm}
\begin{equation}
h(\uv) \geq h(\vv) + \langle s, \uv-\vv \rangle + \frac{\mu}{2} \| \uv-\vv \|^2  \qquad\forall \uv,\vv\in\R^m \, ,
\label{eq:strongconv}
\end{equation}
for any $s \in \partial h(\vv)$, where $\partial h(\vv)$ denotes the subdifferential of $h$ at $\vv$.
\end{definition}

\subsection{Primal-Dual Setting} Numerous methods have been proposed to solve~\eqref{eq:generalobj}, and these methods generally fall into two categories: \textit{primal methods}, which run directly on the primal objective, and \textit{dual methods}, which instead run on the dual formulation of the primal objective. In developing our framework, we present an abstraction that allows for either a primal or a dual variant of our framework to be run. In particular, to solve the input problem~\eqref{eq:generalobj}, we consider mapping the problem to one of the following two general problems:
\begin{align}
    \label{eq:primal}\tag{A}
    &\hspace{1cm}&\min_{\alphav \in \R^{n}} \quad& \Big[ \ \ 
    \OA(\alphav) \,:= \ \ f(A\alphav )
    \ +\ g(\alphav) &\Big]&&\hspace{1cm}
\\
    \label{eq:dual}\tag{B}
    &\hspace{1cm}&\min_{\wv \in \R^{\d}} \quad& \Big[ \ \ 
    \OB(\wv) :=  \ \ f^*(\wv )
    \ +\ g^*(-A^\top\wv) &\Big] \, . &&\hspace{1cm}
\end{align}
In general, our aim will be to compute a minimizer of the problem~\eqref{eq:primal} in a distributed fashion; the main difference will be whether we initially map the primal~\eqref{eq:generalobj} to~\eqref{eq:primal} or~\eqref{eq:dual}.

Here $\alphav \in \R^{\n}$ and $\wv \in \R^{\d}$ are parameter vectors, $A := [ \xv_1; \dots; \xv_{\n} ] \in \R^{\d \times \n}$
is a data matrix with column vectors $\xv_i\in\R^{\d}$, $i\in \{ 1, \dots, \n \}$, and the functions $f^*$ and $g^*_i$ are the \textit{convex conjugates} of $f$ and $g_i$, respectively. 

The dual relationship between problems~\eqref{eq:primal} and~\eqref{eq:dual} is known as Fenchel-Rockafellar duality~\cite[Theorem 4.4.2]{Borwein:2005ge}.  We provide a self-contained derivation of the duality in Appendix~\ref{app:primaldual}. Note that while dual problems are typically presented as a pair of (min, max) problems, we have equivalently reformulated~\eqref{eq:primal} and~\eqref{eq:dual} to both be minimization problems in accordance with their roles in our framework.

Given $\alphav \in \R^{\n}$ in the context of \eqref{eq:primal}, a corresponding vector $\wv\in \R^{\d}$ for problem~\eqref{eq:dual} is obtained by: 
\begin{equation}
\label{eq:dualPdualrelation}
\wv = \wv(\alphav) := \nabla f( A\alphav ) \, .
\end{equation}
This mapping arises from first-order optimality conditions on the $f$-part of the objective.
The duality gap, given by:
\begin{equation}
\label{eq:gap}
G(\alphav) := \OA(\alphav) -[-\OB(\wv(\alphav))] \, ,
\end{equation}
is always non-negative, and under strong duality, the gap will reach zero only for an optimal pair $(\alphav^\star,\wv^\star)$. 
The duality gap at any point provides a practically computable upper bound on the unknown primal as well as dual optimization error (suboptimality), since
\[
\OA(\alphav) \geq \OA(\alphav^\star) \geq -\OB(\wv^\star) \geq -\OB(\wv(\alphav)) \ .
\]

In developing the proposed framework, noting the duality between~\eqref{eq:primal} and \eqref{eq:dual} has many benefits, including the ability to compute the duality gap, which acts as a certificate of the approximation quality. 
It is also useful as an analysis tool, helping us to present a cohesive framework and relate this work to the prior work of \cite{Yang:2013vl}, \cite{Jaggi:2014vi} and \cite{Ma:2015ti,Ma:2017dx}. As a word of caution, note that we avoid prescribing the name ``primal'' or ``dual'' directly to either of the problems~\eqref{eq:primal} or~\eqref{eq:dual}, as we demonstrate below 
that their role as primal or dual can change depending on the application problem of interest. 

\subsection{Assumptions and Problem Cases}

\paragraph{Assumptions.} Our main assumptions on problem~\eqref{eq:primal} are that $f$ is $(1/\tau)$-smooth, and the function $g$ is separable, i.e., $g(\alphav) = \sum_i g_i(\alpha_i)$, with each~$g_i$ having $L$-bounded support. Given the duality between the problems~\eqref{eq:primal} and~\eqref{eq:dual}, this can be equivalently stated as assuming that in problem~\eqref{eq:dual},~$f^*$ is $\tau$-strongly convex, and the function $g^*(-A^\top\wv) = \sum_i g_i^*(-\xv_i^\top \wv)$ is separable with each~$g_i^*$ being $L$-Lipschitz.

\paragraph{Problem cases.} Suppose, as in equation~\eqref{eq:generalobj}, 
we would like to find a minimizer of the general objective $\ell(\uv) + r(\uv)$.  Depending on the smoothness of the function $\ell$ and the strong convexity of the function $r$, we will be able to map the input function~\eqref{eq:generalobj} to one (or both) of the objectives~\eqref{eq:primal} and~\eqref{eq:dual} based on our assumptions. In particular, we outline three separate cases: Case I, in which the function $\ell$ is smooth and the function $r$ is strongly convex; case II, in which $\ell$ is smooth, and $r$ is non-strongly convex and separable; and case~III, in which $\ell$ is non-smooth and separable, and $r$ is strongly convex. These cases are summarized in Table~\ref{tab:cases}. Note that the union of these cases captures most commonly-used applications of linear regularized loss minimization problems. 

In Section~\ref{sec:proxcocoa}, we will see that different variants of the framework may be realized depending on which of these three cases we consider when solving the input problem~\eqref{eq:generalobj}.

\begin{table}[h]
\caption{Criteria for Objectives~\eqref{eq:primal} and~\eqref{eq:dual}.
}
\centering
\begin{tabular}{c  c  c }
& Smooth $\ell$ & Non-smooth, separable $\ell$ \\
\toprule
Strongly convex $r$ & Case I: Obj~\eqref{eq:primal} or~\eqref{eq:dual} & Case III:~Obj~\eqref{eq:dual}\\
\midrule
Non-strongly convex, separable $r$ & Case II: ~Obj~\eqref{eq:primal} & -- \\
\bottomrule
\end{tabular}
\label{tab:cases}
\end{table}

\vspace{-1mm}
\subsection{Running Examples} 

To illustrate the cases in Table~\ref{tab:cases}, we consider several examples below. \textit{Those interested in details of the framework itself may skip to Section~\ref{sec:proxcocoa}}. These applications will serve as running examples throughout the paper, and we will revisit them in our experiments (Section~\ref{sec:experiments}). For further applications and details, see Section~\ref{sec:applications}.
\begin{enumerate}
\item \textit{Elastic Net Regression (Case I: map to either (\ref{eq:primal}) or (\ref{eq:dual})).} We can map elastic-net regularized least squares regression, 
\begin{equation}
\label{ex:elasticnet}
\min_{\uv \in \R^p}\ \tfrac{1}{2} \|A\uv-\bv\|_2^2 + \eta\lambda \|\uv\|_1 + (1-\eta)\frac{\lambda}{2}\| \uv \|_2^2  \, ,
\end{equation}
to either objective~\eqref{eq:primal} or~\eqref{eq:dual}. 
To map to objective~\eqref{eq:primal}, we let: 
$f(A\alphav)=\frac{1}{2} \|A\alphav - \bv\|_2^2$ and $g(\alphav) = \sum_i g_i(\alpha_i)=\sum_i \eta \lambda |\alpha_i| + (1-\eta) \tfrac{\lambda}{2}\alpha_i^2$, setting $\n$ to be the number of features and $\d$ the number of training points. To map to~\eqref{eq:dual}, we let: $g(-A^\top\wv) = \sum_ig_i^*(-\xv_i^\top\wv)=\sum_i\frac{1}{2}(\xv_i^\top\wv - b_i)^2$ and $f^*(\wv)=\eta\lambda \|\wv\|_1 + (1-\eta)\frac{\lambda}{2}\| \wv \|_2^2$, setting $\d$ to be the number of features and $\n$ the number of training points. We discuss in Section~\ref{sec:proxcocoa} how the choice of mapping to either~\eqref{eq:primal} or to~\eqref{eq:dual} 
can have implications on the distribution scheme and overall performance of the framework.

\item \textit{Lasso (Case II: map to (\ref{eq:primal})).} We can represent $\Lone$-regularized least squares regression by mapping the model: 
\begin{equation}
\label{ex:lasso}
\min_{\uv \in \R^p}\ \tfrac{1}{2} \|A\uv-\bv\|_2^2 + \lambda \| \uv \|_1
\end{equation}
to objective~\eqref{eq:primal},  letting $f(A\alphav) = \frac{1}{2}\|A\alphav-\bv\|_2^2$ and $g(\alphav) = \sum_i g_i(\alpha_i) = \sum_i \lambda | \alpha_i |$. 
 In this mapping, $\n$ represents the number of features, and $\d$ the number of training points. Note that we cannot map the lasso objective to~\eqref{eq:dual} directly, as $f^*$ must be $\tau$-strongly convex and the $\Lone$-norm is non-strongly convex.
 
\item \textit{Support Vector Machine (Case III: map to (\ref{eq:dual})).} We can represent a hinge loss support vector machine (SVM) by mapping the model: 
\begin{equation}
\label{ex:svm}
\min_{\uv \in \R^p}\  \frac{1}{m} \sum_{i=1}^m \max\big\{0,1-y_i(\xv_i^\top\uv)\big\} + \tfrac\lambda2 \| \uv \|_2^2 \, ,
\end{equation}
to objective~\eqref{eq:dual}, letting $g^*(-A^\top\wv) = \sum_i g_i^*(-\xv_i^\top\wv) = \sum_i \frac{1}{\n}\max\{0, 1- y_i \xv_i^\top\wv\}$ and $f^*(\wv) = \tfrac\lambda2 \| \wv \|_2^2$. In this mapping, $\d$ represents the number of features, and $\n$ the number of training points. Note that we cannot map the hinge loss SVM primal to objective~\eqref{eq:primal} directly, as $f$ must be $(1/\tau)$-smooth and the hinge loss is non-smooth.

\end{enumerate}

\subsection{Data Partitioning}
\label{sec:datapartitioning}
In this work, we are interested in the setting where the dataset at hand is distributed across multiple machines. 
We assume that the dataset~$A$ is distributed over $K$ machines according to a partition $\{\Pk\}_{k=1}^K$ of the \emph{columns} of $A \in \R^{\d\times \n}$.
We denote the size of the partition on machine $k$ by $n_k=|\Pk|$. For machine $k\in\{1, \dots, K\}$ 
 and weight vector $\alphav\in \R^{\n}$, we define $\vsubset{\alphav}{k}\in \R^{\n}$ as the $\n$-vector with elements $(\vsubset{\alphav}{k})_i := \alpha_i$ if $i\in \Pk$ and $(\vsubset{\alphav}{k})_i := 0$ otherwise. Analogously, we write $\vsubset{A}{k}$ for the corresponding group of columns of $A$, and zeros elsewhere (note that columns can correspond to either training examples or features, depending on the application). We discuss these distribution schemes in greater detail in Section~\ref{sec:proxcocoa}. 

\section{The \proxcocoa Method}
\label{sec:proxcocoa}
In the following sections, we describe the proposed framework, \proxcocoa, at a high level, and then discuss two approaches for using the framework in practice: \proxcocoa in the primal, where we consider~\eqref{eq:primal} to be the primal objective and run the framework on this problem directly, and \proxcocoa in the dual, where we instead consider~\eqref{eq:dual} to be the primal objective, and then run the framework on the dual~\eqref{eq:primal}. 

Note that in both approaches, the aim will be to compute a minimizer of the problem~\eqref{eq:primal} in a distributed fashion; the main difference will be whether we view~\eqref{eq:primal} as the primal objective or as the dual objective.

\subsection{The Generalized Framework}
\label{subsec:framework} 
The goal of the \proxcocoa framework is to find a global minimizer of the objective~\eqref{eq:primal}, while distributing computation based on the partitioning of the dataset $A$ across machines (Section~\ref{sec:datapartitioning}). As a first step, note that distributing the update to the function $g$ in objective~\eqref{eq:primal} is straightforward, as we have required that this term is separable according to the partitioning of our data, i.e., $g(\alphav) = \sum_{i=1}^n g_i(\alpha_i)$. However, the same does not hold for the term $f(A\alphav)$. To minimize this part of the objective in a distributed fashion, we propose minimizing a quadratic approximation of the function, which allows the minimization to separate across machines. We make this approximation precise in the following subsection.

\paragraph{Data-local quadratic subproblems.}
In the general \proxcocoa framework (Algorithm~\ref{alg:generalizedcocoa}), 
\begin{algorithm}[t]
\caption{Generalized \proxcocoa Distributed Framework}
\label{alg:generalizedcocoa}
\begin{algorithmic}[1]
\STATE {\bf Input:} Data matrix $A$ distributed column-wise according to partition $\{\Pk\}_{k=1}^K$, aggregation parameter $\aggpar\!\in\!(0,1]$, 
and parameter $\sigma'$ for the local subproblems
$\Ggk(  \vsubset{\Delta \alphav}{k}; \vv, \vsubset{\alphav}{k})$.\\
Starting point $\vc{\alphav}{0} := \0 \in \R^n$, $\vc{\vv}{0}:=\0\in \R^\d$.
\FOR {$t = 0, 1, 2, \dots $}
  \FOR {$k \in \{1,2,\dots,K\}$ {\bf in parallel over computers}}
     \STATE call local solver, returning a $\Theta$-approximate solution 
     $\vsubset{\Delta \alphav}{k}$   
        of  the local subproblem~\eqref{eq:subproblem}
     \STATE update local variables $\vsubset{\vc{\alphav}{t+1}}{k} := \vsubset{\vc{\alphav}{t}}{k} + \aggpar \, \vsubset{\Delta \alphav}{k}$
     \STATE return updates to shared state $\Delta \vv_k :=  \vsubset{A}{k} 
     						\vsubset{\Delta \alphav}{k}$
  \ENDFOR
  \STATE reduce 
$\vc{\vv}{t+1}  := \vc{\vv}{t} +
  \aggpar \textstyle \sum_{k=1}^K \Delta \vv_k $
\ENDFOR 
\end{algorithmic}
\end{algorithm}
we distribute computation by defining a data-local subproblem of the optimization problem~\eqref{eq:primal} for each machine. This simpler problem can be solved on machine $k$ and only requires accessing data which is already available locally, i.e., the columns $\vsubset{A}{k}$
. More formally, each machine~$k$ is assigned the following local subproblem, which depends only on the previous shared vector $\vv := A\alphav \in\R^{\d}$, and the local data $\vsubset{A}{k}$: 
\begin{equation} 
 \label{eq:subproblem}
\min_{\vsubset{\Delta \alphav}{k}\in\R^{\n}} \ 
\Ggk(  \vsubset{\Delta \alphav}{k}; \vv, \vsubset{\alphav}{k}) \, , \vspace{-1mm}
\end{equation} 
where
\begin{align*}
\Ggk(  \vsubset{\Delta \alphav}{k}; \vv, \vsubset{\alphav}{k})
\eqdef  \frac{1}{K} f(\vv) 
 + \wv^\top \vsubset{A}{k} \vsubset{\Delta \alphav}{k}
+\frac{\sigma'}{2\tau}  \Big\|\vsubset{A}{k} \vsubset{\Delta \alphav}{k}\Big\|^2 + \sum_{i \in \Pk} 
g_i(\alpha_i + {\vsubset{\Delta \alphav}{k}}_i),
\end{align*}
and $\wv := \nabla f( \vv )$. 
Here we let $\vsubset{\Delta \alphav}{k}$ denote the change of local variables~$\alpha_i$ for indices $i\in\Pk$, and we set $(\vsubset{\Delta \alphav}{k})_i := 0$ for all $i \notin \Pk$. It is important to note that the subproblem~\eqref{eq:subproblem} is simple in the sense that it is always a quadratic objective (apart from the $g_i$ term). The subproblem does not depend on the function~$f$ itself, but only its linearization 
at the fixed shared vector $\vv$. This property additionally simplifies the task of the local solver, especially for cases of complex functions~$f$. 

\paragraph{Framework parameters $\gamma$ and $\sigma'$.}
There are two parameters that must be set in the framework: $\gamma$, the aggregation parameter, which controls how the updates from each machine are combined, and $\sigma'$, the subproblem parameter, which is a data-dependent term measuring the difficulty of the data partitioning $\{\Pk\}_{k=1}^K$. These terms play a crucial role in the convergence of the method, as we demonstrate in Section~\ref{sec:convergence}. In practice, we provide a simple and robust way to set these parameters: For a given aggregation parameter $\gamma \in (0,1]$, the subproblem parameter $\sigma'$ will be set as $\sigma' := \gamma K$, but can also be improved in a data-dependent way as we discuss below. \textit{In general, as we show in Section~\ref{sec:convergence}, setting $\gamma := 1$ and $\sigma' := K$ will guarantee convergence while delivering our fastest convergence rates.}

\begin{definition}[Data-dependent aggregation parameter] 
\label{def:sigma}
In Algorithm~\ref{alg:generalizedcocoa}, the aggregation parameter~$\gamma$ controls the level of adding ($\gamma:=1$) versus averaging ($\gamma:=\tfrac{1}{K}$) of the partial solutions from all machines.
For our convergence results (Section~\ref{sec:convergence}) to hold, the subproblem parameter $\sigma'$ must be chosen not smaller than
\begin{equation}
\label{eq:sigmaPrimeSafeDefinition} 
\sigma'
\geq
\sigma'_{min}
 \eqdef
 \aggpar
 \max_{\alphav\in \R^{\n}}
 \frac{
 \|A\alphav\|^2}{\sum_{k=1}^K \|\vsubset{A}{k}\vsubset{\alphav}{k}\|^2} \, . \vspace{-2mm}
\end{equation}
\label{def:gamma}
\end{definition}
The simple choice of $\sigma' := \gamma K$ is valid for \eqref{eq:sigmaPrimeSafeDefinition}, i.e., 
\[
\gamma K \geq \sigma'_{min} \, .
\] In some cases, it will be possible to give a better (data-dependent) choice for $\sigma'$, closer to the actual bound given in $\sigma'_{min}$.

\paragraph{Subproblem interpretation.} Here we provide further intuition behind the data-local subproblems~\eqref{eq:subproblem}. The local objective functions $\Ggk$ are defined to closely approximate the global objective in \eqref{eq:primal} as the ``local'' variable~$\vsubset{\Delta \alphav}{k}$ varies, which we will see in the analysis (Appendix~\ref{app:convgproofs}, Lemma~\ref{lem:RelationOfDTOSubproblems}).  
In fact, if the subproblem were solved exactly, this could be interpreted as a data-dependent, block-separable proximal step, applied to the $f$ part of the objective~\eqref{eq:primal} as follows:
\begin{align}
\label{eq:interpretation}
\sum_{k=1}^K 
 \Ggk(\vsubset{\Delta \alphav}{k}; \vv, \vsubset{\alphav}{k}) = R{+}f(\vv){+}\nabla f( \vv )^\top A\Delta \alphav 
{+}\frac{\sigma'}{2\tau} \Delta \alphav^\top 
\begin{bmatrix}
    A_{[1]}^\top A_{[1]}\vspace{-1mm} &   & 0 \\
      &\hspace{-4mm}\ddots&  \\
    0 &   &\hspace{-4mm}A_{[K]}^\top A_{[K]}
\end{bmatrix}
\Delta \alphav,
\end{align}
where $R = \sum_{i \in [\n]} g_i(-\alpha_i - \Delta \alpha_i) \, .$ 

However, note that in contrast to traditional proximal methods, \proxcocoa does \emph{not} assume that this subproblem is solved to high accuracy, as we instead allow the use of local solvers of any approximation quality $\Theta$. 

\paragraph{Reusability of existing single-machine solvers.}
The local subproblems \eqref{eq:subproblem} have the appealing property of being very similar in structure to the global problem~\eqref{eq:primal}, with the main difference being that they are defined on a smaller (local) subset of the data, and are simpler because they are not dependent on the shape of $f$.
For a user of \proxcocoa, this presents a significant advantage in that existing single machine-solvers can be directly re-used in our distributed framework (Algorithm~\ref{alg:generalizedcocoa}) by employing them on the subproblems~$\Ggk$.

Therefore, problem-specific tuned solvers which have already been developed, along with associated speed improvements (such as multi-core implementations), can be easily leveraged in the distributed setting. We quantify the dependence on local solver performance with the following assumption and remark, and relate this performance to our global convergence rates in Section~\ref{sec:convergence}. 

\begin{assumption}[$\Theta$-approximate solution]
\label{asm:theta}
We assume that  there exists $\Theta \in [0,1)$ such that 
$\forall k\in [K]$, 
the local solver at any outer iteration $t$ produces
a (possibly) randomized approximate solution $\vsubset{\Delta \alphav}{k}$,
which satisfies
\begin{align}
\vspace{-1em}
\label{eq:localSolutionQuality}
 \frac{\E \big[
 \Ggk(\vsubset{\Delta \alphav}{k};\vv, \vsubset{\alphav}{k})\! - \Ggk(\vsubset{\Delta \alphav^{\star}}{k};\vv, \vsubset{\alphav}{k}) \big]}{
 ~~\Ggk(~\0~;\vv,\vsubset{\alphav}{k}) - \Ggk(\vsubset{\Delta \alphav^{\star}}{k};\vv, \vsubset{\alphav}{k})} \leq \Theta 
 \, ,
\end{align}
\vspace{-.25em}
where
\vspace{-1em}
\begin{align}
\label{eq:asjfcowjfcaw}
\vsubset{\Delta \alphav^{\star}}{k}
\in \argmin_{\Delta \alphav \in \R^{\n}} \ 
 \Ggk(\vsubset{\Delta \alphav}{k};\vv, \vsubset{\alphav}{k}), \hspace{2mm} \forall k\in[K] \, . 
\end{align}
\end{assumption} 

\begin{remark}\label{rem:localtime}
In practice, the time spent solving the local subproblems in parallel should be chosen comparable to the time of a communication round, for best overall efficiency on a given system. 
We study this trade-off in theory (Section \ref{sec:convergence}) and experiments (Section \ref{sec:experiments}).
\end{remark}
\begin{remark}\label{rem:localqual}
Note that the accuracy parameter $\Theta$ does not have to be chosen \emph{a priori}: Our convergence results (Section~\ref{sec:convergence}) are valid if $\Theta$ is an upper bound on the actual empirical values $\Theta$ in Algorithm \ref{alg:generalizedcocoa}. This allows for some of the $K$ machines to at times deliver better or worse accuracy $\Theta$ (e.g., this would allow a slow local machine to be stopped early during a specific round in order to avoid stragglers). See \citep{smith2017federated} for more details.
\end{remark}

\begin{remark}\label{rem:localtheta}
From a theoretical perspective, the multiplicative notion of accuracy is advantageous over classical additive accuracy as existing convergence results for first- and second-order optimization methods typically appear in multiplicative form, i.e., relative to the error at the initialization point (here $\vsubset{\Delta \alphav}{k}=\0$). This accuracy notion $\Theta$ is also useful beyond the distributed setting \citep[see, e.g., ][]{praneeth2018balancing,praneeth2018newton}. We discuss local solvers and associated rates to achieve  accuracy $\Theta$ for particular applications in Section~\ref{sec:applications}.
\end{remark}

With this general framework in place, we next discuss two variants of our framework, \cocoa-Primal and \cocoa-Dual. In running either the primal or dual variant of the framework, the goal will always be to solve objective~\eqref{eq:primal} in a distributed fashion. The main difference will be whether this objective is viewed as the primal or dual of the input problem~\eqref{eq:generalobj}. 
We make this mapping technique precise and discuss its implications in the following subsections (Sections~\ref{sec:primal}--\ref{sec:primalvsdual}).

\subsection{Primal Distributed Optimization}
\label{sec:primal}
In the primal distributed version of the framework (Algorithm~\ref{alg:primal}), the  framework is run by mapping the initial problem~\eqref{eq:generalobj} directly to objective~\eqref{eq:primal} and then applying the generalized \proxcocoa framework described in Algorithm~\ref{alg:generalizedcocoa}. In other words, we view problem~\eqref{eq:primal} as the primal objective, and solve this problem directly. 

From a theoretical perspective, viewing~\eqref{eq:primal} as the primal will allow us to consider non-strongly convex regularizers, since we allow the terms $g_i$ to be non-strongly convex. This setting was not covered in earlier work of \cite{Yang:2013vl,Jaggi:2014vi,Ma:2015ti}; and \cite{Ma:2017dx}, and we discuss it in detail in Section~\ref{sec:convergence}, as additional machinery must be introduced to develop primal-dual rates for this setting.

Running the primal version of the framework has important practical implications in the distributed setting, as it typically implies that the data is distributed by feature rather than by training point. In this setting, the amount of communication at every outer iteration will be $\mathcal{O}(\#$ of training points$)$. When the number of features is high (as is common when using sparsity-inducing regularizers) this can help to reduce communication and improve overall performance, as we demonstrate in Section~\ref{sec:experiments}.

\begin{minipage}{.91\textwidth}
\begin{algorithm}[H]
\caption{\cocoa-Primal (Mapping Problem \eqref{eq:generalobj} to  \eqref{eq:primal})}
\label{alg:primal}
\begin{algorithmic}[1]
\STATE{\bf Map:} Input problem~\eqref{eq:generalobj} to objective~\eqref{eq:primal} 
\STATE{\bf Distribute:} Dataset $A$ by columns (here typically features) according to partition $\{\Pk\}_{k=1}^K$
\STATE{\bf Run:} Algorithm~\ref{alg:generalizedcocoa} with aggregation parameter $\gamma$ and subproblem parameter $\sigma'$
\end{algorithmic}
\end{algorithm}
\end{minipage}
\vspace{1em}

\subsection{Dual Distributed Optimization}
\label{sec:dual}
In the dual distributed version of the framework (Algorithm~\ref{alg:dual}), we run the framework by mapping the original problem~\eqref{eq:generalobj} to objective~\eqref{eq:dual}, and then solve the problem by running Algorithm~\ref{alg:generalizedcocoa} on the dual~\eqref{eq:primal}. In other words, we view problem~\eqref{eq:dual} as the primal, and solve this problem via the dual~\eqref{eq:primal}.

This version of the framework will allow us to consider non-smooth losses, such as the hinge loss or absolute deviation loss, since the terms $g_i^*$ can be non-smooth.
From a practical perspective, this version of the framework will typically imply that the data is distributed by training point, and for a vector $\mathcal{O}(\#$ of features$)$ to be communicated at every outer iteration. This variant may therefore be preferable when the number of training points exceeds the number of features.

\begin{minipage}{.91\textwidth}
\begin{algorithm}[H]
\caption{\cocoa-Dual (Mapping Problem \eqref{eq:generalobj} to  \eqref{eq:dual})}
\label{alg:dual}
\begin{algorithmic}[1]
\STATE{\bf Map:} Input problem~\eqref{eq:generalobj} to objective~\eqref{eq:dual} 
\STATE{\bf Distribute:} Dataset $A$ by columns (here typically training points) according to partition $\{\Pk\}_{k=1}^K$
\STATE{\bf Run:} Algorithm~\ref{alg:generalizedcocoa} with aggregation parameter $\gamma$ and subproblem parameter $\sigma'$
\end{algorithmic}
\end{algorithm}
\end{minipage}

\subsection{Primal vs. Dual}
\label{sec:primalvsdual}
In Table~\ref{tab:algs}, we revisit the three cases from Section~\ref{sec:setup}, showing how the primal and dual variants of \cocoa can be applied to various input problems $\ell(\uv) + r(\uv)$, depending on properties of the functions $\ell$ and $r$. In particular, in the setting where $\ell$ is smooth and~$r$ is strongly convex, the user may choose whether to run the framework in the primal (Algorithm~\ref{alg:primal}), or in the dual (Algorithm~\ref{alg:dual}). 

Intuitively, Algorithm~\ref{alg:primal} will be preferable as $r$ loses strong convexity, and Algorithm~\ref{alg:dual} will be preferable as $\ell$ loses smoothness. However, there are also systems-related aspects to consider. In Algorithm~\ref{alg:primal}, we typically distribute the data by feature, and in Algorithm~\ref{alg:dual}, by training point (this distribution depends on how the terms $n$ and $\d$ are defined in our mapping; see Section~\ref{sec:applications}). Depending on whether the number of features or number of training points is the dominating term, we may chose to run Algorithm~\ref{alg:primal} or Algorithm~\ref{alg:dual}, respectively, in order to reduce communication costs. We validate these ideas empirically in Section~\ref{sec:experiments} by comparing the performance of each variant (primal vs. dual) on real distributed datasets.

\begin{table}[h!]
\caption{Criteria for Running Algorithms~\ref{alg:primal} vs.~\ref{alg:dual}.}
\centering
\begin{tabular}{c  c  c }
& Smooth $\ell$ & Non-smooth and separable $\ell$ \\
\toprule
Strongly convex $r$ & Case I:~Alg.~\ref{alg:primal} or~\ref{alg:dual} & Case III:~Alg.~\ref{alg:dual}\\
\midrule
Non-strongly convex and separable $r$ & Case II: ~Alg.~\ref{alg:primal} & -- \\
\bottomrule
\end{tabular}
\label{tab:algs}
\end{table}

In the following subsection, we provide greater insight into the \proxcocoa framework and its relation to prior work. An extended discussion on related work is available in Section~\ref{sec:relatedwork}.

\subsection{Interpretations of \proxcocoa}
\label{subsec:interpretations}
There are numerous methods that have been developed to solve~\eqref{eq:primal} and~\eqref{eq:dual} in parallel and distributed environments. We describe related work in detail in Section~\ref{sec:relatedwork}, and here briefly position \proxcocoa and in relation to other widely-used parallel and distributed methods.

\paragraph{\proxcocoa in the context of classical parallelization schemes.}
We first contrast \proxcocoa with common distributed mini-batch and batch methods, such as mini-batch stochastic gradient descent or coordinate descent, gradient descent, and quasi-Newton methods.

\proxcocoa is similar to these methods in that they are all \textit{iterative}, i.e., they make progress towards the optimal solution by updating the parameter vector $\alphav$ according to some function $h$: $\R^{\n}{\to} \R^{\n}$ at each iteration $t$:
$
\alphav^{(t+1)}{=}h(\alphav^{(t)}), \, t{=}0, 1, \dots ,
$
until convergence is reached. 
From a coordinate-wise perspective, two approaches to update $\alphav$ iteratively include the Jacobi ``all-at-once'' and Gauss-Seidel ``one-at-a-time'' methods~\citep{Bertsekas:1989}: 
\begin{align*}
\text{Jacobi:} & \quad \alpha_i^{(t+1)} = h_i(\alpha_1^{(t)}, \dots, \alpha_{\n}^{(t)}), \quad i = 1, \dots, \n, \\
\text{Gauss-Seidel:} & \quad \alpha_i^{(t+1)} = h_i(\alpha_1^{(t+1)}, \dots, \alpha_{i-1}^{(t+1)}, \alpha_i^{(t)}, \dots, \alpha_{\n}^{(t)}), \quad i = 1, \dots, \n. 
\end{align*}
The Jacobi method does not require information from the other coordinates to update coordinate $i$, which makes this style of method well-suited for parallelization. However, the sequential Gauss-Seidel-style method tends to converge faster in terms of iterations, as it is able to incorporate information from the updates of other coordinates more quickly. This difference is well-known and evident in single machine solvers, where stochastic methods (benefiting from fresh updates) tend to outperform their batch counterparts.

Typical mini-batch methods, e.g., mini-batch coordinate descent, perform a Jacobi-style update on a subset of the coordinates at each iteration. This makes these methods amenable to high levels of parallelization. However, they are unable to incorporate information as quickly as their serial counterparts in terms of number of data points accessed, as synchronization is required before updating the coordinates. As the size of the mini-batch grows, this can increase the runtime and even lead to divergence \citep{richtarik2016distributed}.

\proxcocoa instead aims to combine attractive properties of both of these update paradigms. In \cocoa, Jacobi-style updates are applied in parallel to \textit{blocks} of the coordinates of $\alphav$ to distribute the method, while allowing for (though not necessarily requiring) faster Gauss-Seidel-style updates on each machine. This change in parallelization scheme is one of the key reasons for improved performance over simpler mini-batch or batch style methods.  

\paragraph{Two extremes: from distributed CD to one-shot communication.} In addition to the parallel block-Jacobi updating scheme described above, \proxcocoa incorporates an additional level of flexibility by allowing for an \textit{arbitrary number} of sequential Gauss-Seidel iterations (or any local solver for that matter) to be performed locally on each machine. This flexibility is critical in the distributed setting, as one of the key indicators of parallel efficiency is the time spent on local computation vs. communication. In particular, the flexibility to solve each subproblem to arbitrary accuracy, $\Theta$, allows \proxcocoa to scale from low-communication environments, where more iterations can be performed before communicating, to high communication environments, where fewer local iterations are necessary. 

In comparison with other distributed methods, this flexibility also affords an explanation of \cocoa as a method that can freely move between two extremes. On one extreme, if the subproblems~\eqref{eq:subproblem} are solved exactly, \proxcocoa recovers block coordinate descent, where the coordinate updates are applied as part of a block-separable proximal step~\eqref{eq:interpretation}. If only one outer round of communication is performed, this is similar in spirit to one-shot communication schemes, which attempt to completely solve for and then combine locally-computed models~\citep[see, e.g.,][]{Mann:2009tr,Zhang:2013wq,Heinze:2016tu}. While these one-shot communication schemes are ideal in terms of reducing communication, they are, in contrast to \cocoa, generally not guaranteed to converge to the optimal solution.

On the other extreme, if just a single update (i.e., with respect to one coordinate $\alpha_i$) is performed at each communication round, this recovers traditional distributed coordinate descent. In comparison to  \cocoa, vanilla distributed coordinate descent can suffer from a high communication bottleneck due to the low relative amount of local computation. Even in the case of mini-batch coordinate descent, the most amount of work that can be performed locally at each round includes a single pass through the data, whereas \cocoa has the flexibility to take multiple passes. We empirically compare to mini-batch distributed coordinate descent in Section~\ref{sec:experiments} to demonstrate the effect of this issue in practice.

\paragraph{Comparison to ADMM.}
\label{sec:admm}

Finally, we provide a direct comparison between \proxcocoa and the alternating direction method of multipliers  (ADMM), a well-established framework for distributed optimization~\citep{Boyd:2010bw}. Similar to \proxcocoa, ADMM defines a subproblem for each machine to solve in parallel, rather than parallelizing a mini-batch update. ADMM also leverages duality structure, similar to that presented in Section~\ref{sec:setup}.  For consensus ADMM~\citep{Mota:2013ja},~\eqref{eq:dual} is decomposed with a re-parameterization:
\begin{align*}
\max_{\wv_1, \dots \wv_K, \wv} & \quad \sum_{k=1}^K \sum_{i \in \Pk} g^*(-\xv_i^\top\wv_k) + f^*(\wv) \quad \text{s.t.} \, \, \wv_k = \wv, \, \, k = 1, \dots, K.
\end{align*}
This problem is then solved by constructing the augmented Lagrangian, which yields the following decomposable updates:
\begin{align*}
\wv_k^{(t+1)} & = \argmin_{\wv_k} \, \sum_{i \in \Pk}  g_i^*(-\xv_i^\top\wv_k) + \rho{\uv_k^{(t)}}^\top(\wv_k - \wv^{(t)}) +  \frac{\rho}{2}\|\wv_k - \wv^{(t)}\|^2, \\
\wv^{(t+1)} & = \argmin_{\wv} \, f^*(\wv) + \frac{\rho K}{2} \|\wv - (\bar{\wv}_k^{(t+1)}+\bar{\uv}_k^{(t)})\|^2, \\
\uv_{k}^{(t+1)} & = \uv_{k}^{(t)} + \wv_{k}^{(t+1)} - \wv^{(t+1)} ,
\end{align*}
where $\rho$ is a penalty parameter that must be tuned for best performance. It can be shown that the update to $\wv_k$ can be reformulated in terms of the conjugate functions $g_i$ as:
\begin{align*}
\label{obj:subproblemadmm}
\argmin_{\vsubset{\alphav}{k}}\sum_{i \in \Pk} g_i({\vsubset{\alphav}{k}}_i) +(\wv^{(t)} - \uv_k^{(t)})^\top A_{[k]}\vsubset{\alphav}{k} + \frac{1}{2\rho}\| A_{[k]}\vsubset{\alphav}{k}\|^2  \, .
\tagthis
\end{align*}
Thus, we see that the update to $\wv_k$ closely matches the \proxcocoa subproblem~\eqref{eq:subproblem}, where $\rho := \frac{\tau}{\sigma'}$. This is intuitive as both methods use proximal steps to derive the subproblem, and a similar result can be shown when applying ADMM to the~\eqref{eq:primal} formulation, which can be seen as an instantiation of the sharing variant of ADMM~\cite[Section 7.3]{Boyd:2010bw}.

However, there remain major differences between the methods despite this connection. First, \proxcocoa has a more direct and simplified scheme for updating the global weight vector~$\wv$, as the additional proximal step is not required. Second, in \proxcocoa, there is no need to tune any parameters such as $\rho$, as the method can be run simply using $\sigma'$$=$$K$. Finally, in the \proxcocoa method and theory, the subproblem can be solved approximately to any accuracy $\Theta$, rather than requiring a full batch update as in ADMM. We will see in our experiments that these differences have a substantial impact in practice (Section~\ref{sec:experiments}). We provide a full derivation of the comparison to ADMM for reference in Appendix \ref{app:admm}.

\section{Convergence Analysis}
 \label{sec:convergence}
 
In this section, we provide convergence rates for the proposed framework and introduce a key theoretical technique in analyzing non-strongly convex terms in the primal-dual setting. 

For simplicity of presentation, we assume in the analysis that the data partitioning is balanced, i.e., $n_k = n/K$ for all $k$. Furthermore, we assume that the columns of A satisfy $\| \xv_i \| \le 1$ for all $i \in [n]$, and $\OB$ contains the average term $\frac{1}{n}\sum_{i=1}^n g^*_i(\cdot)$, as is common in ERM-type problems. We present rates for the case where $\gamma:=1$ in Algorithm~\ref{alg:generalizedcocoa}, and where the subproblems~\eqref{eq:subproblem} are defined using the corresponding safe bound $\sigma':=K$. This case will guarantee convergence while delivering our fastest rates in the distributed setting, which in particular do not degrade as the number of machines $K$ increases and $n$ remains fixed. More general rates and all proof details can be found in the appendix.

\subsection{Proof Strategy: Relating Subproblem Approximation to Global Progress}

To guarantee convergence, it is critical to show how progress made on the local subproblems~\eqref{eq:subproblem} relates to the global objective~$\OA$. 
Our first lemma provides exactly this information. In particular, we see that  if the aggregation and subproblem parameters are selected according to Definition~\ref{def:sigma}, the sum of the subproblem objectives, $\sum_{k=1}^K\Ggk$, will form a block-separable upper bound on the global objective~$\OA$.
\begin{lemma}
\label{lem:RelationOfDTOSubproblems}
For any weight vector 
$\alphav, \Delta \alphav 
\in \R^{\n}$, $\vv = \vv(\alphav) := A\alphav$, and real values $\aggpar,\sigma'$ satisfying~\eqref{eq:sigmaPrimeSafeDefinition}, it holds that
\begin{equation}
  \OA\Big(
\alphav +\aggpar 
\sum_{k=1}^K
\vsubset{\Delta \alphav}{k}\!
\Big) 
 \leq 
 (1-\aggpar) \OA(\alphav)  + \aggpar 
 \sum_{k=1}^K 
 \Ggk(\vsubset{\Delta \alphav}{k}; \vv, \vsubset{\alphav}{k}) \, .
\end{equation}
\end{lemma}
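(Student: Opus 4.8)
The plan is to bound $\OA\big(\alphav + \gamma\sum_{k=1}^K \vsubset{\Delta \alphav}{k}\big)$ directly, piece by piece, after introducing the shorthand $\Delta\alphav := \sum_{k=1}^K \vsubset{\Delta \alphav}{k}$. Since the blocks $\vsubset{\Delta \alphav}{k}$ are supported on the disjoint sets $\Pk$, this $\Delta\alphav$ is simply the reassembled full update vector, and $A\Delta\alphav = \sum_{k=1}^K \vsubset{A}{k}\vsubset{\Delta \alphav}{k}$. I would split $\OA = f(A\,\cdot\,) + g$ and treat the smooth part $f$ and the separable part $g$ separately, then recombine and compare coefficients against $(1-\gamma)\OA(\alphav) + \gamma\sum_k \Ggk$.

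For the separable term, I use that each $g_i$ is convex and write $\alpha_i + \gamma\Delta\alpha_i = (1-\gamma)\alpha_i + \gamma(\alpha_i+\Delta\alpha_i)$; Jensen's inequality with weights $1-\gamma,\gamma\in[0,1]$ then gives $g_i(\alpha_i+\gamma\Delta\alpha_i)\le (1-\gamma)g_i(\alpha_i) + \gamma\,g_i(\alpha_i+\Delta\alpha_i)$. Summing over all $i$ and regrouping the indices according to the partition $\{\Pk\}$ produces the $(1-\gamma)g(\alphav)$ contribution together with precisely the $\sum_{i\in\Pk} g_i(\alpha_i + (\vsubset{\Delta \alphav}{k})_i)$ pieces appearing inside each $\Ggk$.

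For the smooth term I apply $(1/\tau)$-smoothness of $f$ at $\vv = A\alphav$ in the direction $\gamma A\Delta\alphav$, obtaining $f(\vv+\gamma A\Delta\alphav)\le f(\vv) + \gamma\langle\nabla f(\vv),A\Delta\alphav\rangle + \frac{\gamma^2}{2\tau}\|A\Delta\alphav\|^2$. Setting $\wv=\nabla f(\vv)$ and expanding $A\Delta\alphav=\sum_k \vsubset{A}{k}\vsubset{\Delta \alphav}{k}$, the linear term distributes across machines as $\gamma\sum_k \wv^\top\vsubset{A}{k}\vsubset{\Delta \alphav}{k}$, and the constant $f(\vv)$ matches $\sum_k \frac1K f(\vv)$; both line up with the corresponding parts of $\gamma\sum_k\Ggk$.

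The one genuinely non-trivial step — and the main obstacle — is the quadratic term $\frac{\gamma^2}{2\tau}\|A\Delta\alphav\|^2$, since the joint norm $\|\sum_k \vsubset{A}{k}\vsubset{\Delta \alphav}{k}\|^2$ contains cross terms coupling all machines, whereas each $\Ggk$ only carries the block-diagonal quantity $\|\vsubset{A}{k}\vsubset{\Delta \alphav}{k}\|^2$. This is exactly where the hypothesis~\eqref{eq:sigmaPrimeSafeDefinition} is used: evaluating the maximum in the definition of $\sigma'_{min}$ at the particular vector $\Delta\alphav$ (whose $k$-block is $\vsubset{\Delta \alphav}{k}$) yields $\|A\Delta\alphav\|^2 \le \frac{\sigma'_{min}}{\gamma}\sum_k\|\vsubset{A}{k}\vsubset{\Delta \alphav}{k}\|^2 \le \frac{\sigma'}{\gamma}\sum_k\|\vsubset{A}{k}\vsubset{\Delta \alphav}{k}\|^2$, so that $\frac{\gamma^2}{2\tau}\|A\Delta\alphav\|^2 \le \frac{\gamma\sigma'}{2\tau}\sum_k\|\vsubset{A}{k}\vsubset{\Delta \alphav}{k}\|^2$, matching the quadratic pieces of $\gamma\sum_k\Ggk$ term for term (the degenerate case where all $\vsubset{A}{k}\vsubset{\Delta \alphav}{k}=\0$ forces $A\Delta\alphav=\0$ and is trivial). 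Adding the three bounds and using $(1-\gamma)f(\vv)+\gamma f(\vv)=f(\vv)$ to reconcile the constants completes the identification with the right-hand side, giving the claimed block-separable upper bound.
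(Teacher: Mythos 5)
Your proposal is correct and follows essentially the same route as the paper's own proof: split $\OA$ into the smooth part $f(A\,\cdot\,)$ and the separable part $g$, bound the former by $(1/\tau)$-smoothness plus the safe-bound condition~\eqref{eq:sigmaPrimeSafeDefinition} applied to $\Delta\alphav$ (which is exactly how the paper converts $\frac{\gamma^2}{2\tau}\|A\Delta\alphav\|^2$ into $\frac{\gamma\sigma'}{2\tau}\sum_k\|\vsubset{A}{k}\vsubset{\Delta\alphav}{k}\|^2$), bound the latter by Jensen's inequality with weights $1-\gamma,\gamma$, and reconcile the $f(\vv)$ constants via $(1-\gamma)f(\vv)+\gamma\sum_k\frac{1}{K}f(\vv)=f(\vv)$. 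Your explicit treatment of the degenerate case $\sum_k\|\vsubset{A}{k}\vsubset{\Delta\alphav}{k}\|^2=0$ is a small point the paper leaves implicit, but it does not alter the argument.
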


A proof of Lemma~\ref{lem:RelationOfDTOSubproblems} is provided in Appendix~\ref{app:convgproofs}. We use this main lemma, in combination with our measure of quality of the subproblem approximations (Assumption~\ref{asm:theta}), to deliver global convergence rates.

\subsection{Rates for General Convex $g_i$, $L$-Lipschitz $g^*_i$}
Our first main theorem provides convergence guarantees for objectives with general convex~$g_i$ (or, equivalently, $L$-Lipschitz $g^*_i$), including models with non-strongly convex regularizers such as lasso and sparse logistic regression, or models with non-smooth losses, such as the hinge loss support vector machine.

\begin{theorem}
\label{thm:convergenceNonsmooth}
Consider Algorithm \ref{alg:generalizedcocoa} with $\gamma :=1$, 
and let $\Theta$ be the quality of the local solver as in Assumption~\ref{asm:theta}.
Let~$g_i$ have $L$-bounded support, 
and let $f$ be $(1/{\tau})$-smooth. 
Then after~$T$ iterations, where\vspace{-2mm}
\begin{align}\label{eq:dualityRequirements}
& T
\geq
T_0 +
 \max\{\Big\lceil \frac1{1-\Theta}\Big\rceil,
\frac{4L^2}{\tau \epsilon_{G}(1-\Theta)}\} \, ,
\\
& T_0
\geq t_0+
\Big[
\frac{2}{ 1-\Theta }
\left(\frac {8L^2} {\tau \epsilon_{ G}}
-1
\right)
\Big]_+  \, , \notag \\
& t_0  \geq
  \max(0,\Big\lceil \tfrac1{(1-\Theta)}
\log \left(
\tfrac{
 \tau n({\OA}(\vc{\alphav}{0})-{\OA}(\alphav^{\star} ))
  }{2 L^2 K}
  \right)
 \Big\rceil)\, ,\notag
\end{align}
we have that the expected duality gap satisfies 
\[
\Exp\big[\OA(\overline \alphav) - (-\OB( \wv(\overline\alphav))) \big] \leq \epsilon_{ G} \, , 
\]
where $\overline\alphav$ is the averaged iterate: $\frac{1}{T-T_0}\sum_{t=T_0+1}^{T-1} \alphav^{(t)}$.

\end{theorem}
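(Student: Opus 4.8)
The plan is to turn the one-step guarantee of Lemma~\ref{lem:RelationOfDTOSubproblems} into a recursion on the duality gap $G$ and then to run a two-phase convergence argument of the type used for Lipschitz-loss SDCA. First I would specialize to $\gamma:=1$, $\sigma':=K$, so that Lemma~\ref{lem:RelationOfDTOSubproblems} reads $\OA(\alphav^{(t+1)})\le\sum_{k}\Ggk(\vsubset{\Delta \alphav}{k};\vv,\vsubset{\alphav}{k})$ with $\vv=A\alphav^{(t)}$. Taking expectations, invoking Assumption~\ref{asm:theta} summed over $k$, and noting $\sum_k\Ggk(\0;\vv,\vsubset{\alphav}{k})=\OA(\alphav^{(t)})$, I obtain
\[
\E[\OA(\alphav^{(t+1)})]\le \Theta\,\OA(\alphav^{(t)})+(1-\Theta)\min_{\Delta\alphav}\sum_{k}\Ggk(\vsubset{\Delta \alphav}{k};\vv,\vsubset{\alphav}{k}).
\]
It then remains to upper bound the minimum by evaluating $\sum_k\Ggk$ at a single well-chosen feasible direction.

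The crux (and the novelty that avoids $L_2$-smoothing) is the choice of that direction. Rather than comparing against $\alphav^\star$, I would take $\vsubset{\Delta \alphav}{k}=s(\vsubset{\uv}{k}-\vsubset{\alphav}{k})$ for $s\in[0,1]$, where each coordinate satisfies $u_i\in\partial g_i^*(-\xv_i^\top\wv)$ with $\wv=\nabla f(\vv)$. Because $g_i$ has $L$-bounded support, $g_i^*$ is $L$-Lipschitz, so $|u_i|\le L$; and since $\gamma=1$ each subproblem keeps $\alpha_i^{(t)}\in\mathrm{dom}\,g_i\subseteq[-L,L]$, so $|\alpha_i^{(t)}|\le L$ as well. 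Expanding $\Ggk$, using convexity of $g_i$ to split $g_i(\alpha_i+s(u_i-\alpha_i))\le(1-s)g_i(\alpha_i)+s\,g_i(u_i)$, and crucially the Fenchel--Young \emph{equality} $g_i(u_i)+g_i^*(-\xv_i^\top\wv)=-u_i\,\xv_i^\top\wv$ (which holds precisely because $u_i\in\partial g_i^*(-\xv_i^\top\wv)$), the linear and $g$ terms collapse exactly to $-s\,G(\alphav^{(t)})$, while the block-diagonal quadratic term is bounded by $\tfrac{\sigma' s^2}{2\tau}B$ with $B$ of order $L^2$ (from $|u_i-\alpha_i|\le2L$, $\|\xv_i\|\le1$, and the balanced partition). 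Combined with $G(\alphav^{(t)})\ge\OA(\alphav^{(t)})-\OA(\alphav^\star)=:\epsilon^{(t)}$, this yields the master recursion
\[
\E[\epsilon^{(t+1)}]\le\E[\epsilon^{(t)}]-(1-\Theta)s\,\E[G(\alphav^{(t)})]+(1-\Theta)\tfrac{\sigma' s^2 B}{2\tau}.
\]

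I would then exploit this recursion in three stages matching the three displayed iteration counts. With $s=1$ the linear term dominates while $\epsilon^{(t)}$ is large, giving geometric decay down to a floor of order $L^2K/(\tau n)$ after $t_0$ steps, which produces the logarithmic $t_0$. For $t\ge t_0$, optimizing $s=\min\{1,\tau G/(\sigma' B)\}$ converts the recursion into $\epsilon^{(t+1)}\le\epsilon^{(t)}-c(\epsilon^{(t)})^2$, whose $\mathcal{O}(1/t)$ solution drives $\epsilon$ below $\mathcal{O}(\epsilon_G)$ after the stated $T_0-t_0$ additional steps. Finally, fixing $s\approx\tau\epsilon_G/(\sigma' B)$ and telescoping the master recursion over $t\in\{T_0+1,\dots,T-1\}$ bounds the average gap $\tfrac{1}{T-T_0}\sum_t\E[G(\alphav^{(t)})]\le\epsilon_G$; Jensen applied to the gap functional transfers this to the averaged iterate $\overline{\alphav}$. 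Forcing each of the two residual terms to $\epsilon_G/2$ fixes the final $T$.

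The main obstacle is exactly the second step: using the dual subgradient response $\uv$ as the comparison point (instead of $\alphav^\star$) is what makes Fenchel--Young reproduce $-s\,G$ with no leftover, so that no strong convexity of $g$ is ever needed. The supporting technical points I expect to cost the most care are verifying that the iterates stay in the bounded support (legitimizing the $\mathcal{O}(L^2)$ quadratic bound $B$), and the bookkeeping of the three-stage recursion so that it lands precisely on $t_0\le T_0\le T$; the convexity/Jensen transfer of the average gap to $\overline{\alphav}$ is the one remaining point to pin down.
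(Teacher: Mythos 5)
Your proposal is correct and follows essentially the same route as the paper's own proof: the comparison point $s(\uv-\alphav)$ with $u_i\in\partial g_i^*(-\xv_i^\top\wv)$ and the Fenchel--Young equality is exactly the paper's Lemma~\ref{lem:basic}, the $\mathcal{O}(L^2)$ bound on the quadratic term via bounded support is its Lemma~\ref{lemma:BoundOnR}, and the three-stage schedule (geometric phase with $s=1$, an $\mathcal{O}(1/t)$ induction phase, then telescoping with a small fixed $s$ plus Jensen on the averaged iterate) is the paper's Theorem~\ref{thm:convergenceNonsmoothCoCoA}. The only cosmetic differences are your choice of $s$ by optimization in the middle phase (the paper uses the explicit schedule $s=1/(1+\tfrac12(1-\Theta)(t-t_0))$) and $s\approx\tau\epsilon_G/(\sigma'B)$ rather than $s=1/((T-T_0)(1-\Theta))$ in the averaging phase, neither of which changes the argument.
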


Providing primal-dual rates and globally defined primal-dual accuracy certificates for these objectives may require a theoretical technique that we introduce below, in which we show how to satisfy the notion of $L$-bounded support for $g_i$, as stated in Definition~\ref{def:lbounded}. 

\subsubsection{Bounded support modification}\label{sec:boundedSup}
Additional work is necessary if Theorem~\ref{thm:convergenceNonsmooth} is to be applied to non-strongly convex regularizers such as the $\Lone$ norm, which do not have $L$-bounded support for each~$g_i$, and thus violate the main assumptions.
Note for example that the conjugate function of $g_i=|\cdot|$, which is the indicator function of an interval, is not defined globally over $\R$, and thus (without further modification) the duality gap $G(\alphav)$:=$\OA(\alphav)$-$($-$\OB(\wv(\alphav)))$ is not defined at all points~$\alphav$.

\paragraph{Smoothing.} To address this problem, existing approaches typically use a simple smoothing technique \citep[e.g.,][]{Nesterov:2005ic,ShalevShwartz:2014dy}: by adding a small amount of $\Ltwo$ regularization, the functions $g_i$ become strongly convex. Following this change, the methods are run on the dual instead of the original primal problem.
While this modification satisfies the necessary assumptions for convergence, this smoothing technique is often undesirable in practice, as it changes the iterates, the algorithms at hand, the convergence rate, and the tightness of the resulting duality gap compared to the original objective. Further, the amount of smoothing can be difficult to tune and has a large impact on empirical performance. We perform experiments to highlight these issues in practice in Section~\ref{sec:experiments}. 

\paragraph{Bounded support modification.}
In contrast to smoothing, our approach preserves all solutions of the original objective, leaves the iterate sequence unchanged, and allows for direct reusability of existing solvers for the original $g_i$ objectives (such as $L_1$ solvers). It also removes the need for tuning a smoothing parameter. 
To achieve this, we modify the function $g_i$ by imposing an additional weak constraint 
 that is inactive in our region of interest. Formally, we replace $g_i(\alpha_i)$ by the following modified function:\vspace{-.5mm}
\begin{equation}\label{eq:boundedSup}
\bar{g_i}(\alpha_i) :=
\begin{cases}
g_i(\alpha_i) 
&: \alpha_i \in [-B,B] \\
+\infty &: \text{otherwise.}
\end{cases}
\end{equation}
For large enough $B$, this problem yields the same solution as the original objective. Note also that this only affects convergence theory, in that it allows us to present a strong primal-dual rate (Theorem~\ref{thm:convergenceNonsmooth} for $L$=$B$). The modification of~$g_i$ does not affect the algorithms for the original problems. Whenever a monotone optimizer is used, we will never leave the level set defined by the objective at the starting point.

Using the resulting modified function will allow us to apply the results of Theorem~\ref{thm:convergenceNonsmooth} for general convex functions $g_i$.
This technique can also be thought of as ``Lipschitzing'' the dual~$g^*_i$, because of the general result that $g^*_i$ is $L$-Lipschitz if and only if $g_i$ has $L$-bounded support \cite[Corollary 13.3.3]{Rockafellar:1997ww}.
We derive the conjugate function $\bar{g}_i^*$ for completeness in Appendix~\ref{app:primaldual} (Lemma~\ref{lem:l1surrogate}).  
In Section~\ref{sec:applications}, we show how to leverage this technique for a variety of application input problems. See also \cite{Dunner:2016vga} for a follow-up discussion of this technique in the non-distributed case.

\subsection{Rates for Strongly Convex $g_i$, Smooth $g_i^*$} 

For the case of objectives with strongly convex $g_i$ (or, equivalently, smooth $g_i^*$), e.g., elastic net regression or logistic regression, we obtain the following faster \emph{linear} convergence rate. 
\begin{theorem}
\label{thm:convergenceSmooth}
Consider Algorithm~\ref{alg:generalizedcocoa} with $\gamma := 1$,
and let $\Theta$ be the quality of the local solver as in Assumption~\ref{asm:theta}.
Let~$g_i$ be  $\mu$-strongly convex $\forall i\in[\n]$, 
and let~$f$ be $(1/{\tau})$-smooth.
Then after~$T$ iterations where
\begin{equation}
T \geq
\tfrac{1}
   {(1-\Theta)}
\tfrac
{\mu\tau+1}
{ \mu \tau}
    \log \tfrac 1 {\epsilon_{\OA}} \, , 
\end{equation}
it holds that
\begin{equation*}
\E\big[\OA(\vc{\alphav}{T}) - \OA(\alphav^{\star})\big] \le \epsilon_{\OA} \, .
\end{equation*}

\noindent Furthermore, after $T$ iterations with
\[ T \geq \tfrac{1}{(1-\Theta)}\tfrac{\mu\tau+1}{ \mu \tau}\log \left( \tfrac{1}{(1-\Theta)}\tfrac {\mu\tau+
1}{ \mu \tau}\tfrac 1 {\epsilon_\gap}\right) \, , \]
\noindent we have the expected duality gap
\[ \Exp\big[\OA(\vc{\alphav}{T}) -(-\OB( \wv(\vc{\alphav}{T}))\big] \leq \epsilon_\gap \, .\]
\end{theorem}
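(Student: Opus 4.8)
The plan is to establish a one-step geometric contraction of the primal suboptimality and then iterate it. Throughout I fix $\gamma := 1$ and $\sigma' := K$ as in the statement. First I would apply Lemma~\ref{lem:RelationOfDTOSubproblems} with these parameters to the update $\alphav^{(t+1)} = \alphav^{(t)} + \sum_{k=1}^K \vsubset{\Delta\alphav}{k}$, giving $\OA(\alphav^{(t+1)}) \le \sum_{k=1}^K \Ggk(\vsubset{\Delta\alphav}{k}; \vv^{(t)}, \vsubset{\alphav^{(t)}}{k})$. Observing that $\sum_k \Ggk(\0; \vv^{(t)}, \vsubset{\alphav^{(t)}}{k}) = \OA(\alphav^{(t)})$, I would invoke the $\Theta$-approximation quality (Assumption~\ref{asm:theta}), summed over the $K$ machines, to convert this into a lower bound on the expected one-step decrease of the form $\E[\OA(\alphav^{(t)}) - \OA(\alphav^{(t+1)})] \ge (1-\Theta)\big(\OA(\alphav^{(t)}) - \sum_k \Ggk(\vsubset{\Delta\alphav^{\star}}{k})\big)$.

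The heart of the argument is to lower-bound the exact subproblem gain $\OA(\alphav^{(t)}) - \sum_k \Ggk(\vsubset{\Delta\alphav^{\star}}{k})$. Since $\vsubset{\Delta\alphav^{\star}}{k}$ minimizes $\Ggk$, I may substitute any feasible direction; I would use the scaled step $\vsubset{\Delta\alphav}{k} = s\,\vsubset{(\uv - \alphav^{(t)})}{k}$ toward the point $\uv$ defined coordinate-wise by $u_i := \nabla g_i^*(-\xv_i^\top\wv^{(t)})$, which is well defined precisely because $\mu$-strong convexity of $g_i$ makes each $g_i^*$ smooth. Expanding $\Ggk$ at this point, the linear term $\wv^{(t)\top} A(\uv-\alphav^{(t)})$ together with the change in the $g_i$ terms collapses, via the Fenchel--Young equality $g_i(u_i) + g_i^*(-\xv_i^\top\wv^{(t)}) = -u_i\xv_i^\top\wv^{(t)}$, exactly into $s\,G(\alphav^{(t)})$, the duality gap. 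I would then use $\mu$-strong convexity of each $g_i$ to produce a bonus term $+\tfrac{\mu}{2}s(1-s)\norm{\uv-\alphav^{(t)}}^2$, while the quadratic penalty contributes $-\tfrac{s^2\sigma'}{2\tau}\sum_k\norm{\vsubset{A}{k}\vsubset{(\uv-\alphav^{(t)})}{k}}^2$; bounding the latter using $\norm{\xv_i}\le1$ and the balanced partition controls it by a constant multiple of $\norm{\uv-\alphav^{(t)}}^2$.

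The main obstacle is the optimization over the step size $s$: I would choose $s$ so that the negative quadratic penalty is exactly absorbed by the strong-convexity bonus, making the net coefficient of $\norm{\uv-\alphav^{(t)}}^2$ vanish and leaving the clean lower bound $s\,G(\alphav^{(t)})$. This balance forces $s = \tfrac{\mu\tau}{\mu\tau+1}$, where the ``$+1$'' is precisely the penalty coefficient produced by the normalization ($\norm{\xv_i}\le1$, balanced partition, averaged $\OB$). Combining with the $(1-\Theta)$ factor and $G(\alphav^{(t)}) \ge \OA(\alphav^{(t)})-\OA(\alphav^{\star})$ yields the per-step contraction $\E[\OA(\alphav^{(t+1)})-\OA(\alphav^{\star})] \le \big(1-\rho\big)\big(\OA(\alphav^{(t)})-\OA(\alphav^{\star})\big)$ with $\rho := (1-\Theta)\tfrac{\mu\tau}{\mu\tau+1}$. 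Unrolling this recursion and requiring $(1-\rho)^T$ to fall below $\epsilon_{\OA}$ gives, after taking logarithms and using $\log\tfrac{1}{1-\rho}\ge\rho$, the first claimed bound $T \ge \tfrac{1}{1-\Theta}\tfrac{\mu\tau+1}{\mu\tau}\log\tfrac{1}{\epsilon_{\OA}}$.

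For the duality-gap statement I would retain $G(\alphav^{(t)})$ on the right-hand side rather than immediately relaxing it to the suboptimality. The same one-step inequality then reads $\E[\OA(\alphav^{(t)})-\OA(\alphav^{(t+1)})] \ge \rho\,\E[G(\alphav^{(t)})]$, so that $\E[G(\alphav^{(t)})] \le \tfrac{1}{\rho}\,\E[\OA(\alphav^{(t)})-\OA(\alphav^{(t+1)})] \le \tfrac{1}{\rho}\,\E[\OA(\alphav^{(t)})-\OA(\alphav^{\star})]$, using $\OA(\alphav^{(t+1)})\ge\OA(\alphav^{\star})$. Inserting the linear decay already established gives $\E[G(\alphav^{(T)})] \le \tfrac{1}{\rho}(1-\rho)^{T}(\OA(\alphav^{(0)})-\OA(\alphav^{\star}))$, and forcing this below $\epsilon_{\gap}$ (absorbing the bounded initial suboptimality) introduces the extra factor $\tfrac{1}{\rho} = \tfrac{1}{1-\Theta}\tfrac{\mu\tau+1}{\mu\tau}$ inside the logarithm, reproducing the second claimed iteration bound.
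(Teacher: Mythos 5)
Your proposal is correct and follows essentially the same route as the paper's own proof: the paper likewise combines Lemma~\ref{lem:RelationOfDTOSubproblems} with Assumption~\ref{asm:theta}, compares the exact subproblem optimum against the scaled step $s(\uv-\alphav^{(t)})$ with $u_i \in \partial g_i^*(-\xv_i^\top\wv^{(t)})$, collapses terms via Fenchel--Young into $s\,G(\alphav^{(t)})$, and picks $s = \tfrac{\mu\tau}{\mu\tau+\sigma_{\max}\sigma'}$ (which equals your $\tfrac{\mu\tau}{\mu\tau+1}$ under the stated normalization $\sigma'=K$, $\sigma_{\max}\le n/K$, rescaled $\mu$) to cancel the quadratic penalty against the strong-convexity bonus, then derives the gap bound from the per-step decrease exactly as you do. The only difference is organizational: the paper first proves the bound for general $\gamma,\sigma'$ (its Lemma~\ref{lem:basic} and Theorem~\ref{thm:convergenceSmoothCase}) and then specializes, whereas you work in the specialized setting from the start.
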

We provide proofs of both Theorem~\ref{thm:convergenceNonsmooth} and Theorem~\ref{thm:convergenceSmooth} in Appendix~\ref{app:convgproofs}.

\subsection{Convergence Cases}
Revisiting Table~\ref{tab:cases} from Section~\ref{sec:setup}, we summarize our convergence guarantees for the three cases of input problems~\eqref{eq:generalobj} in the following table. In particular, we see that for cases II and III, we obtain a sublinear convergence rate, whereas for case I we can obtain a faster linear rate, as provided in Theorem~\ref{thm:convergenceSmooth}.

\begin{table}[h]
\caption{Applications of Convergence Rates.}
\centering
\begin{tabular}{c  c  c }
& Smooth $\ell$ & Non-smooth, separable $\ell$ \\
\toprule
Strongly convex $r$ & Case I: Theorem~\ref{thm:convergenceSmooth} 
& Case III:~Theorem~\ref{thm:convergenceNonsmooth}\\
\midrule
Non-strongly convex, separable $r$ & Case II: ~Theorem~\ref{thm:convergenceNonsmooth} & -- \\
\bottomrule
\end{tabular}
\label{tab:rates}
\end{table}

\subsection{Recovering Earlier Work as a Special Case}
\label{sec:oldcocoa}

As a special case, the proposed framework and rates directly apply to $L_2$-regularized loss-minimization problems, including those presented in the earlier work of \cite{Jaggi:2014vi} and \cite{Ma:2015ti}.

\begin{remark}
If we run Algorithm~\ref{alg:dual} (mapping~\eqref{eq:generalobj} to~\eqref{eq:dual}) and restrict $f^*(\cdot) := \tfrac{\lambda}{2}\| \cdot \|^2$ (so that $\tau = \lambda$), 
 Theorem~\ref{thm:convergenceNonsmooth} recovers as a special case the \cocoap rates for general $L$-Lipschitz $\oldell^*_i$ losses \citep[see][Corollary 9]{Ma:2015ti}. The earlier work of  \cocoa-v1 \citep{Jaggi:2014vi} did not provide rates for  $L$-Lipschitz $\oldell^*_i$ losses. 
\end{remark}

\begin{remark}
If we run Algorithm~\ref{alg:dual} (mapping~\eqref{eq:generalobj} to~\eqref{eq:dual}) and restrict $f^*(\cdot) := \tfrac{\lambda}{2}\| \cdot \|^2$ (so that $\tau = \lambda$), 
 Theorem~\ref{thm:convergenceSmooth} recovers as a special case the \cocoap rates for $(1/\oldell^*_i)$-smooth losses \citep[see][Corollary 11]{Ma:2015ti}. The earlier rates of \cocoa-v1 can be obtained by setting $\gamma$:=$\tfrac{1}{K}$ and $\sigma'$=$1$ in Algorithm~\ref{alg:generalizedcocoa} \citep[Theorem 2]{Jaggi:2014vi}.
\end{remark}

These cases follow since $g^*_i$ is $L$-Lipschitz if and only if 
$g_i$ has $L$-bounded support \cite[Corollary 13.3.3]{Rockafellar:1997ww}, and $g^*_i$ is $\mu$-strongly convex if and only if~$g_i$ is $(1/{\mu})$-smooth \cite[Theorem 4.2.2]{hiriart-urruty:2001df}.

\section{Applications}
\label{sec:applications}

In this section we detail example applications that can be cast within the general \proxcocoa framework. For each example, we describe the primal-dual setup and algorithmic details, discuss the convergence properties our framework for the application, and include practical concerns such as information on state-of-the-art local solvers. We discuss examples according to the three cases defined in Table~\ref{tab:cases} of Section~\ref{sec:setup} for finding a minimizer of the general objective $\ell(\uv) + r(\uv)$, and provide a summary of these common examples in Table~\ref{tab:objectives}.

\begin{table}[h!]
\caption{Common Losses and Regularizers.} 
\renewcommand{\thesubtable}{\roman{subtable}}
\begin{subtable}{.485\linewidth}
\footnotesize
\caption{Losses}
\centering
\begin{tabular}{lcl}
\toprule
Loss & Obj & $f$ / $g^*$ \\
\midrule
Least Squares& \eqref{eq:primal} & $f$=$\frac{1}{2} \|A\alphav - \bv\|_2^2$ \\ \vspace{.75em}
& \eqref{eq:dual} & $g^*$=$\frac{1}{2} \|A^\top\wv - \bv\|_2^2$ \\
Logistic Reg.& \eqref{eq:primal} & $f$=$\frac{1}{\d}\!\sum_{j}\!\log (1\!+\!\exp(b_j\xv_j^\top\alphav))$ \\ \vspace{.75em}
  & \eqref{eq:dual} & $g^*$=$\frac{1}{\n}\!\sum_{i}\!\log (1\!+\!\exp(b_i\xv^\top_i\wv))$ \\
SVM & \eqref{eq:dual} & $g^*$=$\frac{1}{\n}\sum_{i} \max(0,1\!-\!y_i\xv^\top_i\wv)$\\
Absolute Dev. & \eqref{eq:dual} & $g^* = \frac{1}{n}\sum_{i} | \xv_i^\top\wv - y_i |$ \\
\bottomrule
\end{tabular}
\end{subtable}
\quad
\begin{subtable}{.5\linewidth}
\footnotesize
\caption{Regularizers}
\centering
\begin{tabular}{lcl}
\toprule
Regularizer & Obj & $g$ / $f^*$ \\
\midrule \vspace{.07em}
Elastic Net & \eqref{eq:primal} & $g$=$\lambda(\eta\|\alphav\|_1 \!+\! \frac{1-\eta}{2}\| \alphav \|_2^2)$ \\ \vspace{.81em}
 & \eqref{eq:dual} & $f^*$=$\lambda(\eta\|\wv\|_1\!+\!\frac{1-\eta}{2}\| \wv \|_2^2)$ \\ 
$L_2$ & \eqref{eq:primal} & $g$=$\frac{\lambda}{2}\|\alphav\|_2^2$ \\ \vspace{.81em}
& \eqref{eq:dual} & $f^*$=$\frac{\lambda}{2} \| \wv \|_2^2 $\\
$L_1$ & \eqref{eq:primal} & $g$=$\lambda \|\alphav \|_1$ \\
Group Lasso & \eqref{eq:primal} & $g$=$\lambda\!\sum_p\!\|\alphav_{\Ip}\|_2$, $\Ip \subseteq [n] $ \\
\bottomrule
\end{tabular}
\end{subtable}
\label{tab:objectives}
\end{table}

\subsection{Case I: Smooth $\ell$, Strongly convex $r$}
\label{sec:case1}

For input problems~\eqref{eq:generalobj} with smooth~$\ell$ and strongly convex~$r$, Theorem~\ref{thm:convergenceSmooth} from Section~\ref{sec:convergence} gives a global linear (geometric) convergence rate. Smooth loss functions can be mapped either to the function $f$ in objective~\eqref{eq:primal}, or $g^*$ in~\eqref{eq:dual}. Similarly, strongly convex regularizers can be mapped either to function $g$ in objective~\eqref{eq:primal}, or $f^*$ in~\eqref{eq:dual}. To illustrate the role of $f$ as a smooth loss function and $g$ as a strongly convex regularizer in objective~\eqref{eq:primal}, contrasting with their traditional roles in prior work~\citep{Yang:2013vl,Jaggi:2014vi,Ma:2015ti,Ma:2017dx}, we consider the following examples. Note that mapping to objective~\eqref{eq:dual} instead will follow trivially assuming that the loss is separable across training points (see~Table~\ref{tab:objectives}). 

For the examples in this subsection, we use $\d$ to represent the number of training points and $n$ the number of features. Note that these definitions may change in the following subsections: this flexibility is useful so that we can present both the primal and dual variations of our framework (Algorithms~\ref{alg:primal} and~\ref{alg:dual}) via a single abstract method~(Algorithm~\ref{alg:generalizedcocoa}).

\paragraph{Smooth $\ell$: least squares loss.}
Let $\bv \in \R^\d$ be labels or response values, and consider the least squares objective, $f(\vv) := \frac12 \|\vv - \bv\|_2^2$, which is $1$-smooth. We obtain the familiar least-squares regression objective in our optimization problem \eqref{eq:primal}, using
\begin{equation}
f(A\alphav) := \textstyle\frac{1}{2}\| A\alphav - \bv\|_2^2 \, .
\end{equation}
Observing that the gradient of $f$ is $\nabla f(\vv)=\vv{-}\bv$, the primal-dual mapping is given by: $\wv(\alphav) := A \alphav{-}\bv$, which is well known as the \textit{residual vector} in least-squares regression.

\paragraph{Smooth $\ell$: logistic regression loss.}
For classification, we consider a logistic regression model with $\d$ training examples, $\yv_j\in\R^n$ for $j\in[\d]$, collected as the rows of the data matrix~$A$.
For each training example, we are given a binary label, which we collect in the vector $\bv \in \{-1,1\}^\d$. 
Formally, the objective is defined as $f(\vv) := \sum_{j=1}^\d \log{(1 + \exp{(-b_j v_j)})}$, which is again a separable function.
The classifier loss is given by
\begin{equation}
\label{eq:logisticlossprimal}
f(A\alphav) := \sum_{j=1}^\d \log{(1 + \exp{(-b_j \yv_j^\top \alphav)})} \, , 
\end{equation}
where $\alphav\in\R^n$ is the parameter vector. It is not hard to show that $f$ is $1$-smooth if the labels satisfy $b_j\in[-1,1]$. 
The primal-dual mapping is given by 
$
w_j(\alphav) 
:= \frac{-b_j}{1 + \exp{(b_j \yv_j^\top \alphav)}} \, .$  

\paragraph{Strongly convex $r$: elastic net regularizer.} 
\label{sec:elasticnet}
An application we can consider for a strongly convex regularizer, $g$ in~\eqref{eq:primal} or~$f^*$ in~\eqref{eq:dual}, is elastic net regularization, $\eta\lambda \|\uv\|_1 + (1-\eta)\frac{\lambda}{2}\| \uv \|_2^2$,
for fixed parameter $\eta \in (0,1]$. This can be obtained in~\eqref{eq:primal} by setting
\begin{equation}
g(\alphav) = \sum_{i=1}^n g_i(\alpha_i) := \sum_{i=1}^n \eta \lambda |\alpha_i| + (1-\eta) \tfrac{\lambda}{2}\alpha_i^2 .
\end{equation}
For the special case $\eta=1$, we obtain the $\Lone$-norm, and for $\eta=0$, we obtain the $\Ltwo$-norm. The conjugate of $g_i$ is given by: $
    g^*_i(x) := \textstyle\frac1{2(1-\eta)} \big(\big[|x|-\eta\big]_+\big)^2$,
where $[.]_+$ is the positive part operator, $[s]_+ = s$ for $s>0$, and zero otherwise.

\subsection{Case II: Smooth $\ell$, Non-Strongly Convex Separable $r$}
\label{sec:l1}

In case II, we consider mapping the input problem \eqref{eq:generalobj} to objective~\eqref{eq:primal}, where $\ell$ is assumed to be smooth, and $r$ non-strongly convex and separable. For smooth losses in~\eqref{eq:primal}, we can consider as examples those provided in Subsection~\ref{sec:case1}, e.g., the least squares loss or logistic loss. For an example of a non-strongly convex regularizer, we consider the important case of $\Lone$ regularization below. Again, we note that this application cannot be realized by objective~\eqref{eq:dual}, where it is assumed that the regularization term $f^*$ is strongly convex.

\paragraph{Non-strongly convex $r$: $\Lone$ regularizer.}  $\Lone$ regularization is obtained in objective~\eqref{eq:primal} by letting $g_i(\cdot) := \lambda |\cdot |$. However, an additional modification is necessary to obtain primal-dual convergence and certificates for this setting. In particular, we employ the modification introduced in Section~\ref{sec:convergence}, which will guarantee $L$-bounded support. Formally, we replace $g_i(\cdot) = |\cdot|$ by
\begin{equation*}
\bar{g}(\alpha) :=
\begin{cases}
|\alpha| &: \alpha \in [-B,B], \\
+\infty &: \text{otherwise.}
\end{cases}
\end{equation*}
For large enough $B$, this problem yields the same solution as the original $\Lone$-regularized objective. 
Note that this only affects convergence theory, in that it allows us to present a strong primal-dual rate (Theorem~\ref{thm:convergenceNonsmooth} for $L$=$B$).
With this modified $\Lone$ regularizer, the optimization problem~\eqref{eq:primal} with regularization parameter $\lambda$ becomes
\begin{equation}
\label{eq:surrogatel1regproblem}
\min_{\alphav \in \R^{n}} \ f(A \alphav)  + \lambda \sum_{i = 1}^{\n} \bar{g}(\alpha_i) \, .
\end{equation}
For large enough choice of the value $B$, this problems yields the same solution as the original objective: $f(A \alphav)  + \lambda \sum_{i=1}^{\n} |\alpha_i|$.
The modified $\bar{g}$ is simply a constrained version of the absolute value to the interval $[-B,B]$.
Therefore by setting $B$ to a large enough value that the values of $\alpha_i$ will never reach it, $\bar{g}^*$ will be continuous and at the same time make \eqref{eq:surrogatel1regproblem} equivalent to the original objective.

Formally, a simple way to obtain a large enough value of $B$, so that all solutions 
are unaffected, is the following: If we start the algorithm at $\alphav = \0$, for every point encountered during execution of a monotone optimizer, the objective values will never become worse than ${\OA}(\0)$. 
Formally, under the assumption that $f$ is non-negative, we will have that (for each $i$):
\[
\lambda |\alpha_i| \leq {f(\0)=\OA(\0)}  \implies |\alpha_i| \leq \frac{f(\0)}{\lambda}.
\]
We can therefore safely set the value of $B$ as $\frac{f(\0)}{\lambda }$. For the modified $\bar{g}_i$, the conjugate $\bar{g}_i^*$ is given by: \vspace{-1mm}
\[
    \bar{g}_i^*(x) := 
    \begin{cases}
            0 & : x \in [-1,1],  \\
            B(|x| - 1) & : \text{otherwise.}
        \end{cases}
\]
We provide a proof of this in Appendix~\ref{app:primaldual} (Lemma~\ref{lem:l1surrogate}).

\paragraph{Non-strongly convex $r$: group lasso.} The group lasso penalty can be mapped to objective~\eqref{eq:primal}, with:
\begin{equation}
g(\alphav) := \lambda \sum_{p=1}^P \|\alphav_{\Ip}\|_2 \hspace{1em} \text{with} \hspace{1em}  \bigcup\limits_{p=1}^P \Ip = \{ 1, \dots, n \} \, ,
\end{equation} 
where the disjoint sets $\Ip \subseteq \{ 1, \dots, n \}$ represent a  partitioning of the total set of variables.
This penalty can be viewed as an intermediate between a pure $\Lone$ or $\Ltwo$ penalty, performing variable selection only at the group level. The term $\alphav_{\Ip} \in \R^{|\Ip|}$ denotes part of the vector~$\alphav$ with indices $\Ip$. The conjugate is given by: $$ g^*(\wv) = I_{\{\wv | \max_{\Ip \in [n]} \| \alphav_{\Ip} \|_2  \le \lambda\}}(\wv).$$
For details, see, e.g., \cite{Dunner:2016vga} or \citet[Example 3.26]{Boyd:2004uz}.

\subsection{Case III: Non-Smooth Separable $\ell$, Strongly Convex $r$}

Finally, in case III, we consider mapping the input problem \eqref{eq:generalobj} to objective~\eqref{eq:dual}, where~$\ell$ is assumed to be non-smooth and separable, and $r$ strongly convex. 
We discuss two common cases of general non-smooth losses $\ell$, including the the hinge loss for classification and absolute deviation loss for regression. When paired with a strongly convex regularizer, the regularizer via $f$ gives rise to the primal-dual mapping, and Theorem~\ref{thm:convergenceNonsmooth} provides a sublinear convergence rate for objectives of this form. We note that these losses cannot be realized directly by objective~\eqref{eq:primal}, where it is assumed that the loss term $f$ is smooth.

\paragraph{Non-smooth $\ell$: hinge loss.}
For classification problems, we can consider a hinge loss support vector machine model, on $\n$ training points in $\R^{\d}$, given with the loss:
\begin{equation}
g^*
(-A^\top\wv) = \sum_{i=1}^ng_i^*(-\xv_i^\top\wv) := \frac{1}{\n}
\sum_{i=1}^{\n} \max\{0, 1- y_i \xv_i^\top\wv \}. 
\end{equation}
The conjugate function of the hinge loss $\phi(a) = \max\{0,1-b\}$ is given by $\phi^*(b) = \{ b \, $ if $ b \in [-1, 0]$, else \,$\infty\}$. When using the $\Ltwo$ norm for regularization in this problem: $f^*(\wv) := \lambda \| \wv\|_2^2$, a primal-dual mapping is given by: $\wv(\alphav) := \frac{1}{\lambda n} A \alphav$.

\paragraph{Non-smooth $\ell$: absolute deviation loss.} The absolute deviation loss, used, e.g., in quantile regression or least absolute deviation regression, can be realized in objective~\eqref{eq:dual} by setting:
\begin{equation}
g^*(-A^\top\wv) = \sum_{i=1}^n g_i^*(-\xv_i^\top\wv) := \frac{1}{\n}\sum_{i=1}^n\left| \xv_i^\top\wv -y_i \right|.
\end{equation}
The conjugate function of the absolute deviation loss $\phi(a) = | a - y_i |$ is given by $\phi^*(-b) = -by_i$, with $b \in [-1,1]$.

\subsection{Local Solvers}
As discussed in Section~\ref{sec:proxcocoa}, the subproblems solved on each machine in the \proxcocoa framework are appealing in that they are very similar in structure to the global problem~\eqref{eq:primal}, with the main difference being that they are defined on a smaller (local) subset of the data, and have a simpler dependence on the term $f$. Therefore, solvers which have already proven their value in the single machine or multicore setting can be easily leveraged within the framework. We discuss some specific examples of local solvers below, and point the reader to~\cite{Ma:2017dx} for an empirical exploration of these choices.

\paragraph{Local solvers for Algorithm~\ref{alg:primal}.} In the primal setting (Algorithm~\ref{alg:primal}), the local subproblem \eqref{eq:subproblem} becomes a simple quadratic problem on the local data, with regularization applied only to local variables $\vsubset{\alphav}{k}$. For the $\Lone$-regularized examples discussed, existing fast $\Lone$-solvers for the single-machine case, such as \glmnet variants~\citep{Friedman:2010wm} or \textsc{blitz}~\citep{Johnson:2015tq} can be directly applied to each local subproblem $\Ggk(\,\cdot\,; \vv, \vsubset{\alphav}{k})$ within Algorithm~\ref{alg:generalizedcocoa}. The sparsity induced on the subproblem solutions of each machine naturally translates into the sparsity of the global solution, since the local variables $\vsubset{\alphav}{k}$ will be concatenated.

In terms of the approximation quality parameter $\Theta$ for the local problems (Assumption~\ref{asm:theta}), we can apply existing recent convergence results from the single machine case. For example, for randomized coordinate descent (as part of \glmnet), \citet[Theorem 1]{Lu:2013tl} gives a $\mathcal{O}(1/t)$ approximation quality for any separable regularizer, including $\Lone$ and elastic net; see also~\cite{Tappenden:2015vha} and \cite{ShalevShwartz:2011vo}.

\paragraph{Local solvers for Algorithm~\ref{alg:dual}.} In the dual setting (Algorithm~\ref{alg:dual}) for the discussed examples, the losses are applied only to local variables $\vsubset{\alphav}{k}$, and the regularizer is approximated via a quadratic term. Current state of the art for the problems of the form in~\eqref{eq:dual} are variants of randomized coordinate ascent---Stochastic Dual Coordinate Ascent (SDCA) \citep{ShalevShwartz:2013wl}. This algorithm and its variants are increasingly used in practice \citep{Wright:2015bn}, and extensions such as accelerated and parallel versions can directly be applied~\citep{ShalevShwartz:2014dy,Fan:2008tf} in our framework. For non-smooth losses such as SVMs, the analysis of \cite{ShalevShwartz:2013wl} provides a $\mathcal{O}(1/t)$ rate, and for smooth losses, a faster linear rate. There have also been recent efforts
to derive a linear convergence rate for problems like the hinge-loss support vector machine that could be applied, e.g., by using 
error bound conditions
\citep{necoara2014distributed,
wang2014iteration},
weak strong convexity conditions
\citep{ma2015linear,
necoara2015linear}
or by considering Polyak-{\L}ojasiewicz conditions 
\citep{karimi2016linear}.

\vspace{.5em}
\section{Experiments}
\label{sec:experiments}

In this section we demonstrate the empirical performance of \proxcocoa in the distributed setting. We first compare \proxcocoa to competing methods for two common machine learning applications: lasso regression (Section~\ref{sec:primalexp}) and support vector machine (SVM) classification (Section~\ref{sec:dualexp}). We then explore the performance of \proxcocoa in the primal versus the dual directly by solving an elastic net regression model with both variants (Section~\ref{sec:primaldual}). Finally, we illustrate general properties of the \proxcocoa method empirically in Section~\ref{sec:h}.

\paragraph{Experimental setup.} We compare \proxcocoa to numerous state-of-the-art general-purpose methods for large-scale optimization, including: 
\begin{itemize}
\setlength\itemsep{0mm}
\setlength{\listparindent}{0.1in}
\setlength{\itemindent}{-1em}
\item \textsc{Mb-SGD}: Mini-batch stochastic gradient. For our experiments with lasso, we compare against \textsc{Mb-SGD} with an $\Lone$-prox.
\item \textsc{GD}: Full gradient descent. For lasso we use the proximal version, \textsc{Prox-GD}.
\item \textsc{L-BFGS}: Limited-memory quasi-Newton method. For lasso, we use OWL-QN (orthant-wise limited quasi-Newton).
\item \textsc{ADMM}: Alternating direction method of multipliers. We use conjugate gradient internally for the lasso experiments, and SDCA for SVM experiments.
\item \textsc{Mb-CD}: Mini-batch parallel coordinate descent. For SVM experiments, we implement \textsc{Mb-SDCA} (mini-batch stochastic dual coordinate ascent).
\end{itemize} 

The first three methods are optimized and implemented in Apache Spark's MLlib (v1.5.0) 
\citep{Meng:2015tu}. 
We test the performance of each method in large-scale experiments fitting lasso, elastic net regression, and SVM models to the datasets shown in Table~\ref{tab:datasets}. In comparing to other methods, we plot the distance to the optimal primal solution. This optimal value is calculated by running all methods for a large number of iterations (until progress has stalled),
and then selecting the smallest primal value amongst the results.
All code is written in \textsf{\small Apache Spark} and experiments are run on public-cloud Amazon EC2 m3.xlarge machines with one core per machine. Our code is publicly available at \href{http://gingsmith.github.io/cocoa/}{\texttt{gingsmith.github.io/cocoa/}}.

\begin{table}[h!]
\captionof{table}{Datasets for Empirical Study.}
\label{tab:datasets}
\centering
\begin{tabular}{l r  
      r  r }
      \toprule
    {\small\textbf{Dataset}} & {\small\textbf{Training Size}} &
    {\small\textbf{Feature Size}} & {\small\textbf{Sparsity}}   \\
    \midrule
	url & 2 M & 3 M & 3.5e-5 \\ 
	epsilon & 400 K & 2 K & 1.0 \\
	kddb & 19 M & 29 M & 9.8e-7 \\
	webspam & 350 K & 16 M & 2.0e-4 \\
	\bottomrule
      \end{tabular}
\end{table}

We carefully tune each competing method in our experiments for best performance. \textsc{ADMM} requires the most tuning, both in selecting the penalty parameter $\rho$ and in solving the subproblems. Solving the subproblems to completion for ADMM is prohibitively slow, and we thus use an iterative method internally and improve performance by allowing early stopping. We also use a varying penalty parameter~$\rho$ --- practices described in \citet[Sections 4.3, 8.2.3, 3.4.1]{Boyd:2010bw}. For \textsc{Mb-SGD}, we tune the step size and mini-batch size parameters. For \textsc{Mb-CD} and \textsc{Mb-SDCA}, we scale the updates at each round by $\frac{\beta}{b}$ 
for mini-batch size $b$ and $\beta \in [1,b]$, and tune both parameters $b$ and $\beta$. Further implementation details for all methods are given in Section~\ref{sec:expdetails}. For simplicity of presentation and comparison, in all of the following experiments, we restrict \proxcocoa to only use simple coordinate descent as the local solver. We note that even stronger empirical results for \proxcocoa could be obtained by plugging in state-of-the-art local solvers for each application at hand.

\subsection{\proxcocoa in the Primal: An Application to Lasso Regression}
\label{sec:primalexp}
We first demonstrate the performance of \proxcocoa in the primal (Algorithm~\ref{alg:primal}) by applying \proxcocoa to a lasso regression model~\eqref{ex:lasso} fit to the datasets in Table~\ref{tab:datasets}. We use stochastic coordinate descent as a local solver for \proxcocoa, and select the number of local iterations $H$ (a proxy for subproblem approximation quality, $\Theta$) from several options with best performance. 

\newcommand{\smalltrimfig}[1]{\begin{subfigure}[b]{.45\linewidth}\includegraphics[trim = 30 180 60 180, clip, width=\linewidth]{#1}\end{subfigure}}
\begin{figure*}[h!]
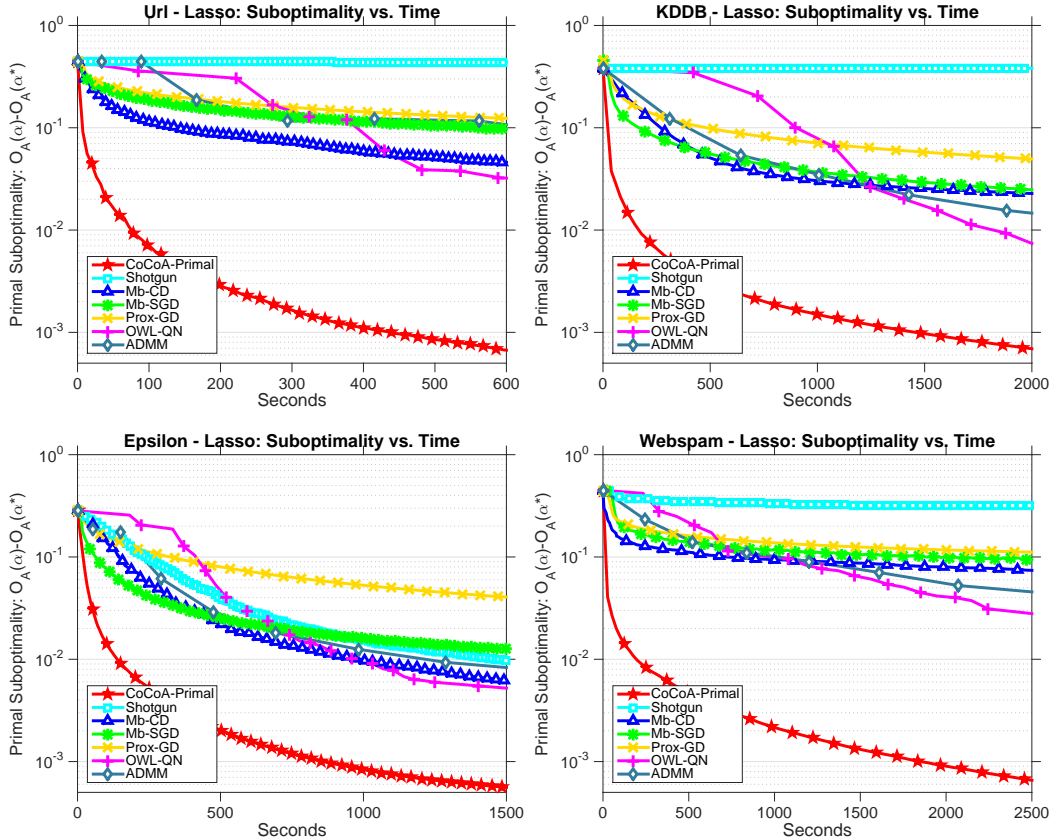

\centering
\smalltrimfig{url-time.pdf}
\smalltrimfig{kddb-time.pdf}
\smalltrimfig{epsilon-time.pdf}
\smalltrimfig{webspam-time.pdf}
\caption{Suboptimality in terms of ${\OA}(\alphav)$ for fitting a lasso regression model to four datasets: url ($K$=4, $\lambda$=\mbox{1\sc{e}-4}), kddb ($K$=4, $\lambda$=\mbox{1\sc{e}-6}), epsilon ($K$=8, $\lambda$=\mbox{1\sc{e}-5}), and webspam ($K$=16, $\lambda$=\mbox{1\sc{e}-5}) datasets. \proxcocoa applied to the primal formulation converges more quickly than all other compared methods in terms of the time in seconds.
}
\label{fig:comparison}
\end{figure*}

We compare \proxcocoa to the general methods listed above, including \textsc{Mb-SGD} with an $\Lone$-prox, \textsc{Prox-GD}, \textsc{OWL-QN}, \textsc{ADMM}, and \textsc{Mb-CD}. We provide a comparison with \textsc{Shotgun}~\citep{Bradley:2011wq}, a popular method for solving $\Lone$-regularized problems in the multicore environment, as an extreme case to highlight the detrimental effects of frequent communication in the distributed environment. For \textsc{Mb-CD}, \textsc{Shotgun}, and \proxcocoa in the primal, datasets are distributed by feature, whereas for \textsc{Mb-SGD}, \textsc{Prox-GD}, \textsc{OWL-QN} and \textsc{ADMM} they are distributed by training point.

In analyzing the performance of each algorithm (Figure \ref{fig:comparison}), we measure the improvement to the primal objective given in \eqref{eq:primal} $({\OA}(\alphav))$ in terms of  
wall-clock time in seconds. We see that both \textsc{Mb-SGD} and \textsc{Mb-CD} are slow to converge, and come with the additional burden of having to tune extra parameters (though \textsc{Mb-CD} makes clear improvements over \textsc{Mb-SGD}). 
As expected, naively distributing \textsc{Shotgun} (single coordinate updates per machine) does not perform well, as it is tailored to shared-memory systems and requires communicating too frequently. 
\textsc{OWL-QN} performs the best of all compared methods, but is still much slower to converge than \proxcocoa, and converges, e.g., 50$\times$ more slowly for the webspam dataset. The optimal performance of \proxcocoa is particularly evident in datasets with large numbers of features (e.g., url, kddb, webspam), which are exactly the datasets where $\Lone$ regularization would most typically be applied. 

Results are shown for regularization parameters $\lambda$ such that the resulting weight vector~$\alphav$ is sparse. However, our results are robust to varying values of~$\lambda$ as well as to various problem settings, as we illustrate in Figure~\ref{fig:lambda}.

\newcommand{\tinytrimfig}[1]{\begin{subfigure}[b]{.45\linewidth}\includegraphics[trim = 25 180 55 180, clip, width=\linewidth]{#1}\end{subfigure}}
\begin{figure}[h!]
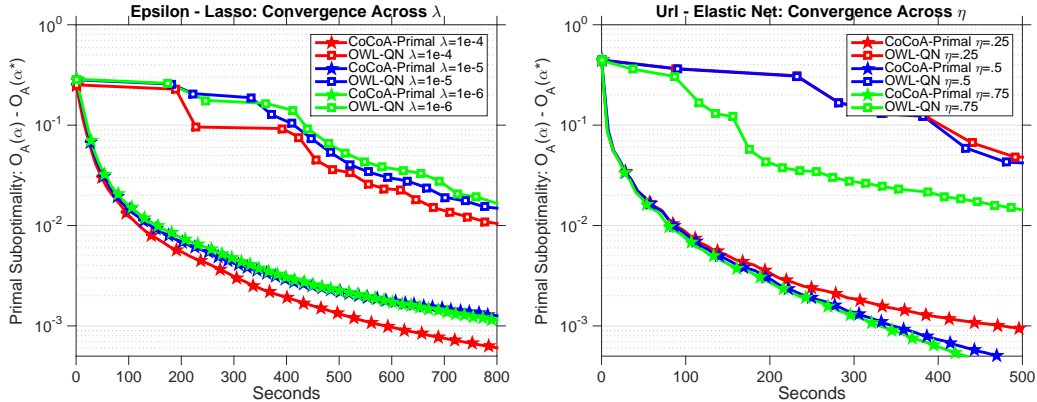

\centering
\captionsetup{type=figure}
\tinytrimfig{lambda.pdf}
\tinytrimfig{elasticnet.pdf}
\caption{{Suboptimality in terms of ${\OA}(\alphav)$ for solving lasso for the epsilon dataset (left, $K$=8) and elastic net for the url dataset, (right, $K$=4, $\lambda$=\mbox{1\sc{e}-4}). Speedups are robust over different regularizers $\lambda$ (left), and across problem settings, including varying $\eta$ parameters of elastic net regularization (right).}}
\label{fig:lambda}
\end{figure}

\newcommand{\newtinytrimfig}[1]{\begin{subfigure}[b]{.45\linewidth}\includegraphics[trim = 35 180 60 180, clip, width=\linewidth]{#1}\end{subfigure}}
\begin{table}[ht]
\centering
\captionsetup{type=figure}
\newtinytrimfig{smoothing.pdf}
\qquad
      \begin{tabular}[b]{ l  c }
            \multicolumn{2}{c}{Table 6: Sparsity of Final Iterates.} \vspace{.5em} \\
      \toprule
      {\small\textbf{Method}} & {\small\textbf{Sparsity}} \\
      \midrule
    {\small{\proxcocoa-Primal}} & {0.6030} \\ 
    {\small \proxcocoa-Dual: \textbf{$\delta=0.0001$}} & {0.6035} \\ 
    {\small \proxcocoa-Dual: {$\delta=0.001$}} & {0.6240} \\ 
    {\small \proxcocoa-Dual: \textbf{$\delta=0.01$}} & {0.6465}  \\ 
	\bottomrule \vspace{1.5em}
	\label{tab:sparsity}
      \end{tabular}
   \captionlistentry[table]{A table beside a figure}
    \captionsetup{labelformat=andtable}
\caption*{Figure 3 \& Table 6: For pure $L_1$ regularization, smoothing is not an effective option for \proxcocoa in the dual. It either modifies the solution (Figure 3) or slows convergence (Table 6). This motivates running \proxcocoa instead on the primal for these problems.} 
\label{fig:cocoa}
\end{table}

\paragraph{A case against smoothing.}

We additionally motivate the use of \proxcocoa in the primal by showing how it improves upon \proxcocoa in the dual \citep{Yang:2013vl,Jaggi:2014vi,Ma:2015ti,Ma:2017dx} for non-strongly convex regularizers. 
First, \proxcocoa in the dual cannot be  included in the set of experiments in Figure~\ref{fig:comparison} because it cannot be directly applied to the lasso objective  (recall that Algorithm~\ref{alg:dual} only allows for strongly convex regularizers). 

To get around this requirement, previous work has suggested implementing the  smoothing technique used in, e.g., \cite{ShalevShwartz:2014dy,Zhang:2015vj} --- adding a small amount of strong convexity $\delta \|\alphav\|_2^2$ to the objective for lasso regression. In Figure~3 we demonstrate the issues with this approach, comparing \proxcocoa in the primal on a pure $\Lone$-regularized regression problem to \proxcocoa in the dual for decreasing levels of $\delta$. The smaller we set $\delta$, the less smooth the problem becomes. As $\delta$ decreases, the final sparsity of running \proxcocoa in the dual starts to match that of running pure $L_1$ (Table~6), but the performance also degrades (Figure~3). We note that by using \proxcocoa in the primal with the modification presented in Section~\ref{sec:convergence}, we can deliver strong rates without having to make any compromises in terms of the training speed or accuracy.

\subsection{\proxcocoa in the Dual: An Application to SVM Classification}
\label{sec:dualexp}

Next we present results on \proxcocoa in the dual against competing methods, for a hinge loss support vector machine model~\eqref{ex:svm} on the datasets in Table~\ref{tab:datasets}. We use stochastic dual coordinate ascent (SDCA) as a local solver for \proxcocoa in this setting, again selecting the number of local iterations~$H$ from several options with best performance. We compare \proxcocoa to the general methods listed above, including \textsc{Mb-SGD}, \textsc{GD}, \textsc{L-BFGS}, \textsc{ADMM}, and \textsc{Mb-SDCA}. All datasets are distributed by training point for these methods.

\begin{figure*}[h!]
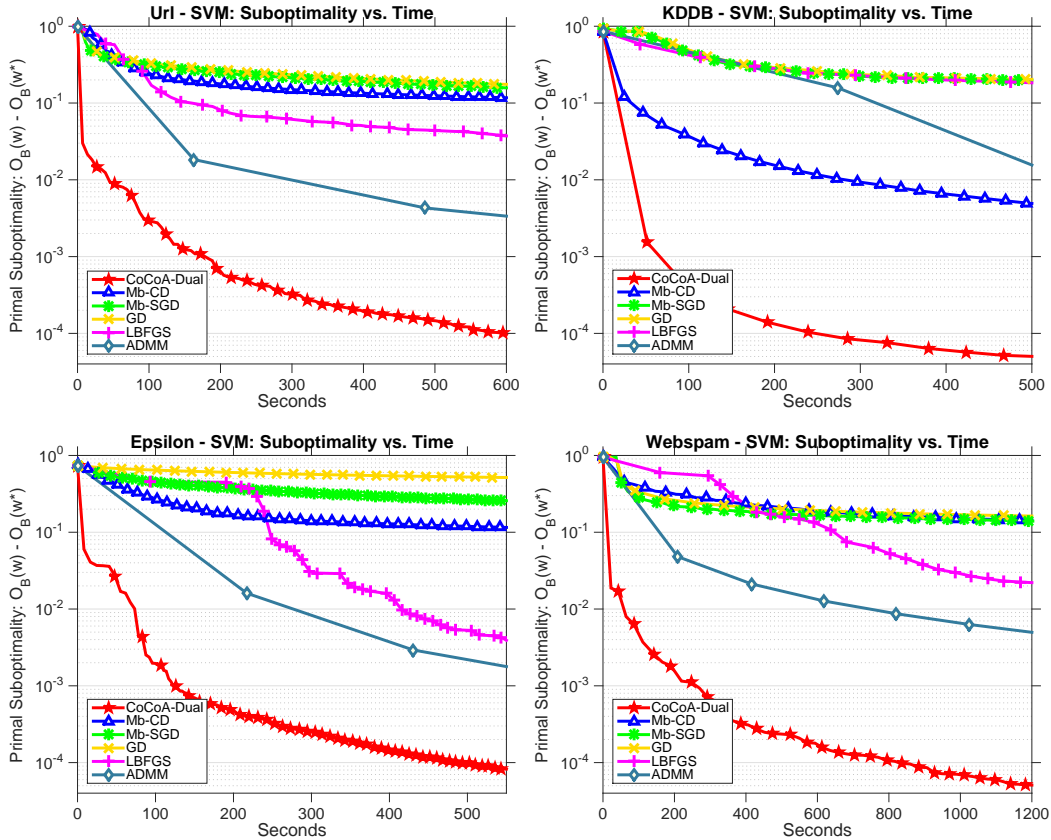

\centering
\smalltrimfig{svm-url-time.pdf}
\smalltrimfig{svm-kddb-time.pdf}
\smalltrimfig{svm-epsilon-time.pdf}
\smalltrimfig{svm-webspam-time.pdf}
\caption{Suboptimality in terms of ${\OB}(\wv)$ for solving a hinge-loss support vector machine model: url ($K$=4, $\lambda$=\mbox{1\sc{e}-4}), kddb ($K$=4, $\lambda$=\mbox{1\sc{e}-6}), epsilon ($K$=8, $\lambda$=\mbox{1\sc{e}-5}), and webspam ($K$=16, $\lambda$=\mbox{1\sc{e}-5}) datasets. \proxcocoa applied to the dual formulation converges more quickly than all other compared methods in terms of the time in seconds.}
\label{fig:svm}
\end{figure*}

In comparing methods in this setting (Figure \ref{fig:svm}), we measure the improvement to the primal objective ${\OB}(\wv)$ in terms of wall-clock time in seconds. We see again that \textsc{Mb-SGD} and \textsc{Mb-CD} are slow to converge, and come with the additional burden of having to tune extra parameters. ADMM performs the best of the methods other than \proxcocoa, followed by L-BFGS. However, both are still much slower to converge than \proxcocoa in the dual. ADMM in particular is affected by the fact that many internal iterations of SDCA are necessary in order to guarantee convergence. In contrast, \proxcocoa can incorporate arbitrary amounts of work locally and still converge. We note that although \proxcocoa, ADMM, and \textsc{Mb-SDCA} run in the dual, Figure~\ref{fig:svm} tracks progress towards the primal objective, $\OB(\wv)$.

\subsection{Primal vs. Dual: An Application to Elastic Net Regression}
\label{sec:primaldual}

To compare primal vs. dual optimization for \proxcocoa, we explore both variants by fitting an elastic net regression model~\eqref{ex:elasticnet} to two datasets. We use coordinate descent (with closed-form updates) as the local solver in both variants.
From the results in Figure~\ref{fig:primaldual}, we see that \proxcocoa in the dual 
 tends to perform better on datasets with a large number of training points (relative to the number of features), and that the performance deteriorates as the strong convexity in the problem disappears. 
In contrast, \proxcocoa in the primal performs well on datasets with a large number of features relative to training points, and is robust to changes in strong convexity. These changes in performance are to be expected, as we have discussed that \proxcocoa in the primal is more suited for non-strongly convex regularizers (Section~\ref{sec:primalexp}), and that the feature size dominates communication for \proxcocoa in the dual, as compared to the training point size for \proxcocoa in the primal (Section~\ref{sec:primalvsdual}).

\begin{figure}[h!]
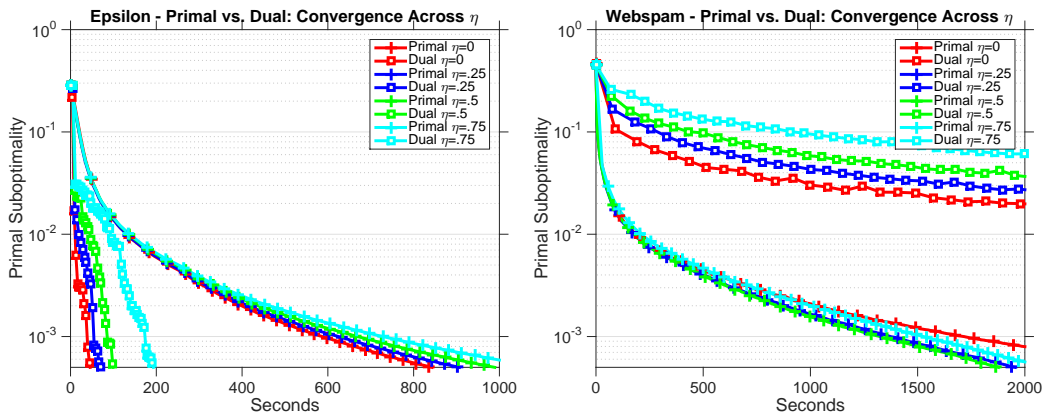

\centering
\captionsetup{type=figure}
\smalltrimfig{epsilon-eta.pdf} 
\smalltrimfig{webspam-cocoa.pdf} 
\caption{The convergence of \proxcocoa in the primal versus dual for various values of $\eta$ in an elastic net regression model.
\proxcocoa in dual performs better on the Epsilon dataset, where the training point size is the dominating term, and \proxcocoa in the primal performs better on the Webspam dataset, where the feature size is the dominating term. For both datasets, \proxcocoa in the dual is susceptible to changes in strong convexity---converging more quickly as the problem becomes more strongly convex ($\eta \to 0$), whereas \proxcocoa in the primal remains robust to changes in strong convexity.}
\label{fig:primaldual}
\end{figure}

\vspace{-1em}
\subsection{General Properties: Effect of Communication}
\label{sec:h}

Finally, we note that in contrast to the compared methods from Sections~\ref{sec:primalexp} and~\ref{sec:dualexp}, \proxcocoa comes with the benefit of having only a single parameter to tune: the subproblem approximation quality,~$\Theta$, which we control in our experiments via the number of local subproblem iterations, $H$, for the example of local coordinate descent. We further explore the effect of this parameter in Figure~\ref{fig:heffect}, and provide a general guideline for choosing it in practice (see Remark~\ref{rem:localtime}). 
In particular, we see that while increasing $H$ always results in better performance in terms of the number of communication rounds, smaller or larger values of~$H$ may result in better performance in terms of wall-clock time, depending on the cost of communication and computation. The flexibility to fine-tune $H$ is one of the reasons for \proxcocoa's significant performance gains.

\begin{figure}[h!]
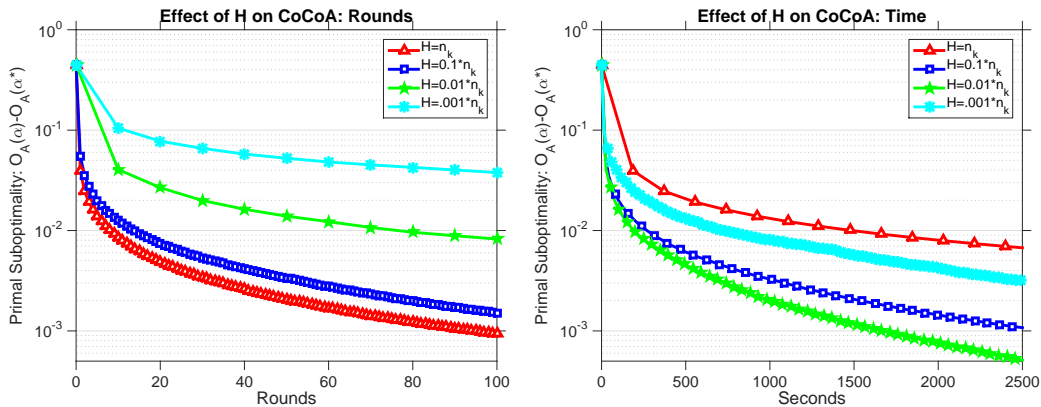

\centering
\captionsetup{type=figure}
\tinytrimfig{h-rounds.pdf}
\tinytrimfig{h-time.pdf}
\caption{{Suboptimality in terms of ${\OA}(\alphav)$ 
for solving lasso for the webspam dataset ($K$=16, $\lambda$=\mbox{1\sc{e}-5}). Here we illustrate how the work spent in the local subproblem (given by~$H$) affects the total performance of \proxcocoa in terms of number of rounds as well as wall time.}} 
\label{fig:heffect}
\end{figure}

\subsection{Experiment Details}
\label{sec:expdetails}
In this subsection we provide thorough details on the experimental setup and methods used in our comparison. All experiments are run on Amazon EC2 clusters of m3.xlarge machines, with one core per machine. The code for each method is written in \textsf{\small Apache Spark}, v1.5.0. Our code is open source and publicly available at~\href{http://gingsmith.github.io/cocoa/}{\texttt{gingsmith.github.io/cocoa/}}.

\paragraph{ADMM.} ADMM \citep{Boyd:2010bw} is a popular method that lends itself naturally to the distributed environment. For lasso regression, implementing ADMM for the problems of interest requires solving a large linear system $C\xv=\dv$ on each machine, where $C \in \R^{\n \times \n}$ with $\n$ scaling beyond $10^7$ for the datasets in Table~\ref{tab:datasets}, and with~$C$ being possibly dense. It is prohibitively slow to solve this directly on each machine, and we therefore employ conjugate gradient with early stopping (see, e.g., \citealp[Section 4.3]{Boyd:2010bw}). For SVM classification, we use stochastic dual coordinate ascent as an internal optimizer, which is shown in \cite{Zhang:2012bp} to have superior performance. We further improve performance with a varying rather than constant penalty parameter, as suggested in \citealp[Section 3.4.1]{Boyd:2010bw}.

\paragraph{Mini-batch SGD and proximal GD.} Mini-batch SGD is a standard and widely used method for parallel and distributed optimization. We use the optimized code provided in Spark's machine learning library, MLlib, v1.5.0 \citep{Meng:2015tu}. We tune both the size of the mini-batch and SGD step size using grid search. For lasso, we use the proximal version of the method. Full gradient descent can be seen as a specific setting of mini-batch SGD, where the mini-batch size is equal to the total number of training points. We thus also use the implementation in MLlib for full GD, and tune the step size parameter using grid search. 

\paragraph{Mini-batch CD and SDCA.} Mini-batch CD (for lasso) and SDCA (for SVM) aim to improve mini-batch SGD by employing coordinate descent, which has theoretical and practical justifications \citep{ShalevShwartz:2011vo,Takac:2013ut,Fercoq:2015kd}. We implement mini-batch CD and SDCA in Spark and scale the updates made at each round by $\frac{\beta}{b}$ for mini-batch size~$b$ and $\beta \in [1,b]$, tuning both parameters $b$ and $\beta$ via grid search. For the case of lasso regression, we implement Shotgun \citep{Bradley:2011wq}, which is a popular method for parallel optimization. Shotgun can be seen an extreme case of mini-batch CD where the mini-batch is set to $K$, i.e., there is a single update made by each machine per round. We see in the experiments that communicating this frequently becomes prohibitively slow in the distributed environment. 

\paragraph{OWL-QN.} OWN-QN \citep{Yu:2010vw} is a quasi-Newton method optimized in Spark's spark.ml package \citep{Meng:2015tu}. Outer iterations of OWL-QN make significant progress towards convergence, but the iterations themselves can be slow as they require processing the entire dataset. \proxcocoa, the mini-batch methods, and ADMM with early stopping all improve on this by allowing the flexibility to process only a subset of the dataset at each iteration. \proxcocoa and ADMM have even greater flexibility by allowing internal methods to process the dataset more than once. \proxcocoa makes this approximation quality explicit, both in theoretical convergence rates and via guidelines for setting the parameter. 

\paragraph{\proxcocoa.} We implement \proxcocoa with coordinate descent as the local solver. We note that since the framework and theory allow any internal solver to be used, \proxcocoa could benefit even beyond the results shown, e.g., by using existing fast $\Lone$-solvers for the single-machine case, such as \glmnet variants~\citep{Friedman:2010wm} or \textsc{blitz}~\citep{Johnson:2015tq} or SVM solvers like \textsc{liblinear}~\citep{Fan:2008tf}. The only parameter influencing the overall performance of \proxcocoa is the level of approximation quality, which we parameterize in the experiments through $H$, the number of local iterations of the iterative method run locally. Our theory relates local approximation quality to global convergence (Section~\ref{sec:convergence}), and we provide a guideline for how to choose this value in practice that links the parameter to the systems environment at hand (Remark~\ref{rem:localtime}). 

\section{Related Work}

There exist myriad optimization methods for the distributed setting; the following section is not meant to be wholly comprehensive, but to provide an overview of the most prevalent and related approaches. We additionally note that many new methods have been proposed since the time of submission of this manuscript in October 2016, including several extensions of the presented \proxcocoa framework---e.g., for 
federated learning~\citep{smith2017federated}, 
computing over heterogeneous systems~\citep{dunner2017efficient}, 
second-order algorithm extensions~\citep{gargiani2017hessian,lee2017distributed,dunner2018hessian,lee2018distributed}, 
and accelerated methods~\citep{ma2017accelerated,zheng2017general}. 
We defer the readers to these follow-up works for the most current literature review.

\label{sec:relatedwork}

\paragraph{Single-machine coordinate solvers.}
For strongly convex regularizers, 
 state-of-the-art for empirical loss minimization is randomized coordinate ascent on the dual (SDCA)~\citep{ShalevShwartz:2013wl} and accelerated variants \citep[e.g.,][]{ShalevShwartz:2014dy}. In contrast to primal stochastic gradient descent (SGD) methods, the SDCA family is often preferred as it is free of learning rate parameters and has faster (geometric) convergence guarantees.  
Interestingly, a similar trend in coordinate solvers exists in recent lasso literature, but with the roles of primal and dual reversed. For those problems, primal-based coordinate descent methods are state-of-the-art, as in \glmnet~\citep{Friedman:2010wm} and extensions~\citep{Yuan:2012wi}; see, e.g., the overview in \cite{Yuan:2010ub}. However, primal-dual rates for unmodified coordinate methods have to our knowledge only been obtained for strongly convex regularizers to date \citep{ShalevShwartz:2014dy,Zhang:2015vj}.

Coordinate descent on $\Lone$-regularized problems (i.e., \eqref{eq:primal} with $g(\cdot)=\lambda\|\cdot\|_1$) can be interpreted as the iterative minimization of a quadratic approximation of the smooth part of the objective, 
followed by a shrinkage step. In the single-coordinate update case, this is at the core of \glmnet~\citep{Friedman:2010wm,Yuan:2010ub}, and widely used in, e.g., solvers based on the primal formulation of $\Lone$-regularized objectives \citep{ShalevShwartz:2011vo,Yuan:2012wi,Bian:2013wx,Fercoq:2015kd,Tappenden:2015vha}. When changing more than one coordinate at a time, again employing a quadratic upper bound on the smooth part, this results in a two-loop method as in \glmnet for the special case of logistic regression.
In the distributed setting, when the set of active coordinates coincides with the ones on the local machine, these single-machine approaches closely resemble the distributed framework proposed here. 

\paragraph{Parallel methods.}

For the general regularized loss minimization problems of interest, methods based on stochastic subgradient descent (SGD) are well-established. Several variants of SGD have been proposed for parallel computing, many of which build on the idea of asynchronous communication \citep{Niu:2011wo, Duchi:2013te}.
Despite their simplicity and competitive performance on shared-memory systems, the downside of this approach in the distributed environment is that the amount of required communication is equal to the amount of data read locally, since one data point is accessed per machine per round (e.g., mini-batch SGD with a batch size of one per worker). These variants are in practice not competitive with the more communication-efficient methods considered in this work, which allow more local updates per communication round.

For the specific case of $\Lone$-regularized objectives, parallel coordinate descent (with and without using mini-batches) was proposed in~\cite{Bradley:2011wq} (Shotgun) and generalized in~\cite{Bian:2013wx}
, and is among the best performing solvers in the parallel setting.
Our framework reduces to Shotgun as a special case when the internal solver is a single-coordinate update on the subproblem \eqref{eq:subproblem}, $\gamma=1$, and for a suitable~$\sigma'$. However, Shotgun is not covered by our convergence theory, since it uses a potentially unsafe 
upper bound of~$\beta$ instead of $\sigma'$, which isn't guaranteed to satisfy our condition for convergence~\eqref{eq:sigmaPrimeSafeDefinition}. We compare empirically with Shotgun in Section~\ref{sec:experiments} to highlight the detrimental effects of running this high-communication method in the distributed environment.

\paragraph{One-shot communication schemes.}
At the other extreme, there are distributed methods that use only a single round of communication, such as \cite{Mann:2009tr,Zinkevich:2010tj,Zhang:2013wq,McWilliams:2014tl}; and \cite{Heinze:2016tu}.
These methods require additional assumptions on the partitioning of the data, which are usually not satisfied in practice if the data are distributed ``as is'', i.e., if we do not have the opportunity to distribute the data in a specific way beforehand. 
Furthermore, some cannot guarantee convergence rates beyond what could be achieved if we ignored data residing on all but a single computer, as shown in \cite{Shamir:2014vf}.
Additional relevant lower bounds on the minimum number of communication rounds necessary for a given approximation quality are presented in \cite{Balcan:2012tc} and \cite{Arjevani:2015vka}.

\paragraph{Mini-batch methods.} Mini-batch methods (which use updates from several training points or features per round) are more flexible and lie within the two extremes of parallel and one-shot communication schemes. However,
mini-batch versions of both SGD and coordinate descent (CD) (e.g., \citealp{Dekel:2012wm, Takac:2013ut, shalev2013accelerated, Shamir:2014tp, qu2015quartz, richtarik2016distributed, defossez2017adabatch}) suffer from their convergence rate degrading towards the rate of batch gradient descent as the size of the mini-batch is increased. 
This follows because mini-batch updates are made based on the outdated previous parameter vector $\wv$, in contrast to methods that allow immediate local updates like \proxcocoa.

Another disadvantage of mini-batch methods is that the aggregation parameter is more difficult to tune, as it can lie anywhere in the order of mini-batch size. The optimal choice is often either unknown or too challenging to compute in practice.
In the \proxcocoa framework there is no need to tune parameters, as the aggregation parameter and subproblem parameters can be set directly using the safe bound discussed in Section~\ref{sec:proxcocoa} (Definition~\ref{def:sigma}).

\paragraph{Batch solvers.}
ADMM \citep{Boyd:2010bw}, gradient descent, and quasi-Newton methods such as L-BFGS and are also often used in distributed settings because of their relatively low communication requirements. However, they require at least a full (distributed) batch gradient computation at each round, and therefore do not allow the gradual trade-off between communication and computation provided by \proxcocoa. 
In Section~\ref{sec:experiments}, we include experimental comparisons with ADMM, gradient descent, and L-BFGS variants, including orthant-wise limited memory quasi-Newton (OWL-QN) for the $\Lone$ setting \citep{Andrew:2007cu}.

Finally, we note that while the convergence rates provided for \proxcocoa mirror the convergence class of classical batch gradient methods in terms of the number of outer rounds, existing batch gradient methods come with a weaker theory, as they do not allow general inexactness $\Theta$ for the local subproblem~\eqref{eq:subproblem}. 
In contrast, our convergence rates incorporate this approximation directly, and, moreover, hold for arbitrary local solvers of much cheaper cost than batch methods (where in each round, every machine has to process exactly a full pass through the local data). This makes \proxcocoa more flexible in the distributed setting, as it can adapt to varied communication costs on real systems. We have seen in  Section~\ref{sec:experiments} that this flexibility results in significant performance gains over the competing methods.

\paragraph{Distributed solvers.}

By making use of the primal-dual structure in the line of work of \cite{Yu:2012fp,Pechyony:2011wi,Yang:2013vl,Yang:2013ui} and \cite{Lee:2015vra}, the \cocoa-v1 and \cocoap frameworks (which are special cases of the presented framework, \proxcocoa) are the first to allow the use of any local solver---of weak local approximation quality---in each round in the distributed setting. The practical variant of the DisDCA \citep{Yang:2013vl}, called DisDCA-p, allows for additive updates in a similar manner to \cocoa, but is restricted to coordinate decent (CD) being the local solver, and was initially proposed without convergence guarantees. DisDCA-p, \cocoa-v1, and \cocoap are all limited to strongly convex regularizers, and therefore are not as general as the \proxcocoa framework discussed in this work. 

In the $\Lone$-regularized setting, an approach related to our framework includes  distributed variants of \glmnet as in \cite{Mahajan:2014tg}. Inspired by \glmnet and \cite{Yuan:2012wi}, the works of \cite{Bian:2013wx} and \cite{Mahajan:2014tg} 
introduced the idea of a block-diagonal Hessian upper approximation in the distributed $\Lone$ context.
The later work of \cite{Trofimov:2014vb,Trofimov:2016tu} specialized this approach to sparse logistic regression.

If hypothetically each of our quadratic subproblems $\Ggk(\vsubset{\Delta \alphav}{k})$ as defined in \eqref{eq:subproblem} were to be minimized exactly, the resulting steps could be interpreted as block-wise Newton-type steps on each coordinate block~$k$, where the Newton-subproblem is modified to also contain the $\Lone$-regularizer \citep{Mahajan:2014tg,Yuan:2012wi,Qu:2015ve}. 
While \cite{Mahajan:2014tg} allows a fixed accuracy for these subproblems, but not arbitrary approximation quality $\Theta$ as in our framework, the works of
\cite{Trofimov:2016tu,Yuan:2012wi}; and \cite{Yen:2015vy} assume that the quadratic subproblems are solved exactly. 
Therefore, these methods are not able to freely trade off communication and computation. Also, they do not allow the re-use of arbitrary local solvers. 
On the theoretical side, the convergence rate results provided by \cite{Mahajan:2014tg,Trofimov:2016tu}; and \cite{Yuan:2012wi} are not explicit convergence rates but only asymptotic, as the quadratic upper bounds are not explicitly controlled for safety as with our $\sigma'$.

\section{Discussion}

To enable large-scale machine learning and signal processing, we have developed, analyzed, and evaluated a general-purpose framework for communication-efficient primal-dual optimization in the distributed environment. Our framework, \cocoa, takes a unique approach by using duality to derive subproblems for each machine to solve in parallel. These subproblems closely match the global problem of interest, which allows for state-of-the-art single-machine solvers to easily be re-used in the distributed setting. Further, by allowing the local solvers to find solutions of arbitrary approximation quality to the subproblems on each machine, our framework permits a highly flexible communication scheme. In particular, as the local solvers make updates directly to their local parameters, the need to communicate reduces and can be adapted to the system at hand, which helps to manage the communication bottleneck in the distributed setting.

We analyzed the impact of the local solver approximation quality and derived global primal-dual convergence rates for our framework that are agnostic to the specifics of the local solvers. We have taken particular care in extending our framework to the case of non-strongly convex regularizers, where we introduced a bounded-support modification technique to provide robust convergence guarantees. Finally, we demonstrated the efficiency of our framework in an extensive experimental comparison with state-of-the-art distributed solvers. Our framework achieves up to a 50$\times$ speedup over other widely-used methods on real-world distributed datasets.

\acks{We thank Michael P. Friedlander, Matilde Gargiani, Sai Praneeth Karimireddy, Jakub Kone{\v c}n{\'y}, Ching-pei Lee, and Peter Richt{\'a}rik
for their help and for fruitful discussions. We are additionally grateful to the reviewers for their valuable comments.
We wish to acknowledge support from the U.S. National Science Foundation, under award number NSF:CCF:1618717, NSF:CMMI:1663256 and NSF:CCF:1740796; the Swiss National Science Foundation, under grant number 175796;
and the Mathematical Data Science program of the
Office of Naval Research, under grant number N00014-15-1-2670.
}

\appendix

\section{Convex Conjugates}
\label{app:conjugates}

The convex conjugate of a function $f: \R^\d\rightarrow \R$ is defined as 
\begin{equation}
f^*(\vv) := \max_{\uv\in\R^\d} \vv^\top \uv - f(\uv) \, .
\end{equation}
Below we list several useful properties of conjugates \cite[see, e.g.,][Section 3.3.2]{Boyd:2004uz}:
\vspace{-1mm}
\begin{itemize} 
\addtolength{\itemindent}{.25em}
\item Double conjugate: \hspace{1em}
$(f^*)^* = f$ if $f$ is closed and convex.
\item Value Scaling:  (for $\alpha>0$) \hspace{2em}
$
f(\vv) = \alpha g(\vv) 
\qquad\Rightarrow\qquad
f^*(\wv) = \alpha g^*(\wv/\alpha)    \, .
$
\item Argument Scaling:  (for $\alpha\ne0$) \hspace{1em}
$
f(\vv) = g(\alpha \vv) 
\qquad\Rightarrow\qquad
f^*(\wv) = g^*(\wv/\alpha) \, .
$
\item Conjugate of a separable sum: \hspace{1em}
$
f(\vv)=\sum_i \phi_i(v_i)
\qquad\Rightarrow\qquad
f^*( \wv ) = \sum_i \phi_i^* ( w_i ) \, .
$
\end{itemize}\vspace{1mm}

\begin{lemma}[{Duality between Lipschitzness and L-Bounded Support, \cite[Corollary 13.3.3]{Rockafellar:1997ww}}]
\label{lem:dualLipschitz}
Given a proper convex function $f$, it holds that
$f$ is $L$-Lipschitz
if and only if 
$f^*$ has $L$-bounded support.
\end{lemma}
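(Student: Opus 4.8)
The plan is to prove both implications directly from the definition of the convex conjugate $f^*(\vv) = \sup_{\uv \in \R^m} \langle \vv, \uv\rangle - f(\uv)$, relying on the biconjugate identity $f = f^{**}$ (Fenchel--Moreau) only for the reverse direction.

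For the forward direction (Lipschitz $\Rightarrow$ $L$-bounded support), I would fix any $\vv$ with $\norm{\vv} > L$ and exhibit a ray along which the conjugate supremum diverges. Taking $\uv = t\,\vv/\norm{\vv}$ for $t>0$ and applying the Lipschitz bound $f(\uv) \leq f(\0) + L\norm{\uv}$ gives
\[
\langle \vv, \uv\rangle - f(\uv) \;\geq\; t\norm{\vv} - f(\0) - Lt \;=\; -f(\0) + t\big(\norm{\vv} - L\big),
\]
which tends to $+\infty$ as $t \to \infty$ because $\norm{\vv} - L > 0$. Hence $f^*(\vv) = +\infty$ whenever $\norm{\vv} > L$, so the effective domain of $f^*$ is contained in the ball of radius $L$, which is exactly the $L$-bounded support property of Definition~\ref{def:lbounded}.

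For the reverse direction (bounded support $\Rightarrow$ Lipschitz), I would write $f = f^{**}$ and observe that, since $f^*(\vv) = +\infty$ outside $\{\vv : \norm{\vv} \le L\}$, the defining supremum restricts to that ball: $f(\uv) = \sup_{\norm{\vv}\le L}\big[\langle \vv, \uv\rangle - f^*(\vv)\big]$. Then for any $\uv, \uv'$ and any feasible $\vv$, Cauchy--Schwarz yields
\[
\langle \vv, \uv\rangle - f^*(\vv) \;\le\; \big[\langle \vv, \uv'\rangle - f^*(\vv)\big] + \norm{\vv}\,\norm{\uv - \uv'} \;\le\; f(\uv') + L\norm{\uv-\uv'}.
\]
Taking the supremum over $\vv$ on the left and then swapping the roles of $\uv$ and $\uv'$ gives $|f(\uv) - f(\uv')| \le L\norm{\uv-\uv'}$, i.e. $L$-Lipschitz continuity.

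The main obstacle is justifying the biconjugate identity $f = f^{**}$, which requires $f$ to be \emph{closed} (lower semicontinuous) and proper convex rather than merely convex; the forward direction needs no such care, since $L$-Lipschitz functions are automatically continuous and hence closed. I would therefore make the closedness hypothesis explicit, noting that bounded support of $f^*$ already forces $f^{**}$ to be finite and Lipschitz everywhere (its value being the supremum over a compact set), so that it coincides with $f$ precisely when $f$ is closed. A minor bookkeeping point is to confirm that $\{\vv : \norm{\vv}\le L\}$ is the correct threshold set in both directions and that the restricted supremum is attained, guaranteeing the finiteness of $f$ used implicitly above.
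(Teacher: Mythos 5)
Your proposal is correct, but it necessarily takes a different route from the paper, because the paper does not prove this lemma at all: it is imported wholesale by citation to \cite[Corollary 13.3.3]{Rockafellar:1997ww}. Your two-directional argument---the divergent-ray computation along $\uv = t\,\vv/\norm{\vv}$ for the implication (Lipschitz $\Rightarrow$ $L$-bounded support of $f^*$), and the Fenchel--Young/Cauchy--Schwarz estimate through the biconjugate for the converse---is the standard self-contained proof, and writing it out has the benefit of making the appendix independent of the reference and of isolating exactly where lower semicontinuity is used. One refinement to your treatment of that last point: the closedness hypothesis you propose to add explicitly is in fact superfluous in the paper's finite-dimensional setting, so the lemma is true exactly as stated for merely proper convex $f$. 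Indeed, if $f^*$ has $L$-bounded support, then (as you note) $f^{**} = \operatorname{cl} f$ is finite and $L$-Lipschitz on all of $\R^m$; since $\operatorname{cl}(\operatorname{dom} f) \supseteq \operatorname{dom}(\operatorname{cl} f) = \R^m$ and a convex set whose closure is all of $\R^m$ is itself all of $\R^m$ (pass to relative interiors, using $\operatorname{ri}(\operatorname{cl} C)=\operatorname{ri} C$ for convex $C$), we get $\operatorname{dom} f = \R^m$; a proper convex function agrees with its closure except possibly on the relative boundary of its domain, which here is empty, so $f = \operatorname{cl} f = f^{**}$ holds automatically and your reverse-direction argument goes through with no added hypothesis. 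The forward direction, as you correctly observe, never needed such care, and your attention to the finiteness of the restricted supremum (via properness of $f^*$ and attainment on the compact ball) is exactly the right bookkeeping.
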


\begin{lemma}[{Duality between Smoothness and Strong Convexity, \cite[Theorem 4.2.2]{hiriart-urruty:2001df}}]
\label{lem:dualSmooth}
Given a closed convex function~$f$, it holds that
$f$ is $\mu$ strongly convex w.r.t. the norm $\|\cdot\|$
if and only if
$f^*$ is $(1/{\mu})$-smooth w.r.t. the dual norm~$\|\cdot\|_*$.
\end{lemma}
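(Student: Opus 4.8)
The statement is the classical Fenchel duality between strong convexity of a function and smoothness of its conjugate (attributed here to Hiriart-Urruty). The plan is to prove the two directions of the equivalence separately, in both cases using the biconjugacy relation $(f^*)^*=f$ (valid because $f$ is closed and convex, as listed among the conjugate properties above) together with the correspondence between subgradients under conjugation, namely that $\vv\in\partial f(\uv)$ is equivalent to $\uv\in\partial f^*(\vv)$.

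\textbf{Forward direction} ($f$ is $\mu$-strongly convex $\Rightarrow$ $f^*$ is $(1/\mu)$-smooth). First I would observe that $\mu$-strong convexity makes $f$ coercive and strictly convex, so the supremum defining $f^*(\vv)=\sup_{\uv}\langle \vv,\uv\rangle-f(\uv)$ is finite and attained at a unique maximizer $\uv(\vv)$ for every $\vv$; by the standard conjugate-subdifferential argument this yields differentiability of $f^*$ with $\nabla f^*(\vv)=\uv(\vv)$, characterized by $\vv\in\partial f(\uv(\vv))$. Next, for arbitrary $\vv_1,\vv_2$ I set $\uv_i:=\nabla f^*(\vv_i)$ and apply the strong-convexity inequality~\eqref{eq:strongconv} twice—at $\uv_1$ with subgradient $\vv_1$ evaluated at $\uv_2$, and symmetrically with the roles reversed—then add the two inequalities. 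The function values cancel and I obtain the strong-monotonicity estimate $\langle \vv_1-\vv_2,\uv_1-\uv_2\rangle\ge \mu\norm{\uv_1-\uv_2}^2$. Finally, bounding the left side by the generalized Cauchy–Schwarz inequality $\langle \vv_1-\vv_2,\uv_1-\uv_2\rangle\le \norm{\vv_1-\vv_2}_*\norm{\uv_1-\uv_2}$ and dividing gives $\norm{\nabla f^*(\vv_1)-\nabla f^*(\vv_2)}\le \tfrac1\mu \norm{\vv_1-\vv_2}_*$, which is exactly $(1/\mu)$-Lipschitz continuity of $\nabla f^*$, i.e.\ $(1/\mu)$-smoothness of $f^*$.

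\textbf{Reverse direction} ($f^*$ is $(1/\mu)$-smooth $\Rightarrow$ $f$ is $\mu$-strongly convex). Writing $g:=f^*$, I would first record the co-coercivity consequence of $(1/\mu)$-smoothness, $\langle \nabla g(\vv_1)-\nabla g(\vv_2),\vv_1-\vv_2\rangle\ge \mu\norm{\nabla g(\vv_1)-\nabla g(\vv_2)}^2$, taking care that the gradients are measured in the primal norm $\norm{\cdot}$ while the arguments live in the dual norm $\norm{\cdot}_*$. Setting $\uv_i:=\nabla g(\vv_i)$ and invoking the conjugate correspondence ($\uv_i=\nabla g(\vv_i)$ iff $\vv_i\in\partial f(\uv_i)$), this becomes the strong-monotonicity inequality $\langle \vv_1-\vv_2,\uv_1-\uv_2\rangle\ge \mu\norm{\uv_1-\uv_2}^2$ for the subdifferential of $f=g^*$. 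I would then recover the strong-convexity inequality~\eqref{eq:strongconv} for $f$ by integrating this monotonicity condition along the segment joining two points, or equivalently by invoking the standard equivalence between $\mu$-strong monotonicity of $\partial f$ and $\mu$-strong convexity of $f$.

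\textbf{Main obstacle.} In finite dimensions the attainment and differentiability steps of the forward direction are routine, so the genuine work lies in the reverse direction: establishing co-coercivity cleanly in a general (non-Euclidean) normed setting and keeping scrupulous track of which quantities are measured in $\norm{\cdot}$ versus the dual norm $\norm{\cdot}_*$. Moreover, the final passage from the monotonicity inequality back to the strong-convexity inequality is not purely algebraic—it rests on a mean-value/integration argument along a segment—so this is the step I expect to demand the most care.
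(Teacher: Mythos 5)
First, note that the paper contains no proof of this lemma to compare against: it is stated as a quoted classical result, with the proof deferred entirely to \cite[Theorem 4.2.2]{hiriart-urruty:2001df}. Judged on its own merits, your proposal is correct and is the standard monotonicity-based argument. In the forward direction, coercivity and strict convexity of $\uv \mapsto f(\uv) - \langle \vv, \uv\rangle$ give a unique maximizer, hence a singleton $\partial f^*(\vv)$ and differentiability of $f^*$; doubling the strong-convexity inequality \eqref{eq:strongconv} gives $\langle \vv_1 - \vv_2, \uv_1 - \uv_2\rangle \ge \mu\|\uv_1 - \uv_2\|^2$ on the graph of $\partial f$; and the dual pairing inequality $\langle a, b\rangle \le \|a\|_*\|b\|$ converts this into $(1/\mu)$-Lipschitz continuity of $\nabla f^*$ with the norms paired exactly as the lemma requires. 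In the reverse direction, co-coercivity does hold for arbitrary norm pairs (the standard proof---minimizing the smoothness upper bound of $g(\cdot) - \langle \nabla g(\vv_2), \cdot\rangle$---produces precisely the dual norm of $\|\cdot\|_*$ on the gradient difference), so your chain smoothness $\Rightarrow$ co-coercivity $\Rightarrow$ strong monotonicity of $\partial f$ is sound.

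The one genuine (though repairable) soft spot is the step you yourself flag: passing from strong monotonicity of $\partial f$ back to \eqref{eq:strongconv} by integrating along a segment. This is clean when $f$ is finite-valued, which is the setting of the paper's definitions. But in the reverse direction nothing forces $f = (f^*)^*$ to be finite-valued (e.g., $f^*$ can be the Huber function, whose conjugate has bounded domain), and then $\partial f$ may be empty at relative-boundary points of $\operatorname{dom} f$ that the segment passes through; this case is relevant to how the paper invokes the lemma, since conjugates of smooth losses can have proper domains. You can avoid both this issue and the mean-value argument with a purely algebraic finish: for $s \in \partial f(\vv)$ we have $\vv = \nabla f^*(s)$ and $f(\vv) = \langle s, \vv\rangle - f^*(s)$ by Fenchel--Young equality; then for any $\uv$,
\begin{align*}
f(\uv) = \sup_{\xv}\big[\langle \xv, \uv\rangle - f^*(\xv)\big]
&\ge \sup_{\xv}\Big[\langle \xv, \uv\rangle - f^*(s) - \langle \vv, \xv - s\rangle - \tfrac{1}{2\mu}\|\xv - s\|_*^2\Big] \\
&= f(\vv) + \langle s, \uv - \vv\rangle + \sup_{\yv}\Big[\langle \yv, \uv - \vv\rangle - \tfrac{1}{2\mu}\|\yv\|_*^2\Big] \\
&= f(\vv) + \langle s, \uv - \vv\rangle + \tfrac{\mu}{2}\|\uv - \vv\|^2 \, ,
\end{align*}
where the inequality is the smoothness upper bound \eqref{eq:smooth} for $f^*$ at $s$, and the last equality uses that the conjugate of $\tfrac{1}{2\mu}\|\cdot\|_*^2$ is $\tfrac{\mu}{2}\|\cdot\|^2$. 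This yields \eqref{eq:strongconv} directly and remains valid for extended-real-valued $f$.
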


\section{Proofs of Primal-Dual Relationships}
\label{app:primaldual}

In the following subsections we provide derivations of the primal-dual relationship of the general objectives~\eqref{eq:primal} and \eqref{eq:dual}, and then show how to derive the conjugate of the modified $\Lone$-norm, as an example of the bounded-support modification introduced in Section~\ref{sec:convergence}.

\subsection{Primal-Dual Relationship}
The relation of the original formulation \eqref{eq:primal} to its dual formulation \eqref{eq:dual} is standard in convex analysis.
Using the linear map $A$ as in our case, the relationship is an instance of Fenchel-Rockafellar Duality, see e.g. \citet[Theorem 4.4.2]{Borwein:2005ge} or \citet[Proposition 15.18]{Bauschke:2011ik}. For completeness, we illustrate this correspondence with a self-contained derivation of the duality.

Starting with the original formulation \eqref{eq:primal}, we introduce an auxiliary vector $\vv\in \R^\d$ representing $\vv =A\alphav$. Then optimization problem~\eqref{eq:primal} becomes:
\begin{equation}
\label{eq:constrainedprimal}
\min_{\alphav, \vv} \quad  f(\vv) + g( \alphav) \quad \text{such that} \ \vv =A\alphav \, .
\end{equation}
Introducing Lagrange multipliers $\wv \in \R^\d$,  the Lagrangian is given by:
\[
L(\alphav, \vv; \wv) := f(\vv) +   g(\alphav) + \wv^\top\left(A\alphav-\vv\right) \, .
\]
The dual problem of \eqref{eq:primal} follows by taking the infimum with respect to both $\alphav$ and $\vv$:
\begin{align}
\inf_{\alphav, \vv} L(\wv, \alphav, \vv) & =   \inf_{\vv} \left\{ f(\vv) - \wv^\top \vv \right\} + \inf_{\alphav} \left\{ g(\alphav) +  \wv^\top A\alphav\right\} \notag \\
& =  - \sup_{\vv} \left\{  \wv^\top \vv - f(\vv) \right\}- \sup_{\alphav} \left\{(-\wv^\top A)\alphav -  g(\alphav) \right\} \notag\\
& = - f^*(\wv) - g^*(-A^\top \wv) \label{eq:Lagrangian}\, .
\end{align}
We change signs and turn the  maximization of the dual problem \eqref{eq:Lagrangian} into a minimization, thereby arriving at the dual formulation $\eqref{eq:dual}$ as claimed:
\[
    \min_{\wv \in \R^\d} \quad \Big[ \ 
    \OB(\wv) := f^*(\wv) + g^*(-A^\top \wv) \ \Big] \, .
\]

\subsection{Continuous Conjugate Modification for Indicator Functions}

\begin{lemma}[Conjugate of the modified $\Lone$-norm]
\label{lem:l1surrogate}
The convex conjugate of the bounded support modification of the $\Lone$-norm, 
as defined in~\eqref{eq:boundedSup}, is:
\[
    \bar{g}_i^*(x) := 
    \begin{cases}
            0 & : x \in [-1,1],  \\
            B(|x| - 1) & : \text{otherwise,}
        \end{cases}
\]
and is $B$-Lipschitz.
\end{lemma}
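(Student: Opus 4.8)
The plan is to compute $\bar g_i^*$ directly from the definition of the convex conjugate, $\bar g_i^*(x) = \sup_{\alpha \in \R} \{ x\alpha - \bar g_i(\alpha) \}$. Because $\bar g_i$ takes the value $+\infty$ outside $[-B,B]$, only points of the compact interval $[-B,B]$ contribute to the supremum, and there $\bar g_i(\alpha) = |\alpha|$. The problem therefore reduces to maximizing the concave, piecewise-linear map $\alpha \mapsto x\alpha - |\alpha|$ over $\alpha \in [-B,B]$; the maximum is attained by compactness of the interval and continuity of the objective, so no subtlety about existence arises.

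The key step is then a short case analysis obtained by splitting the interval at the origin. On $[0,B]$ the objective is the linear function $(x-1)\alpha$, which is maximized at $\alpha = B$ when $x > 1$ and at $\alpha = 0$ otherwise; on $[-B,0]$ it is $(x+1)\alpha$, maximized at $\alpha = -B$ when $x < -1$ and at $\alpha = 0$ otherwise. Comparing the two branches gives optimal value $0$ whenever $x \in [-1,1]$, value $(x-1)B$ when $x>1$, and value $-(x+1)B$ when $x<-1$. Since $(x-1)B = B(|x|-1)$ for $x>1$ and $-(x+1)B = B(|x|-1)$ for $x<-1$, and since both formulas agree (with value $0$) at the boundary points $x = \pm 1$, the three branches collapse into the stated closed form.

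For the Lipschitz claim I would argue in either of two ways. Directly, the closed form is piecewise linear with slope $0$ on $[-1,1]$ and slopes $\pm B$ outside it, so every subgradient has magnitude at most $B$, which gives $B$-Lipschitzness. Alternatively, and more cleanly, I would invoke Lemma~\ref{lem:dualLipschitz}: the effective domain of $\bar g_i$ is contained in $\{\alpha : |\alpha| \le B\}$, so $\bar g_i$ has $B$-bounded support, and hence its conjugate $\bar g_i^*$ is $B$-Lipschitz. The computation is entirely routine, and I do not anticipate a genuine obstacle; the only point requiring mild care is the consistent treatment of the boundary cases $x = \pm 1$, where one must check that the two non-trivial branches meet the constant branch so that $\bar g_i^*$ is continuous.
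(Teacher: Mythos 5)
Your proof is correct, but it runs in the opposite direction from the paper's. You compute $\bar{g}_i^*(x) = \sup_{\alpha \in [-B,B]} \left\{ x\alpha - |\alpha| \right\}$ directly from the definition, splitting the compact domain at the origin and reading off the maximizer of each linear piece, so the claimed formula emerges as the output of the computation. The paper instead starts from the claimed formula, call it $h$, and computes \emph{its} conjugate, $\sup_{x}\left\{\alpha x - h(x)\right\}$, checking case by case ($\alpha \geq B$, $\alpha \in [0,B]$, and the symmetric cases) that this recovers $\bar{g}_i$; that verification establishes the lemma only after invoking the double-conjugate property $(h^*)^* = h$ for closed convex $h$ (listed in Appendix~\ref{app:conjugates}), a step the paper leaves implicit. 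Your derivation is therefore more self-contained: it does not require knowing the answer in advance, it avoids biconjugation entirely, and attainment of the supremum is immediate from compactness of $[-B,B]$ and continuity of $\alpha \mapsto x\alpha - |\alpha|$. What the paper's route buys is that its case analysis lives on the $\alpha$ side, where the structure of $\bar{g}_i$ in \eqref{eq:boundedSup} (finite on $[-B,B]$, $+\infty$ outside) maps cleanly onto the cases, but your split on $x$ at $\pm 1$ is just as short, and your boundary check at $x = \pm 1$ handles the gluing of the branches explicitly. For the Lipschitz claim, you and the paper offer the same two arguments: the direct slope bound on the piecewise-linear formula, or Lemma~\ref{lem:dualLipschitz} (Lipschitz continuity of the conjugate is equivalent to bounded support of the function), applied with the observation that $\bar{g}_i$ has $B$-bounded support.
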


\begin{proof}
We start by applying the definition of convex conjugate:
\[
\bar{g_i}(\alpha) = \sup_{x \in \R} \left[ \alpha x - \bar{g}^*_i(x) \right] \, .
\]
We begin by looking at the case in which $\alpha \geq B$; in this case it's easy to see that when $x \to +\infty$, we
have:
\[
\alpha x - B(|x|-1) = (\alpha - B)x - B \to +\infty \, ,
\]
as $\alpha - B \geq 0$. The case $\alpha \leq -B$ holds analogously.
We'll now look at the case $\alpha \in [0,B]$; in this case it is clear we must have $x^{\star} \ge 0$.
 It also must hold that $x^{\star} \leq 1$, since
\[
\alpha x - B(x-1) < \alpha x \, ,
\]
for every $x > 1$. Therefore the maximization becomes
\[
\bar{g_i}(\alpha) = \sup_{x \in [0,1]} \alpha x \, ,
\]
which has maximum $\alpha$ at $x = 1$.  The remaining $\alpha \in [-B,0]$ case follows in similar fashion.

Lipschitz continuity of $\bar{g}^*_i$ follows directly, or alternatively also from the general result that $g^*_i$ is $L$-Lipschitz if and only if $g_i$ has $L$-bounded support \cite[Corollary 13.3.3]{Rockafellar:1997ww} or \cite[Lemma 5]{Dunner:2016vga}.
\end{proof}

\section{Comparison to ADMM}
\label{app:admm}

\subsection{ADMM Applied to the $\OB(\cdot)$ Formulation}
Here we compare consensus ADMM~\citep{Mota:2013ja} applied to the problem~\eqref{eq:dual} to the \cocoa framework, as discussed in Section~\ref{sec:admm}. For consensus ADMM, the objective~$\OB(\cdot)$ can be decomposed using the following re-parameterization:
\begin{align*}
\min_{\wv_1, \dots \wv_K, \wv} & \quad \sum_{k=1}^K \sum_{i \in \Pk} g_i^*(-\xv_i^\top\wv_k) + f^*(\wv) \\
s.t. & \quad \wv_k = \wv, \, \, k = 1, \dots, K.
\end{align*}
To solve this problem, we construct the augmented Lagrangian:
\begin{align*}
L_{\rho}(\wv_1, \dots, \wv_k, \wv, \uv_1, \dots, \uv_k) &:= 
\sum_{k=1}^K \sum_{i \in \Pk} g_i^*(-\xv_i^\top\wv_k) \\ &+ f^*(\wv) + \sum_{k=1}^K\uv_k^\top(\wv_k-\wv) + \frac{\rho}{2} \sum_{k=1}^K \|\wv_k-\wv\|^2 \, ,
\end{align*}
which yields the following decomposable updates:
\begin{align*}
\wv_k^{(t+1)} & = \argmin_{\wv_k} \, \sum_{i \in \Pk}  g_i^*(-\xv_i^\top\wv_k) + {\uv_k^{(t)}}^\top(\wv_k - \wv^{(t)}) +  \frac{\rho}{2}\|\wv_k - \wv^{(t)}\|^2, \\
\wv^{(t+1)} & = \argmin_{\wv} \, f^*(\wv) +  \sum_{k=1}^K{\uv_k^{(t)}}^\top(\wv_k^{(t+1)}-\wv) + \frac{\rho}{2} \sum_{k=1}^K \|\wv_k^{(t+1)}-\wv\|^2, \\
\uv_{k}^{(t+1)} & = \uv_{k}^{(t)} + \rho(\wv_{k}^{(t+1)} - \wv^{(t+1)}).
\end{align*}
These updates can be further simplified by using the scaled form of $\uv_k$ and combining terms for $\wv$ using the averages $\bar{\wv}_k$ and $\bar{\uv}_k$:
\begin{align*}
\wv_k^{(t+1)} & = \argmin_{\wv_k} \, \sum_{i \in \Pk}  g_i^*(-\xv_i^\top\wv_k) + \rho{\uv_k^{(t)}}^\top(\wv_k - \wv^{(t)}) +  \frac{\rho}{2}\|\wv_k - \wv^{(t)}\|^2, \\
\wv^{(t+1)} & = \argmin_{\wv} \, f^*(\wv) + \frac{\rho K}{2} \|\wv - (\bar{\wv}_k^{(t+1)}+\bar{\uv}_k^{(t)})\|^2, \\
\uv_{k}^{(t+1)} & = \uv_{k}^{(t)} + \wv_{k}^{(t+1)} - \wv^{(t+1)} .
\end{align*}
To compare this to the \cocoa subproblems~\eqref{eq:subproblem}, we will derive the dual form of the update to $\wv_k$. Suppressing the iteration counter for simplicity, the minimization is of the form:
\begin{align*}
& \min_{\wv_k} \sum_{i \in \Pk} g_i^*(-\xv_i^\top\wv_k) + \rho {\uv_k}^\top(\wv_k - \wv) +  \frac{\rho}{2}\|\wv_k - \wv\|^2 \\
 = & \min_{\wv_k} \sum_{i \in \Pk}\max_{\alpha_i} -\xv_i^\top\wv_k\alpha_i - g_i(\alpha_i) + \rho {\uv_k}^\top(\wv_k - \wv) +  \frac{\rho}{2}\|\wv_k - \wv\|^2 \\
 = &  \max_{\vsubset{\alphav}{k}} \min_{\wv_k} -\wv_k^\top A_{[k]}\vsubset{\alphav}{k} - \sum_{i \in \Pk} g_i({\vsubset{\alphav}{k}}_i) + \rho {\uv_k}^\top(\wv_k - \wv) +  \frac{\rho}{2}\|\wv_k - \wv\|^2 \, . 
\end{align*}
Solving the minimization yields: $\wv_k = \frac{1}{\rho} A_{[k]}\vsubset{\alphav}{k} - \uv_k + \wv$. Plugging this back in, we have:
\begin{align*}
 & \max_{\vsubset{\alphav}{k}}\sum_{i \in \Pk} -g_i({\vsubset{\alphav}{k}}_i) -(\frac{1}{\rho} A_{[k]}\vsubset{\alphav}{k} - \uv_k + \wv)^\top A_{[k]}\vsubset{\alphav}{k} + \rho {\uv_k}^\top(\frac{1}{\rho} A_{[k]}\vsubset{\alphav}{k} - \uv_k) +  \frac{\rho}{2}\|\frac{1}{\rho} A_{[k]}\vsubset{\alphav}{k} - \uv_k\|^2 \\
  =  & \max_{\vsubset{\alphav}{k}}\sum_{i \in \Pk} -g_i({\vsubset{\alphav}{k}}_i) + \uv_k^\top A_{[k]}\vsubset{\alphav}{k} - \wv^\top A_{[k]}\vsubset{\alphav}{k} -  \frac{1}{2\rho}\| A_{[k]}\vsubset{\alphav}{k}\|^2  \\
   = & \min_{\vsubset{\alphav}{k}}\sum_{i \in \Pk} g_i({\vsubset{\alphav}{k}}_i) +(\wv - \uv_k)^\top A_{[k]}\vsubset{\alphav}{k} + \frac{1}{2\rho}\| A_{[k]}\vsubset{\alphav}{k}\|^2  \, .
\end{align*}
We therefore see that the update to $\wv_k$ has a similar form to the \cocoa subproblem~\eqref{eq:subproblem}, where $\rho := \frac{\tau}{\sigma'}$.

\subsection{ADMM Applied to the $\OA(\cdot)$ Formulation}
We can also compare \cocoa to ADMM as applied to the~\eqref{eq:primal} problem. For consensus ADMM, the objective~$\OA(\cdot)$ can be decomposed using the following re-parametrization, which introduces local copies ${\alphav}_k$ of the global variable ${\alphav}$, and a set of consensus constraints to achieve the equality between them:
\begin{align*}
\min_{{\alphav}, {\alphav}_{1}, \dots {\alphav}_{K}} &  \quad  f(A{\alphav})+ \sum_{k=1}^K \sum_{i \in \Pk} g_i(\alphav_{k,i})  \\
s.t. & \quad \vsubset{A}{k}{\alphav}=\vsubset{A}{k}{\alphav}_{k}, \, \, k = 1, \dots, K.
\end{align*}
To solve this problem, we construct the augmented Lagrangian with a penalty parameter $\rho$:
\begin{align*}
L_{\rho}(\tilde{\alphav},\alphav_{1}, \dots, {\alphav}_{K}, \wv_1, \dots, \wv_k) &:= f(A\tilde{\alphav}) + \\
&  \sum_{k=1}^K \left[\sum_{i \in \Pk} g_i({\alphav}_{k,i}) + \wv_k^\top\vsubset{A}{k}({\alphav}_{k} - {\alphav}) + \frac{\rho}{2} \| \vsubset{A}{k} ({\alphav}_{k} - {\alphav}) \|_2^2\right]
\end{align*}
which yields to the following decomposable updates:
\begin{alignat*}{2}
&{\alphav}_{k}^{(t)}  &&= \argmin_{{\alphav}_{k}} \, \sum_{i \in \Pk}  g_i({{\alphav}_{k,i}}) +{\wv_k^{(t-1)}}^\top \vsubset{A}{k}({\alphav}_{k}-{\alphav}^{(t-1)}) + \frac{\rho}{2} \| \vsubset{A}{k} ({\alphav}_{k} - {\alphav}^{(t-1)}) \|_2^2,\\
&{\alphav}^{(t)} && = \argmin_{{\alphav}} \, f(A{\alphav}) +  \sum_{k=1}^K \left[\wv_k^{(t-1)\top}\vsubset{A}{k}({\alphav}_{k}^{(t)} - {\alphav}) + \frac{\rho}{2} \| \vsubset{A}{k} ({\alphav}_{k}^{(t)} - {\alphav}) \|_2^2\right], \\
&\wv_{k}^{(t)} &&  = \wv_{k}^{(t-1)} + \rho\vsubset{A}{k}({\alphav}_{k}^{(t)}-{\alphav}^{(t)}).
\end{alignat*}
The first minimization is solved locally in a distributed manner by the $K$ partitions. By setting $\rho := \frac{\sigma'}{\tau}$ and applying the change of variables $\Delta \vsubset{\alphav}{k}=({\alphav}_{k}-{{\alphav}}^{(t-1)})$, we can obtain the \proxcocoa local subproblems~\eqref{eq:subproblem}:
\begin{align*}
&\Delta {\vsubset{\alphav}{k}} ^{(t)}= \argmin_{\Delta \vsubset{\alphav}{k}} \sum_{i \in \Pk} g({\alpha}_i^{(t-1)}+\Delta{\vsubset{\alphav}{k}}_i) + {\wv^{(t-1)}}^\top \vsubset{A}{k}\Delta \vsubset{\alphav}{k} + \frac{\sigma'}{2 \tau} \| \vsubset{A}{k} \Delta \vsubset{\alphav}{k}\|^2
\end{align*}
Note that the update to ${\alphav}$ in its current form is not separable and must be solved via some distributed optimization procedure. The comparison to the $\OB(\cdot)$ formulation is more natural in this sense, as it captures a setting and formulation in which distributed ADMM would more commonly be applied. However, the above formulation is closely related to the \textit{sharing} variant of ADMM~\cite[Section 7.3]{Boyd:2010bw}, where data for canonical regularized loss minimization problems is assumed to be distributed via features~\cite[see, e.g.,][Section 8.3]{Boyd:2010bw}.

\section{Convergence Proofs}
\label{app:convgproofs}

In this section we provide proofs of our main convergence results. The arguments follow the reasoning in \cite{Ma:2015ti,Ma:2017dx}, but where we have generalized them to be applicable directly to~\eqref{eq:primal}. We provide full details of Lemma~\ref{lem:RelationOfDTOSubproblems} as a proof of concept, but omit details in later proofs that can be derived using the arguments in \cite{Ma:2015ti} or earlier work of \cite{ShalevShwartz:2013wl}, and instead outline the proof strategy and highlight sections where the theory deviates.

\subsection{Approximation of $\OA(\cdot)$ by the Local Subproblems $\Ggk(\cdot)$}

Our first lemma in the overall proof of convergence helps to relate progress on the local subproblems to the global objective $\OA(\cdot)$.
\begin{replemma}{lem:RelationOfDTOSubproblems}
For any dual variables
$\alphav, \Delta \alphav 
\in \R^{\n}$, $\vv = \vv(\alphav) := A\alphav$, and real values $\aggpar,\sigma'$ satisfying~\eqref{eq:sigmaPrimeSafeDefinition}, it holds that
\begin{equation}
  \OA\Big(
\alphav +\aggpar 
\sum_{k=1}^K
\vsubset{\Delta \alphav}{k}\!
\Big) 
 \leq 
 (1-\aggpar) \OA(\alphav)  + \aggpar 
 \sum_{k=1}^K 
 \Ggk(\vsubset{\Delta \alphav}{k}; \vv, \vsubset{\alphav}{k}) \, .
\end{equation}
\end{replemma}
\begin{proof}
\allowdisplaybreaks
In this proof we follow the line of reasoning in \citet[Lemma 4]{Ma:2015ti} with a more general $(1/\tau)$ smoothness assumption on $f(\cdot)$. An outer iteration of \proxcocoa performs the following update:
\begin{align}
\label{eq:dualtosubproblem1}
\OA(\alphav+\gamma \sum_{k=1}^K\vsubset{\Delta \alphav}{k})
&= \underbrace{ f(\vv(\alphav + \gamma \sum_{k=1}^K \vsubset{\Delta \alphav}{k}))}_A +
\underbrace{\sum_{i=1}^{n}
g_i(\alpha_i +\gamma (\sum_{k=1}^K \vsubset{\Delta \alphav}{k})_i)}_{B} \, .
\end{align}

We bound $A$ and $B$ separately. First we bound A using $(1/\tau)$-smoothness of $f$:
\begin{align*}
A &= f\Big(\vv(\alphav + \gamma \sum_{k=1}^K \vsubset{\Delta \alphav}{k})\Big) =
f\Big(\vv(\alphav) + \gamma \sum_{k=1}^K \vv(\vsubset{\Delta \alphav}{k})\Big) \\
& \overset{\text{smoothness of $f$ as in \eqref{eq:smooth}}}{\leq} f(\vv(\alphav)) +\sum_{k=1}^K \gamma \nabla f(\vv(\alphav))^\top  \vv(\vsubset{\Delta \alphav}{k})  + \frac{\gamma^2}{2\tau} \|\sum_{k=1}^K \vv(\vsubset{{\Delta}\alphav}{k})\|^2 \\
& \overset{\text{definition of $\wv$ as in \eqref{eq:dualPdualrelation}}}{=} f(\vv(\alphav)) +\sum_{k=1}^K \gamma \vv(\vsubset{\Delta \alphav}{k})^\top\wv(\alphav)  + \frac{\gamma^2}{2\tau} \|\sum_{k=1}^K \vv(\vsubset{{\Delta}\alphav}{k})\|^2 \\
& \overset{\text{safe choice of $\sigma'$ as in \eqref{eq:sigmaPrimeSafeDefinition}}}{\leq} f(\vv(\alphav)) +\sum_{k=1}^K \gamma \vv(\vsubset{\Delta \alphav}{k})^\top\wv(\alphav)  + \frac{1}{2\tau}\gamma \sigma' \sum_{k=1}^K \| \vv(\vsubset{{\Delta}\alphav}{k})\|^2 \, .
\end{align*}  
 
Next we use Jensen's inequality to bound B:
\begin{align*}
B = \sum_{k=1}^K \left( \sum_{i\in \Pk} g_i(\alpha_i+\gamma   (\vsubset{\Delta \alphav}{k})_i) \right)
& = \sum_{k=1}^K \left( \sum_{i\in \Pk} g_i((1-\gamma)   \alpha_i+\gamma (\alpha_i + (\vsubset{\Delta \alphav}{k})_i)) \right) \\
&\leq  \sum_{k=1}^K \left( \sum_{i\in \Pk} (1-\gamma) g_i(\alpha_i) +\gamma g_i(\alpha_i + ({\vsubset{\Delta \alphav}{k}})_i) \right) \, .
\end{align*}

Plugging $A$ and $B$ back into~\eqref{eq:dualtosubproblem1} yields:
\begin{align*}
\OA\Big(\alphav & +\gamma \sum_{k=1}^K \vsubset{\Delta \alphav}{k}\Big) 
\le \   f(\vv(\alphav)) \pm \gamma f(\vv(\alphav))
+\sum_{k=1}^K \gamma  \vv(\vsubset{\Delta \alphav}{k})^\top\wv(\alphav)   + \frac{1}{2\tau}\gamma \sigma' \sum_{k=1}^K \| \vv(\vsubset{{\Delta}\alphav}{k})\|^2 \\
& +  \sum_{k=1}^K\sum_{i\in \Pk} (1-\gamma) g_i(\alpha_i) +\gamma g_i(\alpha_i + ({\vsubset{\Delta \alphav}{k}})_i) \\
= \ & \underbrace{(1-\gamma) f(\vv(\alphav)) +  \sum_{k=1}^K \left(\sum_{i\in \Pk} (1-\gamma) g_i(\alpha_i)  \right)}_{(1-\gamma) \OA(\alphav)} \\
&  +  \gamma \sum_{k=1}^K \left(\frac{1}{K} f(\vv(\alphav)) + \vv(\vsubset{\Delta \alphav}{k})^\top\wv(\alphav) + \frac{\sigma'}{2\tau}  \| \vv( \vsubset{{\Delta}\alphav}{k})\|^2 + \sum_{i\in \Pk} g_i(\alpha_i + ({\vsubset{\Delta \alphav}{k}})_i)  \right) \\
\overset{\eqref{eq:subproblem}}{=}& \ (1-\gamma) \OA(\alphav) +\gamma \sum_{k=1}^K \Ggk(  \vsubset{\Delta \alphav}{k}; \vv, \vsubset{\alphav}{k}) \, , 
\end{align*}
where the last equality is by the definition of the subproblem objective $\Ggk(.)$ as in \eqref{eq:subproblem}.
\end{proof}

\subsection{Proof of Main Convergence Result (Theorem \ref{thm:convergenceNonsmooth})}

Before proving the main convergence results, we introduce several useful quantities, and establish the following lemma, which characterizes the effect of iterations of Algorithm~\ref{alg:generalizedcocoa} on the duality gap for any chosen local solver of approximation quality $\Theta$.

\begin{lemma}
\label{lem:basic}
Let $g_i$ be strongly convex~\footnote{Note that the case of weakly convex $g_i(.)$ is explicitly allowed here as well, as the Lemma holds for the case $\mu = 0$.} with convexity parameter $\mu \geq 0$ with respect to the norm $\|\cdot\|$, $\forall i\in[n]$.
Then at each iteration of Algorithm~\ref{alg:generalizedcocoa} under Assumption~\ref{asm:theta}, and any $s\in [0,1]$, it holds that
\begin{align}
\label{eq:lemma:dualdecrease_vs_dualitygap}
&\E[ \OA(\vc{\alphav}{t}) - \OA(\vc{\alphav}{t+1})] \geq \gamma (1-\Theta) \Big(s G(\vc{\alphav}{t}) - \frac{\sigma's^2}{2\tau} \vc{R}{t} \Big) \, ,
\end{align}
where
\begin{align}
\label{eq:defOfR}
\vc{R}{t}&:= - \tfrac{ \tau \mu (1-s)}{\sigma' s } \|\vc{\uv}{t}-\vc{\alphav}{t}\|^2 + \textstyle{\sum}_{k=1}^K \| \vsubset{A}{k}\vsubset{  (\vc{\uv}{t}  - \vc{\alphav}{t} )}{k}\|^2 \, ,
\end{align}
for $\vc{\uv}{t} \in\R^{\n}$ with
\begin{equation}
\label{eq:defintionOfUi}
\vc{u_i}{t} 
\in \partial g^*_i(-\xv_i^\top\wv(\vc{\alphav}{t})) \, .
\end{equation}
\end{lemma}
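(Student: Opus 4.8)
The plan is to chain the block-separable upper bound of Lemma~\ref{lem:RelationOfDTOSubproblems} together with the local accuracy guarantee of Assumption~\ref{asm:theta}, and then to lower-bound the resulting subproblem decrease by evaluating each $\Ggk$ at a carefully chosen feasible direction. First I would record the elementary identity $\OA(\alphav)=\sum_{k=1}^K \Ggk(\0;\vv,\vsubset{\alphav}{k})$, which holds because setting $\vsubset{\Delta \alphav}{k}=\0$ annihilates the linear and quadratic terms in $\Ggk$ and leaves $\tfrac1K f(\vv)$ summing to $f(\vv)$ plus $\sum_i g_i(\alpha_i)$. Combining this with Lemma~\ref{lem:RelationOfDTOSubproblems} applied at $\vc{\alphav}{t+1}=\vc{\alphav}{t}+\gamma\sum_k\vsubset{\Delta \alphav}{k}$ gives
\[
\OA(\vc{\alphav}{t})-\OA(\vc{\alphav}{t+1}) \ \geq\ \gamma \sum_{k=1}^K\big[\Ggk(\0;\vv,\vsubset{\alphav}{k})-\Ggk(\vsubset{\Delta \alphav}{k};\vv,\vsubset{\alphav}{k})\big].
\]
Taking expectations and invoking Assumption~\ref{asm:theta} term by term then replaces the actual update by a factor $(1-\Theta)$ times the gap to the exact subproblem minimizer, yielding $\E[\OA(\vc{\alphav}{t})-\OA(\vc{\alphav}{t+1})]\geq \gamma(1-\Theta)\sum_k[\Ggk(\0)-\Ggk(\vsubset{\Delta \alphav^{\star}}{k})]$.

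Next I would exploit optimality of $\vsubset{\Delta \alphav^{\star}}{k}$ to substitute the specific feasible direction $\vsubset{\Delta \alphav}{k}:=s\,\vsubset{(\vc{\uv}{t}-\vc{\alphav}{t})}{k}$ for $s\in[0,1]$, so that $\Ggk(\vsubset{\Delta \alphav^{\star}}{k})\leq \Ggk(s\,\vsubset{(\vc{\uv}{t}-\vc{\alphav}{t})}{k})$. Expanding the quadratic subproblem at this point, the linear term contributes $-s\,\wv^\top A(\vc{\uv}{t}-\vc{\alphav}{t})$ and the penalty contributes $-\tfrac{\sigma' s^2}{2\tau}\sum_k\|\vsubset{A}{k}\vsubset{(\vc{\uv}{t}-\vc{\alphav}{t})}{k}\|^2$; for the separable $g_i$ part I would apply $\mu$-strong convexity along the segment joining $\alpha_i$ and $u_i$, which gives $g_i(\alpha_i)-g_i((1-s)\alpha_i+s u_i)\geq s(g_i(\alpha_i)-g_i(u_i))+\tfrac{\mu}{2}s(1-s)(u_i-\alpha_i)^2$. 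Summing these contributions produces a clean factor of $s$ in front of a linear bracket, plus the strong-convexity correction $\tfrac{\mu}{2}s(1-s)\|\vc{\uv}{t}-\vc{\alphav}{t}\|^2$.

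The crux is then to recognize the bracketed linear term as the duality gap. Using the Fenchel--Young equality at the mapping point, $f(\vv)+f^*(\wv)=\wv^\top\vv$ with $\wv=\nabla f(\vv)$, together with the defining subgradient relation $\vc{u_i}{t}\in\partial g_i^*(-\xv_i^\top\wv)$, which forces the tight equality $g_i(u_i)+g_i^*(-\xv_i^\top\wv)=-u_i\,\xv_i^\top\wv$, I would rewrite $\gap(\vc{\alphav}{t})=\OA(\vc{\alphav}{t})+\OB(\wv(\vc{\alphav}{t}))$ as $-\wv^\top A(\vc{\uv}{t}-\vc{\alphav}{t})+\sum_i(g_i(\alpha_i)-g_i(u_i))$, which is exactly the bracket. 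Matching the surviving $-\tfrac{\sigma's^2}{2\tau}\sum_k\|\vsubset{A}{k}\vsubset{(\vc{\uv}{t}-\vc{\alphav}{t})}{k}\|^2$ and $+\tfrac{\mu}{2}s(1-s)\|\vc{\uv}{t}-\vc{\alphav}{t}\|^2$ terms against the definition of $\vc{R}{t}$ in~\eqref{eq:defOfR} then closes the argument.

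I expect the only genuinely delicate step to be this identification of the linear part with $\gap(\vc{\alphav}{t})$: it hinges on using $A\vc{\uv}{t}=\sum_i u_i\xv_i$ to turn $\sum_i u_i\xv_i^\top\wv$ into $\wv^\top A\vc{\uv}{t}$, and on invoking the subgradient optimality of $\vc{u_i}{t}$ to make Fenchel--Young tight. The strong-convexity bookkeeping---checking that the $\mu$-term carries the correct sign and scaling to coincide with the first summand of $\vc{R}{t}$---is the other place requiring care, but it is otherwise mechanical. The case $\mu=0$ of weakly convex $g_i$ then falls out for free, since the correction term simply vanishes.
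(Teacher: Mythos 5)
Your proposal is correct and follows essentially the same route as the paper's own proof: chaining Lemma~\ref{lem:RelationOfDTOSubproblems} with Assumption~\ref{asm:theta}, lower-bounding the exact subproblem decrease by evaluating at the trial point $s(\vc{\uv}{t}-\vc{\alphav}{t})$, applying $\mu$-strong convexity along that segment, and identifying the linear bracket with the duality gap via the Fenchel--Young equalities for $f$ and the tightness forced by $\vc{u_i}{t}\in\partial g_i^*(-\xv_i^\top\wv)$. The only cosmetic difference is that you make the identity $\OA(\alphav)=\sum_k \Ggk(\0;\vv,\vsubset{\alphav}{k})$ explicit, which the paper uses implicitly.
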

\begin{proof}
This proof is motivated by \citet[Lemma 19]{ShalevShwartz:2013wl} and follows \citet[Lemma 5]{Ma:2015ti}, with a difference being the extension to our generalized subproblems $\Ggk(\cdot;\vv, \vsubset{\alphav}{k})$ along with the  mappings $\wv(\alphav) := \nabla f(\vv(\alphav))$ with $\vv(\alphav) := A \alphav$. 

For simplicity, we write $\alphav$ instead of $\vc{\alphav}{t}$, $\vv$ instead of $\vv(\vc{\alphav}{t})$, $\wv$ instead of $\wv(\vc{\alphav}{t})$ and $\uv$ instead of $\vc{\uv}{t}$. We can estimate the expected change of the objective $\OA(\alphav)$ as follows. Starting from the definition of the update $\vc{\alphav}{t+1} := \vc{\alphav}{t} + \gamma \, \sum_k \vsubset{\Delta \alphav}{k}$ from Algorithm~\ref{alg:generalizedcocoa}, we apply Lemma \ref{lem:RelationOfDTOSubproblems}, which relates the local approximation $\Ggk(\alphav;\vv, \vsubset{\alphav}{k})$ to the global objective $\OA(\alphav)$, and then bound this using the notion of quality of the local solver ($\Theta$), as in Assumption \ref{asm:theta}. This gives us: 
\begin{align*}
\E \big[\OA(\vc{\alphav}{t}) - \OA(\vc{\alphav}{t+1})\big] &= \E \Big[\OA(\alphav) - \OA\Big(\alphav + \gamma \sum_{k=1}^K \vsubset{\Delta \alphav}{k}\Big)\Big] \\
&\ge \gamma (1-\Theta) \left( \underbrace{ \OA(\alphav) - \sum_{k=1}^K \Ggk(\vsubset{\Delta \alphav^{\star}}{k}; \vv, \vsubset{\alphav}{k}) }_{C} \right) \, .
\tagthis
\label{eq:basic1}
\end{align*} 

We next upper bound the $C$ term, denoting $\Delta \alphav^{\star} = \sum_{k=1}^K \vsubset{\Delta \alphav^{\star}}{k}$. We first plug in the definition of the objective $\OA$ in \eqref{eq:primal} and the local subproblems \eqref{eq:subproblem}, and then substitute $s(u_i-\alpha_i)$ for $\Delta \alpha^{\star}_i$ and apply the $\mu$-strong convexity of the $g_i$ terms. This gives us:
\begin{align*}
C& = \sum_{i =1}^{\n} \left(g_i(\alpha_i) - g_i(\alpha_i + \Delta \alpha^{\star}_i) \right) - (A\Delta \alphav^{\star})^\top\wv(\alphav)  - \sum_{k=1}^K  \frac{\sigma'}{2\tau} \Big\|\vsubset{A}{k}\vsubset{\Delta \alphav^{\star}}{k}\Big\|^2 \\
& = \sum_{i =1}^{\n} \left(g_i(\alpha_i) - g_i(su_i + (1-s)\alpha_i) \right) - A(s(\uv-\alphav))^\top\wv(\alphav)  - \sum_{k=1}^K  \frac{\sigma'}{2\tau} \Big\|\vsubset{A}{k}\vsubset{s (\uv  - \alphav )}{k}\Big\|^2 \\
&\ge \sum_{i =1}^{\n} \left( s g_i(\alpha_i ) -s g_i(u_i ) + \frac{\mu}{2} (1-s)s (u_i -\alpha_i)^2 \right) \\
&\qquad - A(s (\uv  - \alphav ))^\top\wv(\alphav) - \sum_{k=1}^K \frac{\sigma'}{2\tau}   \Big\|\vsubset{A}{k}(\vsubset{s (\uv  - \alphav )}{k})\Big\|^2 \, .\tagthis
\end{align*}

From the definition of the optimization problems \eqref{eq:primal} and \eqref{eq:dual}, and definition of convex conjugates, we can write the duality gap as:
\begin{align*}
\gap(\alphav) := \OA(\alphav)-(-\OB(\wv(\alphav))
&\overset{\eqref{eq:primal},\eqref{eq:dual}}{=} \sum_{i=1}^{\n} \left( g^*_i(- \xv_i^\top\wv(\alphav)) + g_i(\alpha_i) \right) + f^*(\wv(\alphav)) + f(A\alphav)) \\
& = \sum_{i=1}^{\n} \left( g^*_i( -\xv_i^\top\wv(\alphav)) + g_i(\alpha_i) \right) +  (A\alphav)^\top\wv(\alphav)  \\
& = \sum_{i=1}^{\n} \left( g^*_i( -\xv_i^\top\wv(\alphav)) +  g_i(\alpha_i) + \alpha_i \xv_i^\top\wv(\alphav) \right) \, .
\tagthis
\label{eq:basic3}
\end{align*}

The convex conjugate maximal property from \eqref{eq:defintionOfUi} implies that
\begin{equation}
\label{eq:basic2}
g_i(u_i) = u_i (-\xv_i^\top\wv(\alphav)) -g^*_i(-\xv_i^\top \wv(\alphav)) \, .
\end{equation}
Using \eqref{eq:basic2} and \eqref{eq:basic3}, we therefore have:
\begin{align*}
C &\overset{ \eqref{eq:basic2}}{\geq} \sum_{i =1}^{\n} \left(s  g_i(\alpha_i ) - s u_i (-\xv_i^\top\wv(\alphav)) + sg^*_i(-\xv_i^\top\wv(\alphav)) +  
\frac{\mu}{2} (1-s)s (u_i -\alpha_i)^2 \right) \\
& \qquad  - A(s (\uv  - \alphav ))^\top\wv(\alphav) - \sum_{k=1}^K \frac{\sigma'}{2\tau}   \Big\|\vsubset{A}{k}(\vsubset{s (\uv  - \alphav )}{k})\Big\|^2 \\
&= \sum_{i =1}^{\n}  \big[  sg_i(\alpha_i ) + sg^*_i(-\xv_i^\top \wv(\alphav)) + s \xv_i^\top \wv(\alphav) \alpha_i \big]  - \sum_{i =1}^{\n} \big[  s \xv_i^\top \wv(\alphav) ( \alpha_i-u_i ) - \frac{\mu}{2}(1-s)s (u_i -\alpha_i)^2 \big] \\
&\qquad  - A(s (\uv  - \alphav ))^\top\wv(\alphav) - \sum_{k=1}^K \frac{\sigma'}{2\tau}   \Big\|\vsubset{A}{k}(\vsubset{s (\uv  - \alphav )}{k})\Big\|^2 \\
&\overset{\eqref{eq:basic3}}{=} s \gap(\alphav) + \frac{\mu}{2} (1-s)s  \|\uv-\alphav\|^2  - \frac{\sigma's^2}{2\tau} \sum_{k=1}^K   \| \vsubset{A}{k} \vsubset{  (\uv  - \alphav )}{k}\|^2 \, .
\tagthis 
\label{eq:basic4}
\end{align*}

The claimed improvement bound \eqref{eq:lemma:dualdecrease_vs_dualitygap} then follows by plugging \eqref{eq:basic4} into \eqref{eq:basic1}.
\end{proof}

The following Lemma provides a uniform bound on~$\vc{R}{t}$.

\begin{lemma}
\label{lemma:BoundOnR}
If $g^*_i$ are $L$-Lipschitz continuous for all $i\in [\n]$, then
\begin{equation}
\label{eq:asfjoewjofa}
\forall t:  \vc{R}{t} \leq 4L^2 \underbrace{\sum _{k=1}^K \sigma_k  n_k}_{=: \sigma}\, , \end{equation}
where
\begin{equation}
\label{eq:definitionOfSigmaK}
\sigma_k \eqdef \max_{\vsubset{\alphav}{k} \in \R^n} \frac{\|\vsubset{A}{k}\vsubset{\alphav}{k}\|^2}{\|\vsubset{\alphav}{k}\|^2} \, .
\end{equation}
\end{lemma}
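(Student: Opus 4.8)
The plan is to bound $R^{(t)}$ termwise, exploiting that its first summand is non-positive and that both coordinates $u_i^{(t)}$ and $\alpha_i^{(t)}$ are confined to the interval $[-L,L]$. First I would discard the leading term of~\eqref{eq:defOfR}: since $\tau\ge 0$, $\mu\ge 0$, $\sigma'>0$, and $s\in(0,1]$, the coefficient $\tfrac{\tau\mu(1-s)}{\sigma' s}$ is non-negative, so the term $-\tfrac{\tau\mu(1-s)}{\sigma' s}\|\vc{\uv}{t}-\vc{\alphav}{t}\|^2$ is $\le 0$ and hence
\[
\vc{R}{t} \leq \sum_{k=1}^K \big\|A_{[k]}\,\vsubset{(\vc{\uv}{t}-\vc{\alphav}{t})}{k}\big\|^2 .
\]
Then, applying the definition of $\sigma_k$ from~\eqref{eq:definitionOfSigmaK} to each block gives $\big\|A_{[k]}\,\vsubset{(\vc{\uv}{t}-\vc{\alphav}{t})}{k}\big\|^2 \leq \sigma_k\,\big\|\vsubset{(\vc{\uv}{t}-\vc{\alphav}{t})}{k}\big\|^2$.

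The crux is to bound each coordinate difference $u_i^{(t)}-\alpha_i^{(t)}$ by $2L$ in absolute value. For $u_i^{(t)}$ I would use that $\vc{u_i}{t}\in\partial g^*_i(-\xv_i^\top\wv(\vc{\alphav}{t}))$ by~\eqref{eq:defintionOfUi}: since $g^*_i$ is assumed $L$-Lipschitz, all of its subgradients have norm at most $L$, whence $|u_i^{(t)}|\le L$. For $\alpha_i^{(t)}$ I would invoke the duality between Lipschitzness and bounded support (Lemma~\ref{lem:dualLipschitz}), which says that $L$-Lipschitzness of $g^*_i$ is equivalent to $g_i$ having $L$-bounded support, so that $g_i(\alpha_i)<+\infty \Rightarrow |\alpha_i|\le L$. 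Because the iterate $\alpha_i^{(t)}$ must lie in the effective domain of $g_i$ (otherwise $\OA(\vc{\alphav}{t})=+\infty$, which a monotone solver started from the finite-valued point $\vc{\alphav}{0}=\0$ never reaches), this yields $|\alpha_i^{(t)}|\le L$. Combining the two via the triangle inequality gives $(u_i^{(t)}-\alpha_i^{(t)})^2 \le (2L)^2 = 4L^2$.

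Finally I would assemble the estimates. Summing the coordinate bound over $i\in\Pk$ yields $\big\|\vsubset{(\vc{\uv}{t}-\vc{\alphav}{t})}{k}\big\|^2 \le 4L^2 n_k$, and substituting back gives
\[
\vc{R}{t} \leq \sum_{k=1}^K \sigma_k\cdot 4L^2 n_k = 4L^2 \sum_{k=1}^K \sigma_k n_k = 4L^2\sigma ,
\]
which is exactly the claim~\eqref{eq:asfjoewjofa}. I expect the only genuinely delicate step to be the bound $|\alpha_i^{(t)}|\le L$: the subgradient bound giving $|u_i^{(t)}|\le L$ is immediate, but justifying that the iterates stay inside the bounded effective domain of $g_i$ relies on the bounded-support equivalence of Lemma~\ref{lem:dualLipschitz} together with the monotonicity of the local solver, so that the objective value $\OA(\0)$ is never exceeded. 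Everything else is a routine chain of the nonpositivity of one term and the operator-norm definition of $\sigma_k$.
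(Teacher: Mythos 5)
Your proof is correct and takes essentially the same route as the paper's: the paper likewise eliminates the first term of $\vc{R}{t}$ (by noting $\mu=0$ for general convex $g_i$, where you instead use non-positivity of that term), applies the definition of $\sigma_k$ blockwise, and bounds each coordinate of $\vc{\uv}{t}-\vc{\alphav}{t}$ by $2L$ via the equivalence of $L$-Lipschitz continuity of $g_i^*$ with $L$-bounded support of $g_i$. Your explicit justification that $|u_i^{(t)}|\le L$ (via the subgradient-norm bound for Lipschitz functions) and $|\alpha_i^{(t)}|\le L$ (via the iterates remaining in the effective domain) is in fact more detailed than the paper's one-line citation of an external lemma.
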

\begin{proof}
\citep[Lemma 6]{Ma:2015ti}. 
For general convex functions, the strong convexity parameter is 
$\mu=0$, and hence the definition \eqref{eq:defOfR} of the complexity constant $\vc{R}{t}$ becomes
\begin{align*} 
\vc{R}{t}
=
  \sum _{k=1}^K   
  \| \vsubset{A}{k} \vsubset{  (\vc{\uv} {t} - \vc{\alphav}{t} )}{k}\|^2
\overset{\eqref{eq:definitionOfSigmaK}}{\leq}   
\sum _{k=1}^K 
\sigma_k  
  \|   \vsubset{  (\vc{\uv} {t} - \vc{\alphav}{t} )}{k}\|^2
\leq
\sum _{k=1}^K 
\sigma_k  |\Pk| 4L^2 \, .
\end{align*}
Here the last inequality follows from \cite[Lemma 21]{ShalevShwartz:2013wl}, which shows that for $g^*_i : \R \to \R$ being $L$-Lipschitz, it holds that for any real value $a$ with $|a|> L$ one has that
$g_i(a) = +\infty$.
\end{proof}

\begin{remark}
\label{rmk:asfwaefwae}
\citep[Remark 7]{Ma:2015ti} If the data points $\xv_i$ are normalized such that $\|\xv_i\|\leq 1$, $\forall i\in [n]$, then $\sigma_k \leq |\Pk| = n_k$. Furthermore, if we assume that the data partition is balanced, i.e., that $n_k = \n/K$ for all $k$, then $\sigma \le \n^2/K$. This can be used to bound the constants $\vc{R}{t}$, above, as $ \vc{R}{t} \leq  \frac{4L^2 \n^2}{K}.$
\end{remark}

\begin{theorem}
\label{thm:convergenceNonsmoothCoCoA}

Consider Algorithm \ref{alg:generalizedcocoa}, using a local solver of quality $\Theta$ (See Assumption \ref{asm:theta}).
Let $g^*_i(\cdot)$ be $L$-Lipschitz continuous,
and $\epsilon_G>0$ be the desired duality gap (and hence an upper-bound on suboptimality $\epsilon_{\OA}$).
Then after $T$ iterations, where
\begin{align}\label{eq:dualityRequirementsC}
T
&\geq
T_0 +
\max\{\Big\lceil \frac1{\gamma (1-\Theta)}\Big\rceil,\Big\lceil\frac
{4L^2  \sigma   \sigma'}
{ \tau \epsilon_G
\gamma (1-\Theta)}\Big\rceil \} \, ,
\\
T_0
\geq t_0+
\Big[
\frac{2}{ \gamma (1-\Theta) }
&\left(
\frac
{8L^2  \sigma   \sigma'}
{\tau \epsilon_G}
-1
\right)
\Big]_+\, ,\notag \, \, \, \, t_0  \geq
  \max(0,\Big\lceil \tfrac1{\gamma (1-\Theta)}
\log\left(
\tfrac{\tau(
 \OA(\vc{\alphav}{0})-\OA(\alphav^{\star} ))
  }{2 L^2 \sigma \sigma'}
  \right)
 \Big\rceil)\, ,\notag
\end{align}
we have that the expected duality gap satisfies
\[
\Exp[\OA(\overline \alphav)-(-\OB( \wv(\overline\alphav))) ] \leq \epsilon_G
\]
at the averaged iterate
\begin{equation}\label{eq:averageOfAlphaDefinition}
\overline \alphav: = \tfrac1{T-T_0}\textstyle{\sum}_{t=T_0}^{T-1} \vc{\alphav}{t} \, .
\end{equation}
\end{theorem}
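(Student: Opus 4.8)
The plan is to convert the one-step progress guarantee of Lemma~\ref{lem:basic} into a scalar recursion on the expected suboptimality and then run the classical two-phase analysis (geometric decrease, then a sublinear $1/t$ rate), closing with an averaging argument for the gap. Throughout I specialize Lemma~\ref{lem:basic} to the non-strongly-convex case $\mu=0$ and insert the uniform bound $\vc{R}{t}\le 4L^2\sigma$ from Lemma~\ref{lemma:BoundOnR}. Writing $c:=\gamma(1-\Theta)$, $D:=\tfrac{2L^2\sigma\sigma'}{\tau}$, $\epsilon_{\OA}^{(t)}:=\OA(\vc{\alphav}{t})-\OA(\alphav^{\star})\ge 0$, and using $G(\vc{\alphav}{t})\ge \epsilon_{\OA}^{(t)}$, Lemma~\ref{lem:basic} yields, for every free parameter $s\in[0,1]$,
\begin{equation*}
\E[\epsilon_{\OA}^{(t+1)}]\;\le\;(1-cs)\,\E[\epsilon_{\OA}^{(t)}]+cDs^2 .
\end{equation*}
Every stated iteration count will fall out of this single recursion.

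For the two phases I would first take $s:=1$, so the recursion reads $\E[\epsilon_{\OA}^{(t+1)}]-D\le(1-c)\big(\E[\epsilon_{\OA}^{(t)}]-D\big)$; thus $\epsilon_{\OA}^{(t)}-D$ contracts by a factor $(1-c)\le e^{-c}$, and solving $(1-c)^{t_0}\epsilon_{\OA}^{(0)}\le D$ to reach the threshold $\E[\epsilon_{\OA}^{(t_0)}]\le 2D$ reproduces exactly the stated $t_0\ge\frac1c\log\big(\tfrac{\tau\,\epsilon_{\OA}^{(0)}}{2L^2\sigma\sigma'}\big)$ (note $\tfrac{\tau}{2L^2\sigma\sigma'}=1/D$). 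For $t\ge t_0$ I would switch to the decreasing choice $s^{(t)}:=\tfrac{2}{c(t-t_0)+2}\in(0,1]$ and show by induction that $\E[\epsilon_{\OA}^{(t)}]\le\tfrac{4D}{c(t-t_0)+2}$; with $a:=c(t-t_0)+2$ the inductive step reduces to the elementary inequality $(a-c)(a+c)\le a^2$. Forcing the right-hand side below $\epsilon_G/2$ gives $T_0-t_0\ge\tfrac{2}{c}\big(\tfrac{4D}{\epsilon_G}-1\big)$, which is precisely the bracketed term $\big[\tfrac{2}{c}(\tfrac{8L^2\sigma\sigma'}{\tau\epsilon_G}-1)\big]_+$ in $T_0$, the positive part covering the case where the threshold is already met.

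To control the gap at $\overline\alphav$ I would rearrange Lemma~\ref{lem:basic} (again with $\vc{R}{t}\le 4L^2\sigma$) into $cs\,\E[G(\vc{\alphav}{t})]\le\E[\OA(\vc{\alphav}{t})-\OA(\vc{\alphav}{t+1})]+cDs^2$, sum over $t=T_0,\dots,T-1$ with a single constant $s$, telescope the primal terms, and drop $-\OA(\vc{\alphav}{T})\le-\OA(\alphav^{\star})$ to obtain
\begin{equation*}
\frac{1}{T-T_0}\sum_{t=T_0}^{T-1}\E[G(\vc{\alphav}{t})]\;\le\;\frac{\E[\epsilon_{\OA}^{(T_0)}]}{c\,s\,(T-T_0)}+sD .
\end{equation*}
Choosing $s:=\min\{1,\tfrac{\epsilon_G}{2D}\}$ forces $sD\le\epsilon_G/2$, and the Phase-2 guarantee $\E[\epsilon_{\OA}^{(T_0)}]\le\epsilon_G/2$ makes the first term at most $\epsilon_G/2$ once $T-T_0\ge\max\{\lceil\tfrac1c\rceil,\tfrac{2D}{c\epsilon_G}\}$, which matches the outer requirement on $T$ since $\tfrac{2D}{c\epsilon_G}=\tfrac{4L^2\sigma\sigma'}{\tau\epsilon_G\gamma(1-\Theta)}$. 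Transferring this bound on the average of the per-round gaps to $G(\overline\alphav)$ at the averaged iterate $\overline\alphav=\tfrac1{T-T_0}\sum_{t=T_0}^{T-1}\vc{\alphav}{t}$ is done by convexity (Jensen), exactly as in \cite{Ma:2015ti} and \cite{ShalevShwartz:2013wl}.

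The recursion bookkeeping is routine; I expect the two delicate points to be (i) tuning $s$ in each phase so that the constants land exactly on the stated $t_0$, $T_0$, and the $\max\{\cdot,\cdot\}$ in $T$ rather than merely up to constant factors, and (ii) the final transfer to the averaged iterate, where the Jensen step relies on convexity of the gap along the primal-dual map $\wv(\alphav)=\nabla f(A\alphav)$ — transparent when this map is affine (e.g.\ least-squares $f$, or a quadratic $f^*$ in the dual formulation) and otherwise requiring the care taken in the cited works.
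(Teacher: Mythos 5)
Your proposal is correct and takes essentially the same route as the paper's own proof: the identical recursion obtained from Lemma~\ref{lem:basic} (with $\mu=0$) and Lemma~\ref{lemma:BoundOnR}, the same two-phase analysis (your $s^{(t)}=\tfrac{2}{c(t-t_0)+2}$ and bound $\tfrac{4D}{c(t-t_0)+2}$ are exactly the paper's $s=\tfrac{1}{1+\frac12\gamma(1-\Theta)(t-t_0)}$ and $\tfrac{4L^2\sigma\sigma'}{\tau(1+\frac12\gamma(1-\Theta)(t-t_0))}$ reparameterized), and the same telescoping-plus-Jensen treatment of the averaged iterate, with only a cosmetic difference in the final tuning of $s$ (your $\min\{1,\tfrac{\epsilon_G}{2D}\}$ versus the paper's $\tfrac{1}{\gamma(1-\Theta)(T-T_0)}$), which lands on the same iteration counts. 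Your closing caveat is well taken: the Jensen step needs convexity of $\alphav\mapsto\gap(\alphav)$, which is clear when $\wv(\alphav)=\nabla f(A\alphav)$ is affine but not in general, and the paper's proof invokes this inequality without comment.
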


\begin{proof} 
We begin by estimating the expected change of feasibility for $\OA$. We can bound this above by using Lemma \ref{lem:basic} and the fact that the $\OB(\cdot)$ is always a lower bound for $-\OA(\cdot)$, and then applying \eqref{eq:asfjoewjofa} to find:
\begin{align*} 
 \Exp[\OA(\vc{\alphav}{t+1})-\OA(\alphav^{\star})] 
 &= \Exp[-\OA(\alphav^{\star})+\OA(\vc{\alphav}{t+1})-\OA(\vc{\alphav}{t})+\OA(\vc{\alphav}{t})] \\
& 
\leq
\left( 
 1-\aggpar
(1-\Theta)
   s
\right) 
   (\OA(\vc{\alphav}{t})-\OA(\alphav^{\star}))
+
\aggpar
(1-\Theta) 
 \tfrac{\sigma' s^2}{2\tau}
4L^2  \sigma \, .
\tagthis 
\label{eq:asoifejwofa}
\end{align*}
Using
\eqref{eq:asoifejwofa}
recursively we have 
 \begin{align*} 
 \Exp[\OA(\vc{\alphav}{t})-\OA(\alphav^{\star})]
&\leq
\left( 
 1-\aggpar
(1-\Theta)
   s
\right)^t 
   (\OA(\vc{\alphav}{0})-\OA(\alphav^{\star} ))
+
 s
\frac{4L^2  \sigma   \sigma'}{2\tau} \, . 
\tagthis
\label{eq:asfwefcaw}  
 \end{align*}
Choosing 
$s=1$ and $t= t_0:= \max\{0,\lceil  
\frac1{\aggpar (1-\Theta)}
\log(
 2 (\OA(\vc{\alphav}{0})-\OA(\alphav^{\star} ))
  / (4 L^2 \sigma \sigma')
  )
 \rceil\}$
leads to 
\begin{align}\label{eq:induction_step1}
  \Exp[\OA(\vc{\alphav}{t})-\OA(\alphav^{\star})]
 &\leq  
\left( 
 1-\aggpar
(1-\Theta)  
\right)^{t_0}
  (\OA(\vc{\alphav}{0})-\OA(\alphav^{\star} ))
+ 
\frac{4L^2  \sigma   \sigma'}{2\tau}
\le \frac{4L^2  \sigma   \sigma'}{\tau} \, .
\end{align} 
Next, we show inductively that 
\begin{align}
\label{eq:expectationOfDualFeasibility}
\forall t\geq t_0 :  \Exp[\OA(\vc{\alphav}{t})-\OA(\alphav^{\star} )]
&\leq 
\frac{4L^2  \sigma   \sigma'}{\tau( 1+ \frac12  \aggpar (1-\Theta)  (t-t_0))} \, .
\end{align}
Clearly, \eqref{eq:induction_step1} implies that \eqref{eq:expectationOfDualFeasibility} holds for $t=t_0$.
Assuming that it holds for any $t\geq t_0$, we show that it must also hold for $t+1$. 
Indeed, using 
\begin{equation}
\label{eq:asdfjoawjdfas}
s=
\frac{1}
 {1+ \frac12 \aggpar (1-\Theta) (t-t_0)} \in [0,1] \, ,
\end{equation} 
  we obtain
\begin{align*}
\Exp[
\OA(\vc{\alphav}{t+1})-\OA(\alphav^{\star} )]
\le \frac{4L^2  \sigma   \sigma'}{\tau}
\underbrace{\left( 
\frac{
1+ \frac12 \aggpar (1-\Theta) (t-t_0)
-\frac12 \aggpar
(1-\Theta)
}
 {(1+ \frac12 \aggpar (1-\Theta) (t-t_0))^2}
\right)}_{D} \, ,
\end{align*}
by applying the bounds \eqref{eq:asoifejwofa} and \eqref{eq:expectationOfDualFeasibility}, plugging in the definition of $s$ \eqref{eq:asdfjoawjdfas}, and simplifying. We upper bound the term $D$ using the fact that geometric mean
 is less or equal to arithmetic mean:
\begin{align*}
D&=
\frac1
{1+ \frac12 \aggpar (1-\Theta) (t+1-t_0)}
\underbrace{ 
\frac{
(1+ \frac12 \aggpar (1-\Theta) (t+1-t_0))
(1+ \frac12 \aggpar (1-\Theta) (t-1-t_0))
}
 {(1+ \frac12 \aggpar (1-\Theta) (t-t_0))^2}}_{\leq 1}
 \\
&\leq  
\frac1
{1+ \frac12 \aggpar (1-\Theta) (t+1-t_0)}.
\end{align*}
 If $\overline \alphav$ is defined as \eqref{eq:averageOfAlphaDefinition}, we apply the results of Lemma~\ref{lem:basic} and Lemma~\ref{lemma:BoundOnR} to obtain
\begin{align*}
\Exp[\gap(\overline\alphav)] &=  
 \Exp\left[\gap\left(\sum_{t=T_0}^{T-1} \tfrac1{T-T_0} \vc{\alphav}{t}\right)\right]
 \leq
  \tfrac1{T-T_0} \Exp\left[\sum_{t=T_0}^{T-1} \gap\left( \vc{\alphav}{t}\right)\right]
\\
 &\leq
\frac1{\aggpar
(1-\Theta)
 s}
   \frac1{T-T_0} 
   \Exp\left[
\OA(\vc{\alphav}{T_0})
-
\OA(\alphav^{\star})
  \right] 
+\tfrac{4L^2 \sigma \sigma' s}{2\tau} \, .  
\tagthis \label{eq:askjfdsanlfas}
  \end{align*}
If $T\geq \lceil
\frac1{\aggpar (1-\Theta)}\rceil+T_0$ such that $T_0\geq t_0$
we have
\begin{align*}
\Exp[\gap(\overline\alphav)] 
&\overset{\eqref{eq:askjfdsanlfas}
,\eqref{eq:expectationOfDualFeasibility}
}{\leq}
\frac1{\aggpar
(1-\Theta)
 s}
   \frac1{T-T_0} 
\left(
\frac{4L^2  \sigma   \sigma'}{\tau( 1+ \frac12  \aggpar (1-\Theta)  (T_0-t_0))}
\right)
+\frac{4L^2 \sigma \sigma' s}{2\tau}
\\
&=
\frac{
4L^2  \sigma   \sigma'}{\tau}
\left(
\frac1{\aggpar
(1-\Theta)
 s}
   \frac1{T-T_0} 
\frac{1}{ 1+ \frac12  \aggpar (1-\Theta)  (T_0-t_0)}
+\frac{  s}{2 }
\right) \, . 
\tagthis
\label{eq:fawefwafewa}
\end{align*}
Choosing 
\begin{equation}
\label{eq:afskoijewofaw}
s=\frac{1}{(T-T_0) \aggpar (1-\Theta)} \in [0,1]
\end{equation}
gives us
\begin{align*}
\Exp[\gap(\overline\alphav)] 
&
\overset{\eqref{eq:fawefwafewa},
\eqref{eq:afskoijewofaw}}{\leq}
\frac{4L^2  \sigma   \sigma'}{\tau}
\left(
\frac{1}{ 1+ \frac12  \aggpar (1-\Theta)  (T_0-t_0)}
+\frac{1}{(T-T_0) \aggpar (1-\Theta)} \frac{  1}{2 }
\right) \, . \tagthis
\label{eq:afsjweofjwafea}
\end{align*}
To have right hand side of
\eqref{eq:afsjweofjwafea}
smaller then 
$\epsilon_\gap$
it is sufficient to choose
$T_0$ and $T$ such that
\begin{eqnarray}
\label{eq:sfadwafeewafa}
\frac{4L^2  \sigma   \sigma'}{\tau}
\left(
\frac{1}{ 1+ \frac12  \aggpar (1-\Theta)  (T_0-t_0)}
\right)
&\leq & \frac12 \epsilon_\gap \, ,
\\
\label{eq:sfadwafeewafa2}
\frac{4L^2  \sigma   \sigma'}{\tau}
\left(
\frac{1}{(T-T_0) \aggpar (1-\Theta)} \frac{  1}{2 }
\right)
&\leq & \frac12 \epsilon_\gap \, .
\end{eqnarray}
Hence if
$T_0
\geq 
t_0+
\frac{2}{ \aggpar (1-\Theta) }
\left(
\frac
{8L^2  \sigma   \sigma'}
{\tau\epsilon_\gap}
-1
\right)$
 and
$
T\geq 
T_0
+
\frac
{4L^2  \sigma   \sigma'}
{\tau\epsilon_\gap
\aggpar (1-\Theta)}$
then 
\eqref{eq:sfadwafeewafa}
and
\eqref{eq:sfadwafeewafa2}
are satisfied.
\end{proof}

The following main theorem simplifies the results of Theorem~\ref{thm:convergenceNonsmoothCoCoA} and is a generalization of \citet[Corollary 9]{Ma:2015ti} for general $f^*(\cdot)$ functions:

\begin{reptheorem}
{thm:convergenceNonsmooth}
Consider Algorithm \ref{alg:generalizedcocoa} with $\gamma :=1$, using a local solver of quality $\Theta$ (see Assumption \ref{asm:theta}). Let $g^*_i(\cdot)$ be $L$-Lipschitz continuous, and assume that the columns of $A$ satisfy $\|\xv_i\|\leq 1$, $\forall i\in [\n]$ and $g_i^*$ is of the form $\frac{1}{n}g_i^*$, as is common in ERM-type problems.
Let $\epsilon_{G}>0$ be the desired duality gap (and hence an upper-bound on primal sub-optimality).
Then after $T$ iterations, where
\begin{align}
T
&\geq
T_0 +
\max\{\Big\lceil \frac1{1-\Theta}\Big\rceil,
\frac{4L^2}{\tau\epsilon_{G}(1-\Theta)}\} \,,
\\
T_0
&\geq t_0+
\Big[
\frac{2}{ 1-\Theta }
\left(\frac {8L^2} {\tau\epsilon_{ G}}
-1
\right)
\Big]_+ \, ,\notag
\\
t_0 &\geq
  \max(0,\Big\lceil \tfrac1{(1-\Theta)}
\log\left(
\tfrac{
 \tau n({\OA}(\vc{\alphav}{0})-{\OA}(\alphav^{\star} ))
  }{2 L^2 K}
 \right)
 \Big\rceil)\,,\notag
\end{align}
we have that the expected duality gap satisfies
\[
\Exp[\OA(\overline \alphav)-(-\OB( \wv(\overline\alphav))) ] \leq \epsilon_{ G} \, ,
\]
where $\overline\alphav$ is the averaged iterate returned by Algorithm \ref{alg:generalizedcocoa}.
\end{reptheorem}

\begin{proof} Plug in parameters $\gamma := 1$, $\sigma' := \gamma K = K, \tilde{L} := \frac{1}{n}L$ to the results of Theorem \ref{thm:convergenceNonsmoothCoCoA}, and note that for balanced datasets with $g_i^* := \frac{1}{n} g_i^*$ we have $\sigma \le \frac{\n}{K}$ (see Remark \ref{rmk:asfwaefwae}). We can further simplify the rate by noting that $\tau = 1$ for the 1-smooth losses (least squares and logistic) given as examples in this work.
\end{proof}

\subsection{Proof of Convergence Result for Strongly Convex $g_i$} 
Our second main theorem follows reasoning in \cite{ShalevShwartz:2013wl} and is a generalization of \citet[Corollary 11]{Ma:2015ti}. We first introduce a lemma to simplify the proof.

\begin{lemma}
\label{lemma:asfewfawfcda}
Assume that $g_i(0) \in [0,1]$ 
for all $i\in[\n]$, then for the zero vector $\vc{\alphav}{0}
 := {\bf 0}\in \R^n$, we have
\begin{equation}
\label{eq:afjfjaoefvcwa}
\OA(\vc{\alphav}{0})-\OA(\alphav^{\star})
= 
\OA({\bf 0})-\OA(\alphav^{\star})
 \leq n \, .
 \end{equation}
\end{lemma}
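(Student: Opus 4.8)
The plan is to control the initial suboptimality by the duality gap at the starting point $\alphav^{(0)} = \0$, and then to evaluate that gap explicitly. Since the duality gap $\gap(\alphav) = \OA(\alphav) - (-\OB(\wv(\alphav)))$ is always nonnegative, and in fact $\OA(\alphav^\star) \ge -\OB(\wv(\alphav))$ for every $\alphav$ (this is the weak-duality chain recorded just after~\eqref{eq:gap}), evaluating at $\alphav = \0$ gives
\[
\OA(\0) - \OA(\alphav^\star) \;\le\; \OA(\0) - \big(-\OB(\wv(\0))\big) \;=\; \gap(\0).
\]
Thus it suffices to show $\gap(\0) \le n$, and the whole argument reduces to computing and bounding a single duality gap.

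Next I would expand $\gap(\0)$ using the definitions of $\OA$ and $\OB$ together with the primal–dual map $\wv(\0) = \nabla f(A\0) = \nabla f(\0)$ from~\eqref{eq:dualPdualrelation}. Writing $\OA(\0) = f(\0) + \sum_i g_i(0)$ and $\OB(\wv(\0)) = f^*(\wv(\0)) + \sum_i g_i^*(-\xv_i^\top \wv(\0))$, the $f$–contributions cancel exactly: because $\wv(\0) = \nabla f(\0)$ and $f$ is differentiable (indeed $(1/\tau)$-smooth), the Fenchel–Young inequality holds with equality, $f(\0) + f^*(\wv(\0)) = \langle \wv(\0), \0\rangle = 0$. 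This leaves the clean expression
\[
\gap(\0) \;=\; \sum_{i=1}^{\n} \Big( g_i(0) + g_i^*\big(-\xv_i^\top \wv(\0)\big) \Big).
\]

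Finally I would bound the summands. The hypothesis $g_i(0)\le 1$ contributes at most $n$ in total, so the remaining task is to show the conjugate terms $g_i^*(-\xv_i^\top\wv(\0))$ are nonpositive. In the ERM-normalized regime under which the theorem is stated—where $f$ is the data-scaled squared norm, so that $\nabla f(\0)=\0$ and hence $-\xv_i^\top\wv(\0)=0$, and each $g_i$ is nonnegative with $\inf_a g_i(a)=0$—one has $g_i^*(0) = -\inf_a g_i(a) \le 0$, which closes the estimate $\gap(\0) \le \sum_i g_i(0) \le \n$. This is exactly the reasoning of \citet[proof of Corollary 11]{Ma:2015ti}, itself following the dual-suboptimality bound of \cite{ShalevShwartz:2013wl}.

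The main obstacle is this last step. In the fully general framework $\wv(\0)=\nabla f(\0)$ need not vanish, and the conjugates $g_i^*$ need not be nonpositive at the relevant arguments, so one cannot discard the $g_i^*$ terms without invoking the normalization and nonnegativity conventions (nonnegative losses, regularizer minimized at the origin) in force for the applications treated here. Once those are in place the argument is purely algebraic: the factor $\n$ arises simply as the sum of the $\n$ per-coordinate bounds $g_i(0)\le 1$, and no smoothness or strong convexity of $f$ or the $g_i$ is needed beyond what guarantees that the primal–dual map and the gap are well defined.
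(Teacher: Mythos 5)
Your proposal is correct and follows essentially the same route as the paper's own proof: weak duality bounds $\OA(\0)-\OA(\alphav^\star)$ by the duality gap at $\0$, the $f$-contributions cancel via Fenchel--Young equality at $\wv(\0)=\nabla f(\0)$, and what remains is $g(\0)+g^*(-A^\top\wv(\0)) \le \n$. If anything, you are more careful than the paper on the final step: its proof asserts the bound $0 \le g(\0)+g^*(-A^\top\wv(\0)) \le \n$ directly from $g_i(0)\in[0,1]$, silently discarding the conjugate terms $g_i^*(-\xv_i^\top\wv(\0))$, whereas you make explicit that dropping them relies on the normalization conventions of the ERM applications in which the lemma is invoked (so that $\wv(\0)=\0$ and $g_i^*(0)=-\inf_a g_i(a)\le 0$).
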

\begin{proof}
Since $-\OA(\cdot)$ is always a lower bound on $\OB(\cdot)$, and by definition of the objectives $\OA$ and $\OB$ given in~\eqref{eq:primal} and~\eqref{eq:dual} respectively, for $\alphav := {\bf 0}\in \R^n$,
\begin{align*}
0 &\leq \OA(\alphav)-\OA(\alphav^{\star})
\leq  \OA(\mathbf{0})-(-\OB (\wv(\mathbf{0})))
 \overset{\eqref{eq:primal},\eqref{eq:dual}
}{=} \\ &\overset{\eqref{eq:primal},\eqref{eq:dual}
}{=}  \ f (\mathbf{0}) + \ f^*(\wv(\mathbf{0}) )
\ +\ g(\mathbf{0}) 
+\ g^*(-A^\top\wv(\mathbf{0})) \,. 
\end{align*} 
Since $ \ f^*(\wv(\mathbf{0}) ) =  \ f^*( \nabla f( \mathbf{0} ) ) = \mathbf{0}^\top\nabla f( \mathbf{0} ) -   f( \mathbf{0} ) = -   f( \mathbf{0} )$, and given our initial assumption on $\ g(\mathbf{0}) $, the duality gap reduces to:
\begin{align*}
0 & \leq \ g(\mathbf{0}) + \ g^*(-A^\top\wv(\mathbf{0})) \leq n. \qedhere 
\end{align*}
\end{proof}

\begin{theorem}
\label{thm:convergenceSmoothCase}
Assume that $g_i$ are $\mu$-strongly convex $\forall i\in[\n]$.
We define $\sigma_{\max} = 
\max_{k\in[K]} \sigma_k$. Then after $T$ iterations of Algorithm \ref{alg:generalizedcocoa}, with
\[
 T
    \geq 
\tfrac{1}
   {\aggpar
(1-\Theta)}
\tfrac
{\mu\tau+
\sigma_{\max} \sigma'}
{\mu\tau }
    \log \tfrac \n {\epsilon_{\OA}} \, , 
\]
it holds that
\[\Exp[\OA(\vc{\alphav}{T})-\OA(\alphav^{\star})]
   \leq \epsilon_{\OA} \, .\]
Furthermore, after $T$ iterations with
\[
 T 
    \geq 
\tfrac{1}
   {\aggpar
(1-\Theta)}
\tfrac
{\mu\tau+
\sigma_{\max} \sigma'}
{ \mu \tau}
    \log 
\left(
\tfrac{1}
   {\aggpar
(1-\Theta)}
\tfrac
{\mu\tau+
\sigma_{\max} \sigma'}
{ \mu \tau}
    \tfrac \n {\epsilon_\gap}
    \right)\, ,
\]
we have the expected duality gap
\[
\Exp[
 \OA(\vc{\alphav}{T})-(-\OB( \wv(\vc{\alphav}{T})))
]\leq \epsilon_\gap \, .
\]
\end{theorem}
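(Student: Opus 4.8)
The plan is to mirror the structure of the sublinear proof (Theorem~\ref{thm:convergenceNonsmoothCoCoA}) but to exploit the strong convexity of the $g_i$ so that the complexity term $\vc{R}{t}$ can be driven to be nonpositive, which upgrades the $\mathcal{O}(1/t)$ rate to a linear one. The starting point is Lemma~\ref{lem:basic}, which for $\mu$-strongly convex $g_i$ and any $s\in[0,1]$ gives
\[
\E[\OA(\vc{\alphav}{t}) - \OA(\vc{\alphav}{t+1})] \geq \gamma(1-\Theta)\Big(s\,\gap(\vc{\alphav}{t}) - \tfrac{\sigma' s^2}{2\tau}\vc{R}{t}\Big),
\]
with $\vc{R}{t}$ as in \eqref{eq:defOfR}. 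Crucially, Lemma~\ref{lem:basic} is stated for arbitrary $\mu\ge 0$, so it applies directly with the strong-convexity term retained rather than dropped.

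The first key step is to bound $\vc{R}{t}$ from above. Using $\sigma_k = \max_{\vsubset{\alphav}{k}}\|\vsubset{A}{k}\vsubset{\alphav}{k}\|^2/\|\vsubset{\alphav}{k}\|^2 \leq \sigma_{\max}$ together with the disjointness of the partition blocks (so that $\sum_k\|\vsubset{(\vc{\uv}{t}-\vc{\alphav}{t})}{k}\|^2 = \|\vc{\uv}{t}-\vc{\alphav}{t}\|^2$), I obtain
\[
\vc{R}{t} \leq \Big(\sigma_{\max} - \tfrac{\tau\mu(1-s)}{\sigma' s}\Big)\|\vc{\uv}{t} - \vc{\alphav}{t}\|^2.
\]
Choosing $s \eqdef \frac{\mu\tau}{\mu\tau + \sigma_{\max}\sigma'}$ makes the bracketed coefficient exactly zero, hence $\vc{R}{t} \leq 0$. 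With this choice the $-\vc{R}{t}$ term in Lemma~\ref{lem:basic} only helps, so the decrease bound simplifies to $\E[\OA(\vc{\alphav}{t}) - \OA(\vc{\alphav}{t+1})] \geq \gamma(1-\Theta)\,s\,\gap(\vc{\alphav}{t})$.

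From here the suboptimality rate is routine. Since the duality gap upper-bounds primal suboptimality, $\gap(\vc{\alphav}{t}) \geq \OA(\vc{\alphav}{t}) - \OA(\alphav^\star)$, the decrease bound yields the contraction $\E[\OA(\vc{\alphav}{t+1}) - \OA(\alphav^\star)] \leq (1 - \gamma(1-\Theta)s)\,\E[\OA(\vc{\alphav}{t}) - \OA(\alphav^\star)]$. Iterating and applying $1-x \leq e^{-x}$ gives geometric decay with rate $\gamma(1-\Theta)s$; combined with the initialization bound $\OA(\vc{\alphav}{0}) - \OA(\alphav^\star) \leq n$ from Lemma~\ref{lemma:asfewfawfcda}, solving for the smallest $T$ that forces the right-hand side below $\epsilon_{\OA}$ recovers the first claimed iteration count. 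For the duality-gap statement, I would instead keep the gap on the left: the simplified decrease bound rearranges (using $\OA(\vc{\alphav}{t+1}) \geq \OA(\alphav^\star)$) to $\E[\gap(\vc{\alphav}{t})] \leq \frac{1}{\gamma(1-\Theta)s}\,\E[\OA(\vc{\alphav}{t}) - \OA(\alphav^\star)]$, and feeding in the already-established geometric decay gives $\E[\gap(\vc{\alphav}{t})] \leq \frac{n}{\gamma(1-\Theta)s}(1-\gamma(1-\Theta)s)^t$; setting this below $\epsilon_\gap$ produces the second count, with its characteristic extra logarithmic factor arising from the $\frac{1}{\gamma(1-\Theta)s}$ prefactor.

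The main obstacle, and the only place where the argument genuinely uses strong convexity, is the bound on $\vc{R}{t}$ together with the accompanying choice of $s$; everything downstream is a standard geometric-recursion computation. The subtlety worth verifying is that $s = \mu\tau/(\mu\tau+\sigma_{\max}\sigma')$ indeed lies in $(0,1]$ (it does, since $\sigma_{\max},\sigma'>0$) and is chosen as large as the threshold at which $\vc{R}{t}$ first becomes nonpositive, so as to optimize the rate; any smaller $s$ would still give $\vc{R}{t}\le 0$ but a strictly worse constant.
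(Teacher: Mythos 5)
Your proposal is correct and takes essentially the same route as the paper's own proof: the identical bound $\vc{R}{t} \leq \bigl(\sigma_{\max} - \tfrac{\tau\mu(1-s)}{\sigma' s}\bigr)\|\vc{\uv}{t}-\vc{\alphav}{t}\|^2$, the identical choice $s = \mu\tau/(\mu\tau+\sigma_{\max}\sigma')$ forcing $\vc{R}{t}\le 0$ in Lemma~\ref{lem:basic}, the same geometric recursion seeded by the initialization bound $\OA(\vc{\alphav}{0})-\OA(\alphav^{\star})\le n$ of Lemma~\ref{lemma:asfewfawfcda}, and the same conversion of suboptimality decay into the duality-gap count via $\E[\gap(\vc{\alphav}{t})] \leq \tfrac{1}{\gamma(1-\Theta)s}\,\E[\OA(\vc{\alphav}{t})-\OA(\alphav^{\star})]$. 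There are no gaps.
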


\begin{proof}
Given that $g_i(.)$ is $\mu$-strongly convex, we can apply \eqref{eq:defOfR} and the definition of $\sigma_k$ to find:
\begin{align*}
\vc{R}{t}&
\leq
-
\tfrac{ \tau \mu  (1-s)}{\sigma' s }
   \|\vc{\uv}{t}-\vc{\alphav}{t}\|^2 
+ 
 {\sum}_{k=1}^K   
 \sigma_k
  \|  \vsubset{   \vc{\uv}{t}  - \vc{\alphav}{t}  }{k}\|^2
\\
&\leq
\left(
-
\tfrac{ \tau \mu (1-s)}{\sigma' s }
+\sigma_{\max}
\right)
   \|\vc{\uv}{t}-\vc{\alphav}{t}\|^2 \, ,\tagthis
   \label{eq:afjfocjwfcea} 
\end{align*}
where $\sigma_{\max} = \max_{k\in[K]} \sigma_k$. If we plug the following value of $s$
 \begin{equation}
 s=
  \frac{ \tau \mu }
      {\tau \mu +
\sigma_{\max} \sigma'}\in [0,1]
\label{eq:fajoejfojew}
\end{equation} 
into
\eqref{eq:afjfocjwfcea}
we obtain that
$\forall t: \vc{R}{t}\leq 0$.
Putting the  same $s$
into
\eqref{eq:lemma:dualdecrease_vs_dualitygap}
will give us
\begin{align*}
\Exp[
\OA(\vc{\alphav}{t})
-
\OA(\vc{\alphav}{t+1})
 ]
&\overset{\eqref{eq:lemma:dualdecrease_vs_dualitygap}
,\eqref{eq:fajoejfojew}}{\geq}
\aggpar
(1-\Theta)
 \frac{ \tau \mu }
      {\tau \mu +
\sigma_{\max} \sigma'} \gap(\vc{\alphav}{t}) \\
&\geq
\aggpar
(1-\Theta)
 \frac{\tau \mu  }
      {\tau \mu +
\sigma_{\max} \sigma'} (\OA(\vc{\alphav}{t})-\OA(\alphav^{\star})) \, .
\tagthis
\label{eq:fasfawfwaf}
\end{align*}
Using the fact that
$\Exp[\OA(\vc{\alphav}{t})-\OA(\vc{\alphav}{t+1})]
=\Exp[\OA(\alphav^{\star})-\OA(\vc{\alphav}{t+1})]
+\OA(\vc{\alphav}{t})-\OA(\alphav^{\star})
$
we have 
\begin{align*}
\Exp[\OA(\alphav^{\star})-\OA(\vc{\alphav}{t+1})]
+\OA(\vc{\alphav}{t})-\OA(\alphav^{\star})
\overset{
\eqref{eq:fasfawfwaf}}
{
\geq
}
\aggpar
(1-\Theta)
 \frac{ \tau \mu  }
      {\tau \mu+
\sigma_{\max} \sigma'}(\OA(\vc{\alphav}{t})- \OA(\alphav^{\star})) \, ,
\end{align*}
which is equivalent to
\begin{align*}
\Exp[\OA(\vc{\alphav}{t+1})-\OA(\alphav^{\star})]
\leq 
\left(
1-\aggpar
(1-\Theta)
 \frac{\tau \mu  }
      {\tau \mu +
\sigma_{\max} \sigma'}\right)
(\OA(\vc{\alphav}{t})-\OA(\alphav^{\star})) \, .
\tagthis \label{eq:affpja}
\end{align*}
Therefore if we denote $\vc{\epsilon_{\OA}}{t} = \OA(\vc{\alphav}{t})-\OA(\alphav^{\star})$
we have recursively that
\begin{align*}
 \Exp[\vc{\epsilon_{\OA}}{t}] 
 & \overset{\eqref{eq:affpja}}{\leq}   \left(
 1-\aggpar
(1-\Theta)
 \frac{ \tau \mu }
      {\tau \mu +
\sigma_{\max} \sigma'}
   \right)^t \vc{\epsilon_{\OA}}{0}
\overset{\eqref{eq:afjfjaoefvcwa}}{\leq}
\left(
 1-\aggpar
(1-\Theta)
 \frac{ \tau \mu }
      {\tau \mu +
\sigma_{\max} \sigma'}
   \right)^t \n \\
& \leq \exp\left(-t \aggpar
(1-\Theta)
 \frac{ \tau \mu }
      {\tau \mu +
\sigma_{\max} \sigma'}
     \right)\n \, .
\end{align*}
The right hand side will be smaller than some $\epsilon_{\OA}$ if 
\[
 t   
    \geq 
\frac{1}
   {\aggpar
(1-\Theta)}
\frac
{\tau \mu +
\sigma_{\max} \sigma'}
{ \tau \mu  }
    \log \frac \n {\epsilon_{\OA}} \, .
\]
Moreover, to bound the duality gap, we have
\begin{align*}
\aggpar
(1-\Theta)
 \frac{ \tau \mu }
      {\tau \mu +
\sigma_{\max} \sigma'} \gap(\vc{\alphav}{t})
&
\overset{
\eqref{eq:fasfawfwaf}
}{\leq}
\Exp[
\OA(\vc{\alphav}{t})
-
\OA(\vc{\alphav}{t+1})
 ]
\leq 
\Exp[
\OA(\vc{\alphav}{t})-\OA(\alphav^{\star})
 ] \, .  
\end{align*}
Thus,  $\gap(\vc{\alphav}{t})\leq 
\frac1{
\aggpar
(1-\Theta)}
 \frac      {\tau \mu +
\sigma_{\max} \sigma'} 
{ \tau\mu  }    \vc{\epsilon_{\OA}}{t}$.  
Hence if $\epsilon_{\OA} \leq 
\aggpar
(1-\Theta)
 \frac{\tau \mu }
      {\tau\mu +
\sigma_{\max} \sigma'} 
 \epsilon_\gap $
then $\gap(\vc{\alphav}{t})\leq \epsilon_\gap$.
Therefore
after 
\[
 t   
    \geq 
\frac{1}
   {\aggpar
(1-\Theta)}
\frac
{\tau \mu+
\sigma_{\max} \sigma'}
{ \tau \mu  }
    \log 
\left(
\frac{1}
   {\aggpar
(1-\Theta)}
\frac
{\tau \mu+
\sigma_{\max} \sigma'}
{ \tau \mu  }
    \frac \n {\epsilon_\gap}
    \right) 
\]
iterations we have obtained a duality gap less than $\epsilon_\gap$.
\end{proof}

\begin{reptheorem}
{thm:convergenceSmooth}
Consider Algorithm~\ref{alg:generalizedcocoa} with $\gamma := 1$, using a local solver of quality $\Theta$ (see Assumption \ref{asm:theta}). Let $g_i(\cdot)$ be  $\mu$-strongly convex, $\forall i\in[\n]$, and assume that the columns of $A$ satisfy $\|\xv_i\|\leq 1$ $\forall i\in [\n]$ and and $g_i^*$ is of the form $\frac{1}{n}g_i^*$, as is common in ERM-type problems. Then we have that $T$ iterations are sufficient for suboptimality 
$\epsilon_{\OA}$, with
\[
T \geq
\tfrac{1}
   {
(1-\Theta)}
\tfrac
{\tau \mu+1}
{\tau \mu }
    \log \tfrac 1 {\epsilon_{\OA}} \, . 
\]
Furthermore, after $T$ iterations with
\[
 T 
    \geq 
\tfrac{1}
   {
(1-\Theta)}
\tfrac
{\tau \mu+
1}
{ \tau \mu }
    \log 
\left(
\tfrac{1}
   {
(1-\Theta)}
\tfrac
{\tau \mu+
1}
{ \tau \mu }
    \tfrac 1 {\epsilon_\gap}
    \right)\, ,
\]
we have the expected duality gap
\[
\Exp[
 \OA(\vc{\alphav}{T})-(-\OB( \wv(\vc{\alphav}{T})))
]\leq \epsilon_\gap \, .
\]
\end{reptheorem}

\begin{proof}
Plug in parameters $\gamma := 1$, $\sigma' := \gamma K = K, \tilde{\mu} = n \mu$ to the results of Theorem \ref{thm:convergenceSmoothCase} and note that for balanced datasets with $g_i^* := \frac{1}{n} g_i^*$ we have $\sigma_{\max} \le \frac{\n}{K}$ (see Remark \ref{rmk:asfwaefwae}). We can further simplify the rate by noting that $\tau = 1$ for the 1-smooth losses (least squares and logistic) given as examples in this work.
\end{proof}

\newpage
\bibliography{bibliography}

\end{document}